\theoremstyle{plain}
\theoremstyle{definition}
\theoremstyle{remark}
\newcommand{\fv}{{\boldsymbol f}}
\newcommand{\Wv}{{\boldsymbol W}}
\newcommand{\Xv}{{\boldsymbol X}}
\newcommand{\thetav}{{\boldsymbol \theta}}
\newcommand{\cov}{\text{cov}}
\newcommand{\EE}{\mathbb{E}}
\newcommand{\bs}[1]{\boldsymbol{#1}}
\newcommand{\BR}{\mathbb{R}}
\newcommand{\PP}{\mathbb{P}}
\newcommand{\ud}{\,\text{d}}
\newcommand{\CX}{\mathcal{X}}
\newcommand{\CY}{\mathcal{Y}}
\newcommand{\CZ}{\mathcal{Z}}
\newcommand{\CE}{\mathcal{E}}
\newcommand{\CF}{\mathcal{F}}
\newcommand{\KL}{\text{KL}}
\newcommand{\JSD}{\text{JSD}}
\newcommand{\BS}{\mathbb{S}}
\newcommand{\CM}{\mathcal{M}}
\DeclareMathOperator{\argmax}{\arg\max}
\newcommand{\CL}{\mathcal{L}}
\renewcommand{\CD}{\mathcal{D}}
\theoremstyle{plain}
\newtheorem{thm}{Theorem}[section] 
\theoremstyle{definition}
\newtheorem{defn}[thm]{Definition} 
\newtheorem{prop}[thm]{Proposition} 
\newtheorem{assumption}[thm]{Assumption}
\newtheorem{col}[thm]{Corollary}
\newcommand{\beq}{\begin{equation}}
\newcommand{\eeq}{\end{equation}}
\newcommand{\beqs}{\begin{eqnarray}}
\newcommand{\eeqs}{\end{eqnarray}}
\newcommand{\barr}{\begin{array}}
\newcommand{\earr}{\end{array}}
\newcommand{\ELBO}{\text{ELBO}}
\newcommand{\CV}{\mathcal{V}}
\newcommand{\Gv}{\bs{G}}
\newcommand{\BX}{\mathbb{X}}
\newcommand{\BY}{\mathbb{Y}}
\newcommand{\infonce}{\texttt{InfoNCE}}
\newcommand{\BA}{\texttt{BA}}
\newcommand{\UBA}{\texttt{UBA}}
\newcommand{\TUBA}{\texttt{TUBA}}
\newcommand{\NWJ}{\texttt{NWJ}}
\newcommand{\DV}{\texttt{DV}}
\newcommand{\MINE}{\texttt{MINE}}
\newcommand{\ANCE}{\texttt{$\alpha$-InfoNCE}}
\newcommand{\FLO}{\texttt{FLO}}
\newcommand{\PMI}{\text{PMI}}
\renewcommand{\JSD}{\texttt{JSD}}
\newcommand{\FLAT}{\texttt{FlatNCE}}
\newcommand{\FDV}{\texttt{FDV}}
\newcommand{\DP}{\text{DP}}
\newcommand{\SimCLR}{\texttt{SimCLR}}
\newcommand{\FCLR}{\texttt{FlatCLR}}
\newcommand{\SIR}{\text{SIR}}
\newcommand{\CA}{\mathcal{A}}
\newcommand{\meta}{\text{meta}}
\newcommand{\adapt}{\text{adapt}}
\newcommand{\metaflo}{\texttt{Meta-FLO}}
\newcommand{\upper}{\text{upper}}
\newcommand{\BQ}{\mathbb{Q}}
\renewcommand{\CD}{\mathcal{D}}
\newcommand{\te}{\text{test}}
\title{Tight Mutual Information Estimation With Contrastive Fenchel-Legendre Optimization}
\author{%
	Qing Guo${}^{1,\dagger}$, Junya Chen${}^2$, Dong Wang${}^2$, Yuewei Wang${}^2$, Xinwei Deng${}^1$\\
	{\bf Lawrence Carin${}^{2,3}$, Fan Li${}^2$, Jing Huang${}^4$, Chenyang Tao${}^{2,4,\dagger}$} \\ ${}^1$Virginia Tech ${}^2$Duke University  ${}^3$KAUST ${}^4$Amazon \\
	\texttt {qguo0701@vt.edu},
	\texttt {chenyang.tao@duke.edu}
}
\begin{document}

	\maketitle

	\begin{abstract}
		Successful applications of InfoNCE (Information Noise-Contrastive Estimation) and its variants have popularized the use of contrastive variational mutual information (MI) estimators in machine learning. While featuring superior stability, these estimators crucially depend on costly large-batch training, and they sacrifice bound tightness for variance reduction. To overcome these limitations, we revisit the mathematics of popular variational MI bounds from the lens of unnormalized statistical modeling and convex optimization. Our investigation yields a new unified theoretical framework encompassing popular variational MI bounds, and leads to a new simple and powerful contrastive MI estimator we name $\FLO$. Theoretically, we show that the $\FLO$ estimator is tight, and it converges under stochastic gradient descent. Empirically, the $\FLO$ estimator overcomes the limitations of its predecessors and learns more efficiently. The utility of $\FLO$ is verified using extensive benchmarks, and we further inspire the community with novel applications in meta-learning. Our presentation underscores the foundational importance of variational MI estimation in data-efficient learning. 
	\end{abstract}

	\vspace{-1.2em}
	\section{Introduction}
	\vspace{-5pt}
	
	Assessing the dependence between pairs of variables is integral to many scientific and engineering endeavors \citep{reshef2011detecting, shannon1948mathematical}. {\it Mutual information} (MI) is a popular metric to quantify generic associations \citep{mackay2003information}, and its empirical estimators have been widely used in applications such as independent component analysis \citep{bach2002kernel}, fair learning \citep{gupta2021controllable}, neuroscience \citep{palmer2015predictive}, Bayesian optimization \citep{kleinegesse2020bayesian}, among others. Notably, the recent advances in deep {\it self-supervised learning} (SSL) heavily rely on nonparametric MI optimization  \citep{tishby2015deep, oord2018representation, he2020momentum, chen2020simple, grill2020bootstrap}. 
	In this study we investigate the likelihood-free variational approximation of MI using only paired samples, and improve the data-efficiency of current machine learning practices. 
	
	MI estimation has been extensively studied \citep{battiti1994using, maes1997multimodality, mackay2003information, paninski2003estimation, pluim2003mutual, torkkola2003feature}. While most classical estimators work reasonably well for low-dimensional cases, they scale poorly to big datasets: 
	na\"ive density-based estimator(s) and $k$-nearest neighbor estimators \citep{kraskov2004estimating, perez2008estimation, gao2015efficient} struggle with high-dimensional inputs, while kernel estimators are slow, memory demanding and sensitive to hyperparameters \citep{gretton2003kernel, gretton2005kernel}. Moreover, these estimators are usually either non-differentiable or need to hold all data in memory. Consequently, they are not well suited for emerging applications where the data representation needs to be differentiably optimized based on small-batch estimation of MI \citep{hjelm2019learning}. Alternatively, one can approach MI estimation through an estimated likelihood ratio \citep{suzuki2008approximating, hjelm2019learning}, but the associated numerical instability has raised concerns \citep{arjovsky2017towards}.

	To scale MI estimation to the growing size and complexity of modern datasets, and to accommodate the need for representation optimization \citep{bengio2013representation}, variational objectives have been widely utilized recently \citep{oord2018representation}. Instead of directly estimating data likelihoods, density ratios, or the corresponding gradients \citep{wen2020mutual}, variational approaches appeal to mathematical inequalities to construct tractable lower or upper bounds of the mutual information \citep{poole2019variational}, facilitated by the use of auxiliary critic functions\footnote{When estimates are sharp, these critic functions usually recover some transformation of the likelihood ratio.}. This practice turns MI estimation into an optimization problem. Prominent examples include the {\it Barber-Agakov} (BA) estimator \citep{agakov2004algorithm},  the {\it Donsker-Varadhan} (DV) estimator \citep{donsker1983asymptotic}, and the {\it Nguyen-Wainwright-Jordan} (NWJ) estimator \citep{nguyen2010estimating}. These variational estimators are closely connected to the variational objectives for likelihood inference \citep{alemi2018fixing}.

	Despite reported successes, these variational estimators have a major limitation: their estimation variance grows exponentially to the ground-truth MI \citep{mcallester2018formal}. This is especially harmful to applications involving deep neural nets, as it largely destabilizes training \citep{song2020understanding}. An effective fix is to leverage multi-sample contrastive estimators, pioneered by the work of \texttt{InfoNCE} \citep{oord2018representation}. However, the massive reduction in the variance comes at a price: the performance of the \texttt{InfoNCE} estimator is upper bounded by $\log K$, where $K$ is the number of {\it negative} samples used \citep{poole2019variational}. For a large MI, $K$ needs to be sufficiently large to allow for an adequate estimate, consequently placing a significant burden on computation and memory. 
	While variants of $\infonce$ have been motivated to achieve more controllable bias and variance tradeoffs \citep{poole2019variational, song2020understanding}, little research has been conducted on the cost-benefit aspect of contrastive learning. 
	
	\begin{wrapfigure}[17]{R}{0.38\textwidth}
		\vspace{-2.em}
		\scalebox{.93}{
			\begin{minipage}{.39\textwidth}
				\begin{figure}[H]
					\begin{center}{
							\hspace{-10pt}\includegraphics[width=1.05\textwidth]{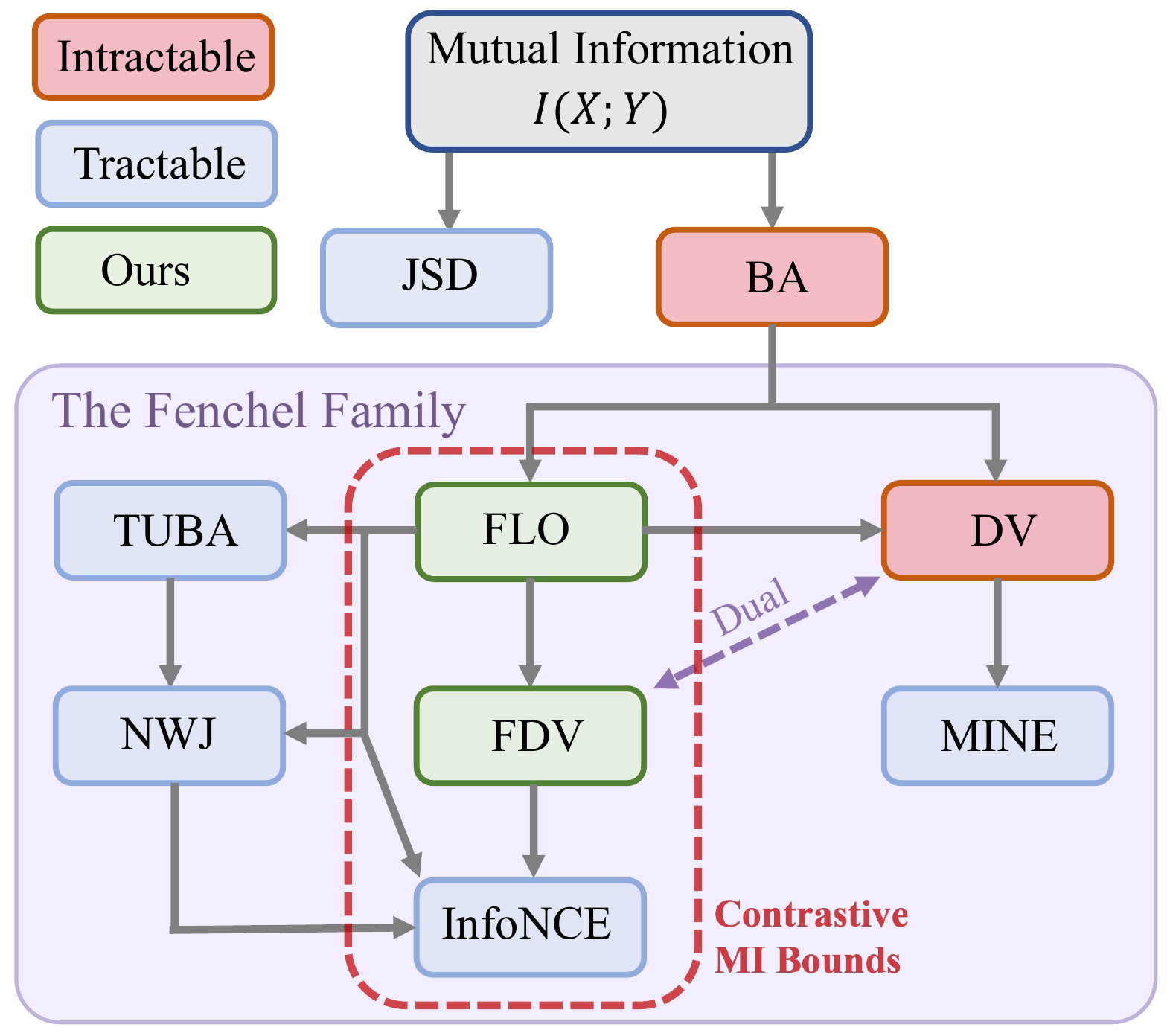}
						}
						\vspace{-1.5em}
						\caption{Schematic of variational lower bounds of mutual information. $\FLO$ provides a novel unified framework to analyze contrastive MI bounds. \label{fig:schematic}}
					\end{center}
					\vspace{-1.5em}
				\end{figure}
			\end{minipage}
		}
	\end{wrapfigure}
	
	A critical insight enabled by $\infonce$ is that mutual information closely connects to contrastive learning \citep{gutmann2010noise, oord2018representation}. Paralleled by the empirical successes of instance discrimination-based self-supervision \citep{mnih2013learning, wu2018unsupervised, chen2020simple, he2020momentum} and multi-view supervision \citep{tian2019contrastive, radford2021learning}, $\infonce$ offers an {\it InfoMax} explanation to why the ability to discriminate naturally paired {\it positive} instances from the randomly paired {\it negative} instances leads to universal performance gains in these applications \citep{linsker1988self, shwartz2017opening, poole2019variational}. Despite these encouraging developments, the big picture of MI optimization and contrastive learning is not yet complete: ($i$) There is an ongoing debate about to what extent MI optimization helps to learn \citep{tschannen2020mutual}; ($ii$) how does the contrastive view reconcile with those non-contrastive MI estimators; crucial for practical applications, ($iii$) are the empirical tradeoffs made by estimators such as $\infonce$ absolutely necessary? And theoretically, ($iv$) formal guarantees on the statistical convergence of popular variational non-parametric MI estimation are missing currently.

	
	In this work we seek to bridge the above gaps by approaching the MI estimation from the novel perspective of energy modeling. While this subject has recently been studied extensively using information-theoretic and variational inequalities, we embrace a new view from the lens of unnormalized statistical modeling. Our main contributions include: 
	\begin{itemize}
		\vspace{-5pt}
		\item Unifying popular variational MI bounds under unnormalized statistical modeling; 
		\item Deriving a simple but powerful novel contrastive variational bound called $\FLO$; 
		\item Providing theoretical justification of the $\FLO$ bound (tightness and convergence);
		\item Demonstrating strong empirical evidence of the superiority of $\FLO$ over its predecessors.
		\item Highlighting the importance of MI in data-efficient learning with novel applications 
		\vspace{-5pt}
	\end{itemize} 
	We contribute in-depth discussion to bridge the gaps between contrastive learning and MI estimation, along with principled practical guidelines informed by theoretical insights. 

	\vspace{-8pt}
	\section{Fenchel-Legendre Optimization for Mutual Information Estimation}
	\label{sec:flo}
	\vspace{-5pt}
	
	\subsection{Preliminaries}
	\vspace{-4pt}
	
	This section briefly reviews the mathematical background needed for our subsequent developments. 
	
	{\bf Unnormalized statistical modeling} defines a rich class of models of general interest. Specifically, we are interested in  problems for which the system is characterized by an energy function $\tilde{p}_{\theta}(x) =  \exp(-\psi_\theta(x))$, where $\theta$ is the system parameters and $\psi_{\theta}(x)$ is known as the {\it potential function}. The goal is to find a solution that is defined by a normalized version of $\tilde{p}_{\theta}(x)$, {\it i.e.}, $\min_{\theta} \left\{ \CL\left( \frac{\tilde{p}_{\theta}}{\int \tilde{p}_{\theta}(x') \ud \mu(x')}\right)\right\}$, 
	where $\CL(\cdot)$ is the loss function, $\mu$ is the base measure on $\CX$ and $Z(\theta) \triangleq \int \tilde{p}_{\theta}(x') \ud \mu(x')$ is called the {\it partition function} for $\tilde{p}_{\theta}(x)$. 
	Problems in the above form arise naturally in statistical physics \citep{reichl2016modern}, Bayesian analysis \citep{berger2013statistical}, and maximal likelihood estimation \citep{tao2019fenchel}. A major difficulty with unnormalized statistical modeling is that the partition function $Z(\theta)$ is generally intractable for complex energy functions \footnote{In the sense that they do not render closed-from expressions.}, and in many applications $Z(\theta)$ is further composed by $\log Z(\theta)$, whose concavity implies any finite sample estimate Monte-Carlo of $Z(\theta)$ will render the loss function biased \citep{rainforth2018nesting, zheng2018robust}. Bypassing the difficulties caused by the intractable partition function is central to unnormalized statistical modeling \citep{geyer1994convergence, neal2001annealed, hinton2002training, hyvarinen2005estimation, gutmann2010noise}.
	
	{\bf Mutual information and unnormalized statistical models.} 
	As a generic score assessing the dependency between two random variables $(X,Y)$, {\it mutual information} is formally defined as the {\it Kullback-Leibler divergence} (KL) between the joint distribution $p(x,y)$ and product of the respective marginals $p(x)p(y)$ \citep{shannon1948mathematical}, {\it i.e.}, $I(X;Y) \triangleq \EE_{p(x,y)}\left[\log \frac{p(x,y)}{p(x)p(y)}\right]$. The integrand $\log \frac{p(x,y)}{p(x)p(y)}$ is often known as the {\it point-wise mutual information} ($\PMI$) in the literature. Mutual information has a few appealing properties: ($i$) it is invariant wrt invertible transformations of $x$ and $y$, and ($ii$) it has the intuitive interpretation of reduced uncertainty of one variable given another variable\footnote{Formally, $I(X;Y) = H(X) - H(X|Y) = H(Y) - H(Y|X)$, where $H(X)$ (resp. $H(X|Y)$) denotes the Shannon entropy (resp. conditional Shannon entropy) of a random variable.}. 
	
	To connect MI to unnormalized statistical modeling, we consider the classical {\it Barber-Agakov} ($\BA$) estimator of MI \citep{barber2003information}.  To lower bound MI, $\BA$ introduces a variational approximation $q(y|x)$ for the posterior $p(y|x)$, and by rearranging the terms we obtain an inequality
	\beqs
	I(X;Y) & = & \EE_{p(x,y)}\left[ \log \frac{p(y|x)}{p(y)} \right] = \EE_{p(x,y)}\left[ \log \frac{q(y|x)}{p(y)} \right] + \EE_{p(x)}[\KL(p(y|x) \parallel q(y|x))] \nonumber\\
	& \geq & \EE_{p(x,y)}\left[ \log \frac{q(y|x)}{p(y)} \right]  \triangleq I_{\BA}(X;Y|q). \label{eq:ba}
	\eeqs
	Here we have used notation $I_{\BA}(X;Y|q)$ to highlight the dependence on $q(y|x)$, and when $q(y|x)=p(y|x)$ this bound is sharp. Unfortunately, this na\"ive $\BA$ bound is not useful for sample-based MI estimation, as we do not know the ground-truth $p(y)$. But we can bypass this difficulty by setting $q_{\theta}(y|x) = \frac{p(y)}{Z_{\theta}(x)}e^{g_{\theta}(x,y)}$, where we call $e^{g_{\theta}(x,y)}$ the {\it tilting function} and recognize $Z_{\theta}(x) = \EE_{p(y)}[e^{g_{\theta}(x,y)}]$ as the associated partition function. Substituting this $q_{\theta}(x|y)$ into (\ref{eq:ba}) gives the following {\it unnormalized $\BA$} bound ($\UBA$) that pertains to unnormalized statistical modeling \citep{poole2019variational}
	\beq
	\label{eq:uba}
	I_{\UBA}(X;Y|g_{\theta}) \triangleq \EE_{p(x,y)}[g_{\theta}(x,y) - \log Z_{\theta}(x)] = \EE_{p(x)}\left[ \EE_{p(y|x)}\left[  \log \frac{e^{g_{\theta}(x,y)}}{Z_{\theta}(x)}\right]\right].
	\eeq
	While this $\UBA$ bound remains intractable, now with $Z_{\theta}(x)$ instead of $p(y)$ we can apply different techniques for empirical estimates of $Z_{\theta}(x)$ to render a tractable surrogate target. This has led to various popular MI bounds listed in Table \ref{tab:vb} (see Appendix \ref{sec:infonce_appendix} for derivations). 
	
	{\bf InfoNCE and noise contrastive estimation.} \texttt{InfoNCE} is a multi-sample mutual information estimator proposed in \citep{oord2018representation}, built on the idea of {\it noise contrastive estimation} (NCE) \citep{gutmann2010noise}. 
	NCE learns statistical properties of a target distribution by comparing the {\it positive} samples from the target distribution to the ``{\it negative}'' samples from a carefully crafted noise distribution, and this technique is also known as {\it negative sampling} in some contexts \citep{mnih2013learning, grover2016node2vec}. 
	The $\infonce$ estimator implements this contrastive estimation idea via using the na\"ive empirical estimate of $Z_{\theta}(x)$ in $\UBA$\footnote{This estimator is technically equivalent to the original definition due to the symmetry of $K$ samples.}, {\it i.e.}
	\beq
	\label{eq:infonce}
	I_{\infonce}^K(X;Y|g_{\theta})\triangleq \EE_{p^K(x,y)}\left[\log \frac{e^{g_{\theta}(x_1,y_1)}}{\frac{1}{K}\sum_{j} e^{g_{\theta}(x_{1}, y_{j})}}\right], I_{\infonce}^K(X;Y) \triangleq \max_{g_{\theta}\in \CF} \{I_{\infonce}^K(X;Y|g_{\theta})\}, 
	\eeq
		where $g_{\theta}$ is known as the {\it critic} in the nomenclature of contrastive learning,  and we have used $p^K(x,y)$ to denote $K$ independent draws from the joint density $p(x,y)$, and $\{(x_k, y_k)\}_{k=1}^K$ for each pair of samples. Here the positive and negative samples are respectively drawn from the joint $p(x,y)$ and product of marginals $p(x)p(y)$. Intuitively, $\infonce$ tries to accurately classify the positive samples when they are mixed with negative samples, and the Proposition below formally characterizes $\infonce$'s statistical properties as a MI estimator. 
		\begin{prop}[\citep{poole2019variational}]
			\label{thm:infonce}
			\texttt{InfoNCE} is an asymptotically tight mutual information lower bound, {\it i.e.} $I_{\infonce}^K(X;Y|g_{\theta}) \leq I(X;Y)$, $\lim_{K\rightarrow\infty} I_{\infonce}^K(X;Y) \rightarrow I(X;Y).$
		\end{prop}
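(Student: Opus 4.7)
My plan is to treat the two claims separately. For the inequality $I_{\infonce}^K(X;Y|g_\theta) \leq I(X;Y)$, I would invoke the equivalence between InfoNCE and noise-contrastive classification. Consider the auxiliary symmetric model where a latent index $J$ is drawn uniformly from $\{1,\dots,K\}$ and determines which $y$-sample is the positive: $y_J \sim p(y|x_1)$, while $y_k \sim p(y)$ for $k \neq J$. The Bayes-optimal classification posterior over $J$ given $(x_1, y_{1:K})$ is $p^\star(j) \propto p(y_j|x_1)/p(y_j)$, while the critic induces the variational proposal $\hat q(j) \propto e^{g_\theta(x_1, y_j)}$. A direct algebraic manipulation, exploiting the symmetry of the index labeling, shows that $I_{\infonce}^K(X;Y|g_\theta) = \log K - \ce(g_\theta)$, where $\ce(g_\theta) = \EE[-\log \hat q(J\mid x_1, y_{1:K})]$ is the critic's cross-entropy on this classification task. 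The analogous identity for $p^\star$ expresses the Bayes-optimal cross-entropy in terms of $I(X;Y)$, and Gibbs' inequality $\ce(g_\theta) \geq \ce(p^\star)$ together with a short calculation yields the claimed bound.

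For the asymptotic tightness, my plan is to plug in the Bayes-optimal critic $g^\star(x,y) = \log[p(y|x)/p(y)]$ and pass to the limit. Direct substitution collapses the expression to
\begin{equation*}
I_{\infonce}^K(X;Y|g^\star) \,=\, I(X;Y) - \EE_{p^K}\!\left[\log \tfrac{1}{K}\sum_{j=1}^K p(y_j|x_1)/p(y_j)\right].
\end{equation*}
Conditioned on $x_1$, the summands $p(y_j|x_1)/p(y_j)$ for $j \geq 2$ are iid with unit mean under $p(y)$, so the strong law of large numbers drives the empirical average to $1$ almost surely as $K \to \infty$, while the $j=1$ contribution is killed by the $1/K$ factor. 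A dominated-convergence step exchanges limit and expectation to give $I_{\infonce}^K(X;Y|g^\star) \to I(X;Y)$, and since $I_{\infonce}^K(X;Y)$ is defined as the supremum over $\CF$, the same limit transfers provided $\CF$ contains (or densely approximates) $g^\star$.

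The principal obstacle I foresee is the interchange of limit and expectation in the tightness step, which requires an integrability hypothesis on the log-density ratio $\PMI(x,y)$; under the mild boundedness or exponential-moment conditions standard in the MI-estimation literature this is easily handled. A secondary subtlety in the first part is that the positive sample $y_1$ appears both in the numerator and the denominator of the InfoNCE log-ratio, so a one-line Jensen application to the $\UBA$ bound with a biased Monte Carlo partition-function estimator runs in the wrong direction; the classification and Gibbs viewpoint above is designed to sidestep this correlation issue.
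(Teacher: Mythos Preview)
Your proposal is correct and takes a genuinely different route from the paper. The paper proves the lower-bound part by embedding $\infonce$ as a special case of the multi-sample $\NWJ$ bound: it chooses the multi-sample critic $\tilde f(x_1,y_{1:K})=1+\log\bigl(e^{g(x_1,y_1)}/m(x_1,y_{1:K})\bigr)$, then uses the permutation symmetry of the negatives to show that the $\NWJ$ normalizer $\EE_{p(x')p^K(y')}[e^{g(x_1',y_1')}/m(x_1',y'_{1:K})]$ collapses to $1$, which recovers $I_{\infonce}^K$ exactly as an $\NWJ$ instance. Your approach instead uses the latent-index classification model and Gibbs' inequality to get $I_{\infonce}^K(g_\theta)\le I(J;X_1,Y_{1:K})$, after which the remaining step $I(J;X_1,Y_{1:K})\le I(X;Y)$ (equivalently $\EE[\log\tfrac1K\sum_j p(y_j|x_1)/p(y_j)]\ge 0$) is the ``short calculation''; it goes through cleanly either by the chain-rule identity $I(J;Y_{1:K}|X_1)=I(X;Y)-I(Y_{1:K};X_1)$ (using $I(J;Y_{1:K})=0$) or by a change of measure to $p(x)\prod_k p(y_k)$ plus Jensen on $s\mapsto s\log s$. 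The paper's route buys a unified placement of $\infonce$ within the $\BA\!\to\!\UBA\!\to\!\NWJ$ hierarchy that drives the rest of the article, while your route is more self-contained, avoids the $\NWJ$ detour, and makes the contrastive-classification intuition explicit. For asymptotic tightness both proofs coincide: plug in the optimal critic and invoke the law of large numbers, with your proposal being slightly more careful about the limit--expectation interchange.
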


		\begin{figure}
			\begin{minipage}{.63\textwidth}
				{\bf Fenchel-Legendre duality.} Our key idea is to exploit the convex duality for MI estimation. Let $f(t)$ be a proper convex, lower-semicontinuous function; then its {convex conjugate} function is defined as 
				$f^*(v) \triangleq \sup_{t\in \CD(f)}\{ t v - f(t) \}$, where $\CD(f)$ is the domain of function $f$ \citep{hiriart2012fundamentals}. We call $f^*(v)$ the \textit{Fenchel conjugate} of $f(t)$, which is also known as the {\it Legendre transform} in physics. 
				The Fenchel conjugate pair $(f, f^*)$ are dual to each other, in the sense that $f^{**} = f$, \textit{i.e.}, 
				$
				f(t) = \sup_{v\in \CD(f^*)}\{ v t - f^*(v) \}. 
				$
				For $f(t) = -\log(t)$ and its Fenchel conjugate $f^*(v) = -1-\log(-v)$, we have inequality
				\beq
				\label{eq:logt}
				\begin{array}{c}
					-\log(t) \geq -u - e^{-u} t+1, \quad \text{for $u\in\BR$}
				\end{array}
				\eeq
				with the equality holds when $u=\log(t)$.
			\end{minipage}
			\hspace{3pt}
			\begin{minipage}{.35\textwidth}
				\vspace{-.5em}
				\begin{center}{
						\includegraphics[width=1.0\textwidth]{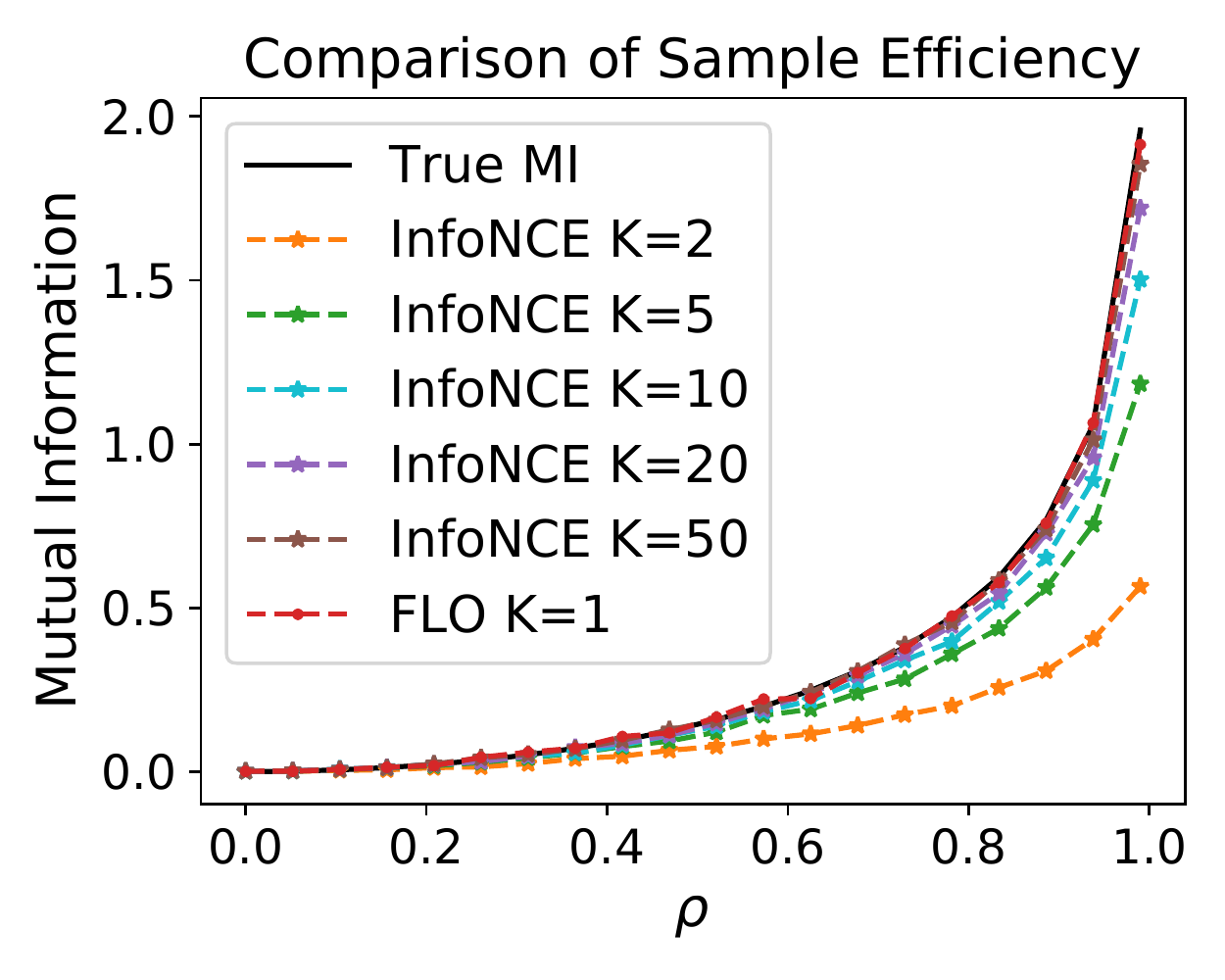}
					}
					\vspace{-2em}
					\caption{$K$-sample $\infonce$ and single-sample $\FLO$. Note $\FLO$ is tight regardless of sample-size. \label{fig:cmp_1d}}
				\end{center}
			\end{minipage}
			\vspace{-1.5em}
		\end{figure}


	\vspace{-8pt}
	\subsection{Fenchel-Legendre Optimization for tight mutual information estimation}
	\vspace{-5pt}
	\label{sec:flo}

		With the above mathematical tools, we are ready to present the main result of this paper: a tight, data-efficient variational MI lower bound that can be efficiently implemented. 
		
		
		
		{\bf Lower bounding MI with Fenchel-Legendre Optimization.} Our key insight is that MI estimation is essentially an unnormalized statistical model, which can be efficiently handled by the Fenchel-Legendre transform technique. Take the integrand from $\UBA$ in (\ref{eq:uba}) and we can rewrite it as
		\beq
		\log \frac{\exp(g_{\theta}(x,y))}{Z_{\theta}(x)} = - \log \left\{ \EE_{p(y')}[\exp(g(x,y') - g(x,y))] \right\}, 
			\label{eq:log_integrand}
			\eeq
			where $p(y')$ is the same probability density as $p(y)$ ({\it i.e.}, $Y'$ is an independent copy of $Y$). Now let us use the Fenchel inequality of $-\log(t)$ from (\ref{eq:logt}), plugging it into the above equation and then we have
			\beq
			\log \frac{\exp(g_{\theta}(x,y))}{Z_{\theta}(x)} \geq \left\{-u-e^{-u}\EE_{p(y')}[\exp(g(x,y') - g(x,y))]\right\}+1.
			\eeq
			for all $u \in \BR$. This implies for any function $u_{\phi}(x,y): \CX \times \CY \rightarrow \BR$, the following inequality holds 
			\beq
			\log \frac{\exp(g_{\theta}(x,y))}{Z_{\theta}(x)} \geq - \{ u_{\phi}(x,y)+e^{-u_{\phi}(x,y)} \EE_{p(y')}[\exp(g(x,y') - g(x,y))] \}+1.
			\label{eq:flo_local}
			\eeq
			By putting (\ref{eq:flo_local}) back to (\ref{eq:uba}), we obtain our new Fenchel-Legendre Optimization ($\FLO$) MI lower bound
			\beq
			I_{\FLO}(X;Y|g_{\theta}, u_{\phi}) \triangleq \EE_{p(x,y)}\left[ - \{ u_{\phi}(x,y)+e^{-u_{\phi}(x,y)} \EE_{p(y')}[e^{g_{\theta}(x,y') - g_{\theta}(x,y)}] \} \right] + 1,
			\label{eq:flo_defn}
			\eeq
			and concludes the proof for the following Proposition. 
			\begin{prop}
				\label{thm:flo}
				$I_{\FLO}(X;Y|g_{\theta}, u_{\phi}) \leq I_{\UBA}(X;Y|g_{\theta}) \leq I(X;Y)$.
			\end{prop}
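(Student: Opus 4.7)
The proof splits naturally into the two inequalities, both of which are essentially already laid out in the derivation preceding the proposition; my plan is to reassemble those steps cleanly and point out the (minor) technicality for applying the pointwise Fenchel inequality under an expectation.

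For the right inequality $I_{\UBA}(X;Y|g_\theta) \le I(X;Y)$, I would appeal to the original Barber--Agakov bound (\ref{eq:ba}), which holds for \emph{any} conditional density $q(y|x)$. The plan is to instantiate it with the particular self-normalizing choice
\[
q_\theta(y|x) \;=\; \frac{p(y)}{Z_\theta(x)}\, e^{g_\theta(x,y)}, \qquad Z_\theta(x) = \EE_{p(y)}[e^{g_\theta(x,y)}],
\]
and verify that $q_\theta(\cdot|x)$ is a valid probability density on $\CY$ for each $x$ (this is exactly what the partition function $Z_\theta(x)$ enforces, and is also where one needs the mild regularity assumption $Z_\theta(x)<\infty$ almost surely under $p(x)$). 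Substituting $q_\theta$ into (\ref{eq:ba}) causes the $p(y)$ factor to cancel, leaving
\[
I(X;Y) \;\ge\; \EE_{p(x,y)}\!\left[\log \frac{q_\theta(y|x)}{p(y)}\right] \;=\; \EE_{p(x,y)}\bigl[g_\theta(x,y)-\log Z_\theta(x)\bigr] \;=\; I_{\UBA}(X;Y|g_\theta),
\]
which is exactly the claimed bound.

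For the left inequality $I_{\FLO}(X;Y|g_\theta, u_\phi) \le I_{\UBA}(X;Y|g_\theta)$, the plan is to apply the Fenchel inequality (\ref{eq:logt}) pointwise to the integrand of $\UBA$. First I would rewrite the integrand in the form (\ref{eq:log_integrand}), namely
\[
\log\frac{e^{g_\theta(x,y)}}{Z_\theta(x)} \;=\; -\log\bigl\{ \EE_{p(y')}[e^{g_\theta(x,y')-g_\theta(x,y)}] \bigr\},
\]
so that the inner quantity plays the role of $t>0$ in (\ref{eq:logt}). Then, for each fixed $(x,y)$, I would instantiate the free parameter $u$ in (\ref{eq:logt}) by $u_\phi(x,y)$; this yields the pointwise lower bound (\ref{eq:flo_local}). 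Taking the expectation $\EE_{p(x,y)}[\cdot]$ of both sides and recognizing the right-hand side as precisely $I_{\FLO}(X;Y|g_\theta, u_\phi)$ in (\ref{eq:flo_defn}) finishes this inequality.

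The only genuine step to be careful about, and what I would flag as the main (minor) obstacle, is justifying the passage from the pointwise Fenchel inequality to the integrated one: this requires that $u_\phi$ be measurable and that both sides be integrable under $p(x,y)$, which is standard but worth noting as regularity hypotheses on $(g_\theta, u_\phi)$. Everything else is algebra contained in equations (\ref{eq:ba})--(\ref{eq:flo_defn}). I would also remark, though it is not needed for the statement, that equality in the Fenchel step is attained at $u_\phi(x,y)=\log \EE_{p(y')}[e^{g_\theta(x,y')-g_\theta(x,y)}] = -\log(e^{g_\theta(x,y)}/Z_\theta(x))$, which will later drive the tightness analysis of $\FLO$.
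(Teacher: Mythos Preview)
Your proposal is correct and follows essentially the same route as the paper: the paper's own proof of this proposition is precisely the inline derivation in equations (\ref{eq:ba})--(\ref{eq:flo_defn}), applying the Fenchel inequality (\ref{eq:logt}) pointwise to the $\UBA$ integrand and then taking expectations, together with the $\BA$/$\UBA$ inequality for the second bound. Your additional remarks on measurability/integrability and the equality case are welcome refinements but not part of the paper's argument.
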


			In practice, $\FLO$ can be estimated with the following na\"ive empirical $K$-sample estimator
			\beq
			\hat{I}_{\FLO}^K(X;Y|g_{\theta}, u_{\phi}) \triangleq - \left\{ u_{\phi}(x_i, y_i)+e^{-u_{\phi}(x_i, y_i)}  \frac{1}{K-1}\sum_{j\neq i} e^{g_{\theta}(x_i, y_j) - g_{\theta}(x_i, y_i)} \right\} + 1.
			\eeq
			Since the summation in $\hat{I}_{\FLO}^K$ is not encapsulated by a convex $\log$ transformation, $I_{\FLO}^K \triangleq \EE_{p^K}[\hat{I}_{\FLO}^K]$ is an unbiased estimator for $I_{\FLO}(X;Y|g_{\theta}, u_{\phi})$ independent of the batch size $K$ (see Figure \ref{fig:cmp_1d}).

		
		{\bf Why is the $\FLO$ bound more appealing?} At first sight, it may appear counter-intuitive that $I_{\FLO}$ is a better MI bound compared to prior arts such as $\NWJ$ or $\infonce$: it seems to be more complicated as an extra variational function $u_{\phi}(x,y)$ has been introduced. To answer this question, we next explain the statistical meaning of the newly introduced $u_{\phi}(x,y)$, and establish some important statistical properties of $\FLO$ that makes it more favorable: that $I_{\FLO}$ is tight, meaning the ground-truth MI can be recovered for some specific choice of $g_{\theta}(x, y)$ and $u_{\phi}(x, y)$; and that $I_{\FLO}^K$ for any batch size $K$ is effectively optimizing $\infonce$ with an infinite batch size. And in Sec \ref{sec:flo_convergence}, we further justify $\FLO$'s advantages from optimization perspectives. 
		
		Given the close connection between $\FLO$ and $\UBA$, we first recall $\UBA$'s optimal critic that gives the tight MI estimate is $g^*(x,y) = \log p(x|y) + c(x)$, where this $c(x)$ can be any function of $x$ \citep{ma2018noise}. This $g^*(x,y)$ is not directly meaningful in a statistical sense, however, by integrating out $y'$, we have
		\beq
		\EE_{p(y')}\left[e^{g^*(x,y')-g^*(x,y)}\right] = \EE_{p(y')}\left[ \frac{p(x|y')}{p(x|y)} \right] = \frac{p(x)}{p(x|y)} = \frac{p(x)p(y)}{p(x,y)}, 
		\eeq
		which is the likelihood ratio between the marginals and joint. On the other hand, based on the Fenchel-Legendre inequality (\ref{eq:logt}), we know for fixed $g(x,y)$ our $\FLO$ bound in (\ref{eq:flo_defn}) can be maximized with $u_g(x,y) = \log \EE_{p(y')}\left[e^{g(x,y')-g(x,y)}\right]$. Putting these all together we have $u_{g^*}(x,y) = - \log \frac{p(x,y)}{p(x)p(y)}. $
		This shows the $u_{\phi}(x,y)$ introduced in $\FLO$ actually tries to recover the negative $\PMI$. Comparing to the competing MI bounds that only optimizes for $g_{\theta}$, eliminating the drift term $c(x)$ reveals $\FLO$ enjoys the appealing {\it self-normalizing} property \citep{gutmann2010noise} that helps stabilize training. Plugging $(g^*, u_{g^*})$ into (\ref{eq:flo_defn}), we readily see $I_{\FLO}(X;Y|u_{g^*}, g^*) = I(X;Y)$, proving $\FLO$ is a tight MI bound.
		
		\begin{prop}
			\label{thm:fince}
			The $\FLO$ estimator is tight, the eqaulity holds when $g(x,y) = \log p(x|y) + c(x)$ for arbitrary function $c(x)$ and $u(x,y)=-\log \frac{p(x,y)}{p(x)p(y)}$.
		\end{prop}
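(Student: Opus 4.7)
The plan is to verify the equality case directly by plugging the proposed $(g^*, u_{g^*})$ into the definition \eqref{eq:flo_defn} of $I_{\FLO}$, leveraging two facts already established in the text: (i) the Fenchel-Legendre inequality \eqref{eq:logt} is tight exactly when $u = \log t$, and (ii) the $\UBA$ bound is tight when $g^*(x,y) = \log p(x \mid y) + c(x)$.

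The first step I would carry out is a short algebraic calculation showing that the inner Monte-Carlo ratio is invariant to the drift term $c(x)$: since $c(x)$ appears in both $g^*(x,y')$ and $g^*(x,y)$, it cancels in the difference, and the displayed identity
\begin{equation*}
\EE_{p(y')}\!\left[e^{g^*(x,y')-g^*(x,y)}\right] = \EE_{p(y')}\!\left[\frac{p(x\mid y')}{p(x\mid y)}\right] = \frac{p(x)}{p(x\mid y)} = \frac{p(x)p(y)}{p(x,y)}
\end{equation*}
holds for every admissible $c(x)$. This already identifies the inner quantity as the inverse exponentiated $\PMI$.

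Next I would choose $u_{\phi}$ pointwise to saturate \eqref{eq:logt}. With $t = \EE_{p(y')}[e^{g^*(x,y')-g^*(x,y)}]$, the Fenchel equality condition $u = \log t$ forces $u_{g^*}(x,y) = \log \frac{p(x)p(y)}{p(x,y)} = -\PMI(x,y)$, which is exactly the claimed choice. Substituting both selections into \eqref{eq:flo_defn}, each integrand collapses to
\begin{equation*}
-u_{g^*}(x,y) - e^{-u_{g^*}(x,y)} \cdot e^{u_{g^*}(x,y)} + 1 = -u_{g^*}(x,y) = \log \frac{p(x,y)}{p(x)p(y)},
\end{equation*}
and taking the expectation under $p(x,y)$ yields $I(X;Y)$, which establishes tightness.

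Since this proposition is essentially a verification of the conditions under which the two inequalities chained in Proposition~\ref{thm:flo} both become equalities, I do not expect any serious obstacle; the only point that requires a little care is checking that the cancellation of $c(x)$ holds uniformly in $(x,y)$ so that the family of optimizers is genuinely parametrized by an arbitrary function of $x$, rather than only by constants.
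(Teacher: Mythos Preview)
Your proposal is correct and follows essentially the same route as the paper: compute the inner expectation $\EE_{p(y')}[e^{g^*(x,y')-g^*(x,y)}]=p(x)p(y)/p(x,y)$ (noting the cancellation of $c(x)$), saturate the Fenchel inequality \eqref{eq:logt} by taking $u_{g^*}=\log$ of that quantity, and then plug $(g^*,u_{g^*})$ into \eqref{eq:flo_defn} to recover $I(X;Y)$. The only cosmetic difference is that you spell out the collapse of the integrand explicitly, whereas the paper simply asserts it after establishing the two equality conditions.
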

		
		\begin{col}
			\label{thm:fince_mi}
			Let $(g^*, u_{g^*})$ be the maximizers for (\ref{eq:flo_defn}), then $I(X;Y) = \EE_{p(x,y)}[-u_{g^*}(x,y)]$.
		\end{col}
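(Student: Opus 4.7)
The plan is to derive the identity as a short consequence of Proposition \ref{thm:fince} combined with the equality condition in the Fenchel-Legendre inequality (\ref{eq:logt}). First I would recall that at the maximizers, the text already identifies $u_{g^*}(x,y) = -\log \frac{p(x,y)}{p(x)p(y)}$, which is exactly $-\PMI(x,y)$; taking expectations under $p(x,y)$ then yields $\EE_{p(x,y)}[-u_{g^*}(x,y)] = \EE_{p(x,y)}\!\left[\log \tfrac{p(x,y)}{p(x)p(y)}\right] = I(X;Y)$ by the very definition of mutual information. So the corollary is essentially a reading-off of the optimal $u$.

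To make this self-contained, I would also verify it by plugging the optimum directly into the definition (\ref{eq:flo_defn}). At the maximizers, the Fenchel inequality (\ref{eq:logt}) is saturated, which is equivalent to the condition
\begin{equation*}
e^{-u_{g^*}(x,y)}\,\EE_{p(y')}\!\left[e^{g^*(x,y')-g^*(x,y)}\right] = 1 \quad \text{for all } (x,y).
\end{equation*}
Substituting this into (\ref{eq:flo_defn}) collapses the inner bracket to $u_{g^*}(x,y) + 1$, so
\begin{equation*}
I_{\FLO}(X;Y\mid g^*,u_{g^*}) = \EE_{p(x,y)}[-u_{g^*}(x,y) - 1] + 1 = \EE_{p(x,y)}[-u_{g^*}(x,y)].
\end{equation*}
Combining this with the tightness claim of Proposition \ref{thm:fince}, namely $I_{\FLO}(X;Y\mid g^*, u_{g^*}) = I(X;Y)$, immediately gives the desired identity.

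There is no real obstacle here: the only subtlety is bookkeeping the equality case of the Fenchel transform, ensuring that the $-e^{-u}\EE[\cdots]$ term contributes exactly $-1$ so that the additive $+1$ cancels. I would keep the argument to roughly a half page, emphasizing that the corollary provides a practical read-out: once $\FLO$ is optimized, the learned $-u_\phi$ directly estimates the $\PMI$ pointwise, and its empirical average recovers $I(X;Y)$ without needing to evaluate $g_\theta$ at test time.
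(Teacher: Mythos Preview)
Your proposal is correct and follows essentially the same approach as the paper: the paper establishes in the paragraph preceding Proposition~\ref{thm:fince} that the optimal $u_{g^*}(x,y)=-\log\frac{p(x,y)}{p(x)p(y)}$ and that this makes $I_{\FLO}$ tight, so the corollary is indeed a direct read-off of $-u_{g^*}$ as the PMI. Your additional verification via the saturated Fenchel inequality is a clean sanity check but not needed beyond what the paper already argues.
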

		
		

		Finally, we give a simple asymptotic argument showing $\FLO$ essentially optimizes $\infonce$ with an infinite batch size. In virtue of the law of large numbers, we have the denominator in $\infonce$ converging to $\lim_{K\rightarrow \infty} \frac{1}{K} \sum_{j=1}^K e^{g_{\theta}(x_i, y_j)} \rightarrow \EE_{p(y')}[e^{g_{\theta}(x_i, y')}] = Z_{\theta}(x_i)$, and consequently it recovers the $\UBA$ bound. Since $\FLO$ is derived from $\UBA$, we can view $\FLO$ as using the optimization of $u_{\phi}(x, y)$ to amortize the difficulty of evaluating infinite number of $e^{g_{\theta}(x_i, y_j)}$ with $\infonce$.

		{\bf Efficient implementations of $\FLO$.} A lingering concern is that the newly introduced $u_{\phi}(x,y)$ can incur extra computation overhead. This is not true, as we can maximally encourage parameter sharing by jointly model $u_{\phi}(x,y)$ and $g_{\theta}(x,y)$ with a single neural network $f_{\Psi}(x,y):\CX \times \CY \rightarrow \BR^2$ with two output heads, {\it i.e.}, $[u_i, g_i] = f_{\Psi}(x_i, y_i)$. Consequently, while $\FLO$ adopts a dual critics design, it does not actually invoke extra modeling cost compared to its single-critic counterparts ({\it e.g.}, $\infonce$). Experiments show this shared parameterization in fact promotes synergies and speeds up learning (see our ablation studies in Appendix). 
		
		To further enhance the computation efficiency, we consider a massively parallelized {\it bi-linear} critic design that uses all in-batch samples as negatives. Let $g_\theta(x,y) = \tau \cdot \langle h_{\theta}(x), \tilde{h}(y) \rangle$, where $h: \CX \rightarrow \BS^p$ and $\tilde{h}: \CY \rightarrow \BS^p$ are respectively encoders that map data to unit sphere $\BS^p$ embedded in $\BR^{p+1}$, $\langle a, b \rangle = a^T b$ is the inner product operation, and $\tau>0$ is the inverse temperature parameter. Thus the evaluation of the {\it Gram} matrix $G = \tau \cdot h(\BX)^T \tilde{h}(\BY)$, where $[\BX, \BY] \in \BR^{K \times (d_x+d_y)}$ is a mini-batch of $K$-paired samples and $g_{\theta}(x_i, y_j) = G_{ij}$, can be parallelized via matrix multiplication. In this setup, the diagonal terms of $G$ are the positive scores while the off-diagonal terms negative scores. A similar strategy has been widely employed in the contrastive representation learning literature ({\it e.g.}, \cite{chen2020simple})\footnote{As an important note to the community, most open source implementations for the bilinear contrastive loss have mechanically implemented $\frac{1}{T} \langle \cdot, \cdot \rangle$ following the practice from pioneering contrastive learning studies, which is numerically unstable compared to our parameterization $\tau \langle \cdot, \cdot \rangle$ proposed here.}. We can simply model the PMI critic as $u(x,y) = \texttt{MLP}(h(x), \tilde{h}(y))$, whose computation cost is almost neglectable in practice, where feature encoders $h,\tilde{h}$ dominate computing.

		\begin{table*}[t!]
			\begin{center}
				\caption{Comparison of popular variational MI estimators. \label{tab:vb} Here $g(x,y), u(x,y)$ and $u(x)$ are variational functions to be optimized, $\sigma(u) = \frac{1}{1+\exp(-u)}$ is the Sigmoid function, $\CE[f(u), \eta]$ denotes exponential average of function $f(u)$ with decay parameter $\eta \in (0,1)$,  and $\alpha \in [0,1]$ is the balancing parameter used by $\ANCE$ trading off bias and variance between $\infonce$ and $\TUBA$. 
					we use $(x_i, y_i)$ to denote positive samples from the joint density $p(x,y)$, and $(x_i, y_j)$ or $(x_{k}', y_{k}')$ to denote negative samples drawn from the product of marginal $p(x)p(y)$.
					In context, $y_\oplus$ and $y_\ominus$ have the intuitive interpretation of positive and negative samples. We exclude variational upper bounds here because their computations typically involve the explicit knowledge of conditional likelihoods. }
				\resizebox{\textwidth}{!}{%
					\begin{tabular}{cccccccc}
						\toprule
						Name & Objective & Bias & Var. & Converge \\
						\midrule
						\multicolumn{5}{c}{$(x_i, y_i) \stackrel{iid}{\sim} p(x, y), \,\, (x_k', y_k') \stackrel{iid}{\sim} p(x)p(y), \,\, m_{\alpha,u}(x, y_{1:K}) \triangleq \alpha \frac{1}{K}\left\{\sum_{k=1}^{K} \exp(g(x, y_k))\right\} + (1-\alpha) \exp(u(x))$} \\
						[4pt]
						\texttt{DV} \citep{donsker1983asymptotic} & $g(x_i, y_i) - \log (\sum_{k=1}^{K}\exp(g(x_k', y_k'))/K)$ & high & high & no \\
						[4pt]
						\texttt{MINE} \citep{belghazi2018mutual} & $g(x_i, y_i) - \log(\CE[\exp(g(x_i, y_j)), \eta])$ & low & high & no\\
						[4pt]
						\texttt{NWJ} \citep{nguyen2010estimating} & $g(x_i, y_i) - \exp(g(x_i, y_j)-1)$ & low & high & no \\
						[4pt]
						\texttt{JSD} \citep{hjelm2019learning} & $g^*(x_i, y_i) - \exp(g^*(x_i, y_j)-1)$ & low & high & no \\
						[4pt]
						& $g^* \xleftarrow{\argmax} \{ \log \sigma(g(x_i, y_i)) + \log \sigma(-g(x_i ,y_j)) \}$  \\
						[4pt]
						\texttt{TUBA} \citep{poole2019variational} & $g(x_i, y_i)+u(x_i)+1 - \exp(g(x_i ,y_j)-u(x_i))$ & low & high & no \\
						[4pt]
						\texttt{InfoNCE} \citep{oord2018representation} & $g(x_i,y_i) - \log(\sum_j \exp(g(x_i, y_j))/K)$ & high & low & no\\
						[4pt]
						$\ANCE$ \citep{poole2019variational} & \multicolumn{3}{c}{$g(x_i, y_i) - g(x_i, y_j) - \log (m_{\alpha, u}(x,  y_{1:K}))+\log (m_{\alpha, u}(x_k', y_k'))$} & no \\
						[2pt]
						\multicolumn{4}{c}{$\ANCE$ interpolates between low-bias high-var ($\alpha\rightarrow 1$, $\NWJ$) to high-bias low-var ($\alpha\rightarrow 0$, $\infonce$)} \\
						[5pt]
						\texttt{FLO} (ours) & $-u(x_i ,y_i) - \exp(-u(x_i, y_i)+g(x_i, y_j)-g(x_i, y_i))$ & {\bf \textcolor{teal}{low}} & {\bf \textcolor{teal}{moderate}} & {\bf \textcolor{teal}{yes}}\\
						\bottomrule
					\end{tabular}
				}
				\vspace{-1.5em}
			\end{center}
		\end{table*}

		\vspace{-5pt}
		\subsection{Connections to the existing MI bounds} 
		\vspace{-5pt}
		
		Due to space limitations, we elaborate the connections to the existing MI bounds here, and have relegated an extended related work discussion in a broader context to the Appendix. 
		
		{\bf From $\log$-partition approximation to MI bounds.} To embrace a more holistic understanding, we list popular variational MI bounds together with our $\FLO$ in Table \ref{tab:vb}, and visualize their connections in Figure \ref{fig:schematic}. With the exception of $\JSD$, these bounds can be viewed from the perspective of unnormalized statistical modeling, as they differ in how the $\log$ partition function $\log Z(x)$ is estimated. We broadly categorize these estimators into two families: the $\log$-family ($\DV$, $\MINE$, $\infonce$) and the exponential-family ($\NWJ$, $\TUBA$, $\FLO$). In the $\log$-family, $\DV$ and $\infonce$ are multi-sample estimators that leverage direct Monte-Carlo estimates $\hat{Z}$ for $\log Z(x)$, and these two differ in whether to include the positive sample in the denominator or not. To avoid the excessive in-batch computation of the normalizer and the associated memory drain, $\MINE$ further employed an {\it exponential moving average} (EMA) to aggregate the normalizer across batches. Note for the $\log$-family estimators, their variational gaps are partly caused by the $\log$-transformation on finite-sample average due to Jensen's inequality ({\it i.e.}, $\log Z = \log \EE[\hat{Z}] \geq \EE [\log \hat{Z}]$). In contrast, the objective of exponential-family estimators do not involve such $\log$-transformation, since they can all be derived from the Fenchel-Legendre inequality: $\NWJ$ directly applies the Fenchel dual of $f$-divergence for MI \citep{nowozin2016f}, while $\TUBA$ exploits this inequality to compute the $\log$ partition $\log Z(x) = \log \EE_{p(y')}[\exp(g(x,y'))]$. Motivated from a contrastive view, our $\FLO$ applies the Fenchel-Legendre inequality to the $\log$-partition of contrast scores. 
		
		{\bf A contrastive view for MI estimation.} The MI estimators can also be categorized based on how they contrast the samples. For instance,  $\NWJ$ and $\TUBA$ are generally considered to be non-contrastive estimators, as their objectives do not compare positive samples against negative samples on the same scale ({\it i.e.}, $\log$ versus $\exp$), and this might explain their lack of effectiveness in representation learning applications. For $\JSD$, it depends on a two-stage estimation procedure similar to that in adversarial training to assess the MI, by explicitly contrasting positive and negative samples to estimate the likelihood ratio. This strategy has been reported to be unstable in many empirical settings. The $\log$-family estimators can be considered as a multi-sample, single-stage generalization of $\JSD$. However, the $\DV$ objective can go unbounded thus resulting in a large variance, and the contrastive signal is decoupled by the EMA operation in $\MINE$. Designed from contrastive perspectives, $\infonce$ trades bound tightness for a lower estimation variance, which is found to be crucial in representation learning applications. Our $\FLO$ formalizes the contrastive view for exponential-family MI estimation, and  bridges existing bounds: the $\PMI$ normalizer $\exp(-u(x,y))$ is a more principled treatment than the EMA in $\MINE$, and compared to $\DV$ the positive and negative samples are explicitly contrasted and adaptively normalized.

		{\bf Important FLO variants.} We now demonstrate that $\FLO$ is a flexible framework that not only recovers existing bounds, but also derives novel bounds such as 
		\beq
		\begin{array}{l}
			I_{\FDV} \triangleq \texttt{StopGrad}[I_{\DV}(\{ (x_i, y_i) \})] + \frac{\sum_{j} \exp(c_\theta(x_i, y_i, y_j))}{\texttt{StopGrad}[\sum_{j} \exp(
				c_\theta(x_i, y_i, y_j))]} - 1.
		\end{array}
		\label{eq:fdv}
		\eeq
		Recall the optimal $g^*(x,y) = \log p(x|y) + c(x)$ and $u^*(x,y) = -\log \frac{p(x,y)}{p(x)p(y)}$, which motivates us to parameterize $u(x,y)$ in the form of $-g_{\theta}(x,y) + s_{\psi}(x)$, where $s_{\psi}(x)$ models the arbitrary drift $c(x)$, and this recovers the $\TUBA$ bound. 
		Additionally, we note that ($i$) fixing either of $u$ and $g$, and optimizing the other also gives a valid lower bound to MI; and ($ii$) a carefully chosen multi-input $u(\{ (x_i, y_i) \})$ can be computationally appealing. As a concrete example, if we set $u_{\phi}$ to $\mathfrak{u}_{\theta}(\{ (x_i, y_i) \}) \leftarrow \log \left(\frac{1}{K}\sum\nolimits_{j} e^{c(x_i, y_i, y_j; g_{\theta})} \right)$
		and update $u_{\theta}(x,y)$ while artificially keeping the critic $g_{\theta}(x,y)$ fixed \footnote{That is to say $g_{\theta}$ in $u_{\phi}$ is an independent copy of $g_{\theta}$.}, then $\FLO$ falls back to $\DV$. Alternatively, we can consider the Fenchel dual version of it: using the same multi-input $\mathfrak{u}_{\theta}(\{ (x_i, y_i) \})$ above, treat $u_{\phi}$ as fixed and only update $g_{\theta}$, and this gives us the novel MI objective in (\ref{eq:fdv}), we call it {\it Fenchel-Donsker-Varadhan} (FDV) estimator.

		\vspace{-6pt}
		\subsection{Gradient and convergence analysis of FLO}
		\vspace{-3pt}
		\label{sec:flo_convergence}
		
		In this section, we will establish that FLO better optimizes the MI because its gradient is more accurate than competing variational bounds such as $\NWJ$ and $\TUBA$; also, we provide the first convergence analysis for variational MI estimation by showing $\FLO$ converges under SGD.

		First, recall most tractable variational MI bounds are derived from and upper bounded by the intractable $\UBA$ bound \citep{poole2019variational}.  For instance, with the same critic $g_{\theta}$ we have $I_{\NWJ} \leq I_{\TUBA} \leq I_\UBA$. So if we can show $\nabla_{\theta} I_{\FLO} \approx \nabla_{\theta} I_\UBA$ then $\FLO$ is better optimized. To simplify notations, we denote $c_{\theta}(x,y,y')\triangleq g_{\theta}(x,y') - g_{\theta}(x, y)$ and $\CE_{\theta}(x,y) \triangleq 1/\EE_{p(y')}[e^{c_{\theta}(x,y,y')}]$, and we can easily verify
		\beq
		\EE_{p(y')}\left[\nabla_{\theta}\left\{ e^{c_{\theta}(x,y,y')} \right\}\right] = \nabla_{\theta} \left\{ \frac{1}{\CE_{\theta}(x,y)} \right\} = - \frac{\nabla \CE_{\theta}(x,y)}{(\CE_{\theta}(x,y))^2} = -\frac{\nabla_{\theta} \log \CE_{\theta}(x,y)}{\CE_{\theta}(x,y)} . 
		\eeq
		Since for fixed $g_{\theta}(x,y)$ the corresponding optimal $u_{\theta}^*(x,y)$ maximizing $I_{\FLO}(u_{\phi},g_{\theta}) \triangleq 1 - \left\{ u_{\phi}(x,y) + \EE_{p(y')}[ e^{-u_{\phi}(x,y) + c(x,y,y';g_{\theta})}] \right\}$  is given by $u_{\theta}^*(x, y) = \log \EE_{p(y')}[ e^{c_{\theta}(x,y,y')}] = -\log \CE_{\theta}(x,y)$ (using (\ref{eq:logt})), we see that the term $e^{-u_{\phi}(x,y)}$ is essentially optimized to approximate $\CE_{\theta}(x,y)$. To emphasize this point, we now write $\hat{\CE}_{\theta}(x,y) \triangleq e^{-u_{\phi}(x,y)}$. When this approximation is sufficiently accurate ({\it i.e.}, $\CE_{\theta} \approx \hat{\CE}_{\theta}$), we can see that $\nabla I_{\FLO}$ approximates $\nabla I_{\UBA}$ as follows
		\beq
		\label{eq:grad_approx}
		\hspace{-.4em}
		\begin{array}{l}
			\nabla_{\theta}\{ I_{\FLO}(u_{\phi},g_{\theta}) \} 
			= - \EE_{xy}\left[ e^{-u_{\phi}(x,y)}\EE_{y'}[ \nabla_{\theta} e^{c_{\theta}(x, y, y')}]\right]
			=  \EE_{xy}\left[ \frac{\hat{\CE}_{\theta}(x,y)}{\CE_{\theta}(x,y)} \nabla_{\theta} \log \CE_{\theta}(x,y)\right] \\
			[8pt]
			\hspace{7.5em} \approx \EE_{xy}\left[  \nabla_{\theta} \log \CE_{\theta}(x,y) \right] = \nabla_{\theta} \left\{ \EE_{p(x,y)}[\log \CE_{\theta}(x,y)] \right\} = \nabla_{\theta} \{I_{\UBA}(g_\theta)\}.
		\end{array}
		\eeq

		We can prove $\FLO$ will converge under much weaker conditions, even when this approximation $\hat{u}(x,y)$ is rough. The intuition is as follows: in (\ref{eq:grad_approx}), the term $\frac{\hat{\CE}_{\theta_t}}{\CE_{\theta_t}}$  only rescales the gradient, so the optimizer is still proceeding in the same direction as $\UBA$ in SGD. The informal version of our result is summarized in the Proposition below (see the Appendix for the formal version and proof).

		\begin{figure}[H]
			\vspace{-.5em}
			\begin{minipage}{0.59\textwidth}
				\begin{prop}[Convergence of $\FLO$, informal version]
					\label{thm:conv}
					Let $\{\eta_t\}_{t=1}^{\infty}$ be the stochastic {\it Robbins-Monro} sequence of learning rates: $\sum_t \EE[\tilde{\eta}_t] = \infty$ and $\sum_t \EE[\tilde{\eta}_t^2] < \infty$. If $\frac{\hat{\CE}_{\theta_t}}{\CE_{\theta_t}}$ is bounded between $[a,b]$ ($0<a<b<\infty$), then under the stochastic gradient descent scheme described in Algorithm \ref{alg:flo}, $\theta_t$ converges to a stationary point of $I_{\UBA}(g_{\theta})$ with probability $1$, {\it i.e.}, $\lim_{t\rightarrow\infty}\| \nabla I_{\UBA}(g_{\theta_t}) \|=0$. Additionally assume $I_{\UBA}$ is convex with respect to $\theta$, then $\FLO$ converges with probability $1$ to the global optimum $\theta^*$ of $I_{\UBA}$ from any initial point $\theta_0$.
				\end{prop}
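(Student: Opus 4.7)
The plan is to exploit the gradient identity already derived in the excerpt, namely
\[
\nabla_{\theta} I_{\FLO}(u_{\phi},g_{\theta}) \;=\; \EE_{p(x,y)}\!\Bigl[\,r_{\theta}(x,y)\,\nabla_{\theta}\log \CE_{\theta}(x,y)\Bigr], \qquad r_{\theta}(x,y)\triangleq \frac{\hat{\CE}_{\theta}(x,y)}{\CE_{\theta}(x,y)}\in[a,b],
\]
and recognize the FLO SGD update as a stochastic approximation scheme for $I_{\UBA}(g_\theta)$ with a state- and sample-dependent rescaling of the learning rate. Writing one update as $\theta_{t+1}=\theta_t+\eta_t\,\widehat{G}_t$, where $\widehat{G}_t$ is the per-sample FLO gradient estimator, I will first check that $\EE[\widehat{G}_t\mid \theta_t]=\EE_{p(x,y)}[r_{\theta_t}\nabla\log\CE_{\theta_t}]$, so that the update is an unbiased ascent direction with respect to a reweighted version of $\nabla I_{\UBA}$. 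The positivity of $r_{\theta_t}$ is what makes this a valid ascent direction.

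Next, I would absorb the rescaling into an effective step size $\tilde{\eta}_t=r_{\theta_t}(x_t,y_t)\,\eta_t$. Since $r_{\theta_t}\in[a,b]$ with $0<a<b<\infty$, the effective sequence inherits the Robbins--Monro conditions: $\sum_t \EE[\tilde{\eta}_t]\ge a\sum_t \eta_t=\infty$ and $\sum_t \EE[\tilde{\eta}_t^2]\le b^2 \sum_t \eta_t^2<\infty$. Under the standard regularity assumptions usually bundled with such theorems (Lipschitz continuity of $\nabla I_{\UBA}$ and a second-moment bound on the per-sample gradient estimator, both of which I will state as part of the formal hypotheses), I would then invoke the Robbins--Siegmund almost-supermartingale convergence theorem (or equivalently the Bertsekas--Tsitsiklis stochastic gradient theorem) to conclude that $\|\nabla I_{\UBA}(g_{\theta_t})\|\to 0$ almost surely, establishing the first claim.

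For the convex case, the upgrade is standard: convexity of $I_{\UBA}$ in $\theta$ implies every stationary point is a global maximum, and the same Robbins--Siegmund argument applied to the Lyapunov function $\|\theta_t-\theta^*\|^2$ yields $\|\theta_t-\theta^*\|^2\to V_\infty<\infty$ almost surely; combined with the stationarity limit above and standard arguments (e.g.\ along a convergent subsequence one has $\theta_t\to\theta^*$ and hence $V_\infty=0$), this gives convergence to $\theta^*$ from any initialization $\theta_0$.

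The main obstacle is not the algebra of rescaling but the joint measurability and boundedness bookkeeping: $r_{\theta_t}(x_t,y_t)$ depends simultaneously on the current iterate (through both $u_\phi$ and $g_\theta$, since $\hat{\CE}_\theta=e^{-u_\phi}$ is optimized jointly) and on the fresh mini-batch, so one has to make sure the $[a,b]$ bound holds uniformly along the trajectory and not merely in expectation, and that the conditional-expectation identity used above is not corrupted by the inner optimization over $u_\phi$. In the formal proof I would address this by assuming the $[a,b]$ bound on $r_{\theta_t}$ as a trajectory-wise hypothesis (as the informal statement already does), and by separating the analysis into outer-loop SGD over $\theta$ with a well-approximated inner $u_\phi$, so that the unbiasedness and variance bounds needed for Robbins--Siegmund remain intact.
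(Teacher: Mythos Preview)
Your proposal is correct and follows essentially the same route as the paper. The paper's own proof is extremely terse: it simply observes that the bounded ratio $\hat{\CE}_{\theta_t}/\CE_{\theta_t}\in[a,b]$ can be absorbed into a randomized learning rate $\tilde{\eta}_t$ and then invokes two off-the-shelf propositions on ``generalized stochastic approximation'' from \cite{tao2019fenchel} (one for convergence to stationary points under Lipschitz-gradient and bounded-variance assumptions, one for global convergence under the stronger ODE-stability/convexity condition), which is exactly the rescaled Robbins--Monro argument you outline; your version is in fact more explicit about the supermartingale machinery and the measurability caveats than the paper itself.
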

			\end{minipage}
			\hspace{3pt}
			\scalebox{.9}{
				\begin{minipage}{.4\textwidth}
					\vspace{-1.5em}
					\begin{algorithm}[H]
						\caption{$\FLO$}
						\label{alg:flo}
						\begin{algorithmic}
							\STATE Empirical data $\hat{p}_d = \{ (x_i, y_i) \}_{i=1}^n$ \\
							[1pt]
							\STATE Model parameters $\Psi = (\theta, \phi)$ \\
							[1pt]
							\FOR{$t=1,2,\cdots$}
							\STATE Sample $i,j \stackrel{iid}{\sim} [n]$ \\
							[1pt]
							\STATE $u_{ii} = u_{\phi}(x_i, y_i), g_{ii} = g_{\theta}(x_i, y_i)$, \\
							\STATE $g_{ij} = g_{\theta}(x_i, y_{j})$\\
							[1pt]
							\STATE $\CF = u_{ii} + \exp(-u_{ii}+g_{ij}-g_{ii})$\\
							[1pt]
							$\Psi_{t} = \Psi_{t} - \eta_t \nabla_{\Psi} \CF$
							\ENDFOR\\
						\end{algorithmic}
					\end{algorithm}
				\end{minipage}
			}
			\vspace{-1.5em}
		\end{figure}

		Importantly, this marks the first convergence result for variational MI estimators. The convergence analyses for MI estimation is non-trivial and scarce even for those standard statistical estimators \citep{paninski2003estimation, gao2015efficient, rainforth2018nesting}. The lack of convergence guarantees has led to a proliferation of unstable MI-estimators used in practice (in particular, $\DV$, $\JSD$, and  $\MINE$) that critically rely on various empirical hacks to work well (see discussions in \cite{song2020understanding}). Our work establishes a family of variational MI estimators that provably converges, a contribution we consider significant as it fills an important gap in current literature on both theoretical and practical notes.

		\begin{figure*}[t!]
			\begin{center}
				\includegraphics[width=1.2\textwidth]{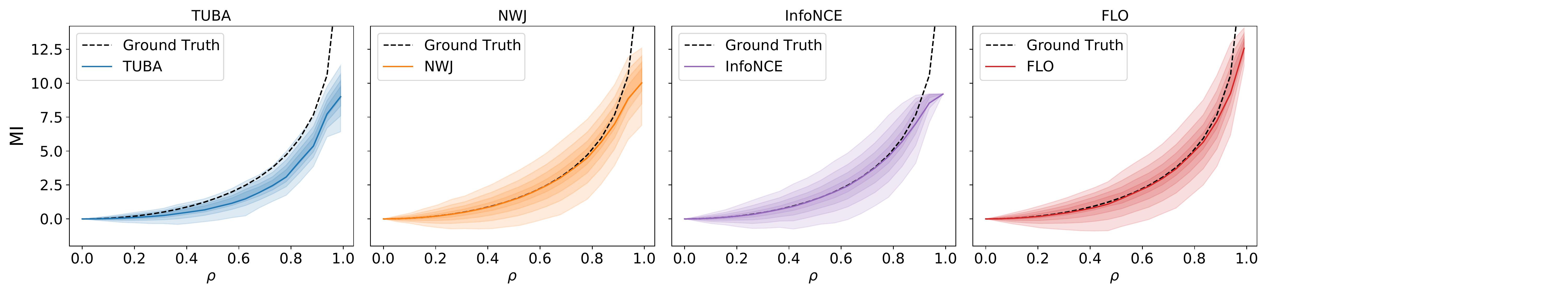}
			\end{center}
			\vspace{-1.3em}
			\caption{Bias-variance plot for popular variational MI bounds with the $10$-D Gaussians. Estimators that are more concentrated around the dashed line is considered better (low-bias, low-variance). In the more challenging high-MI regime, $\FLO$ shows a clear advantage over competing alternatives, where FLO pays less price in variance to achieve even better accuracy when tight estimation is impossible.
				\label{fig:cmp_var_10d}}
		\end{figure*}

		\begin{figure*}[t!]
			\begin{center}
				\includegraphics[width=.335\textwidth,trim={0 0 .5in 0},clip]{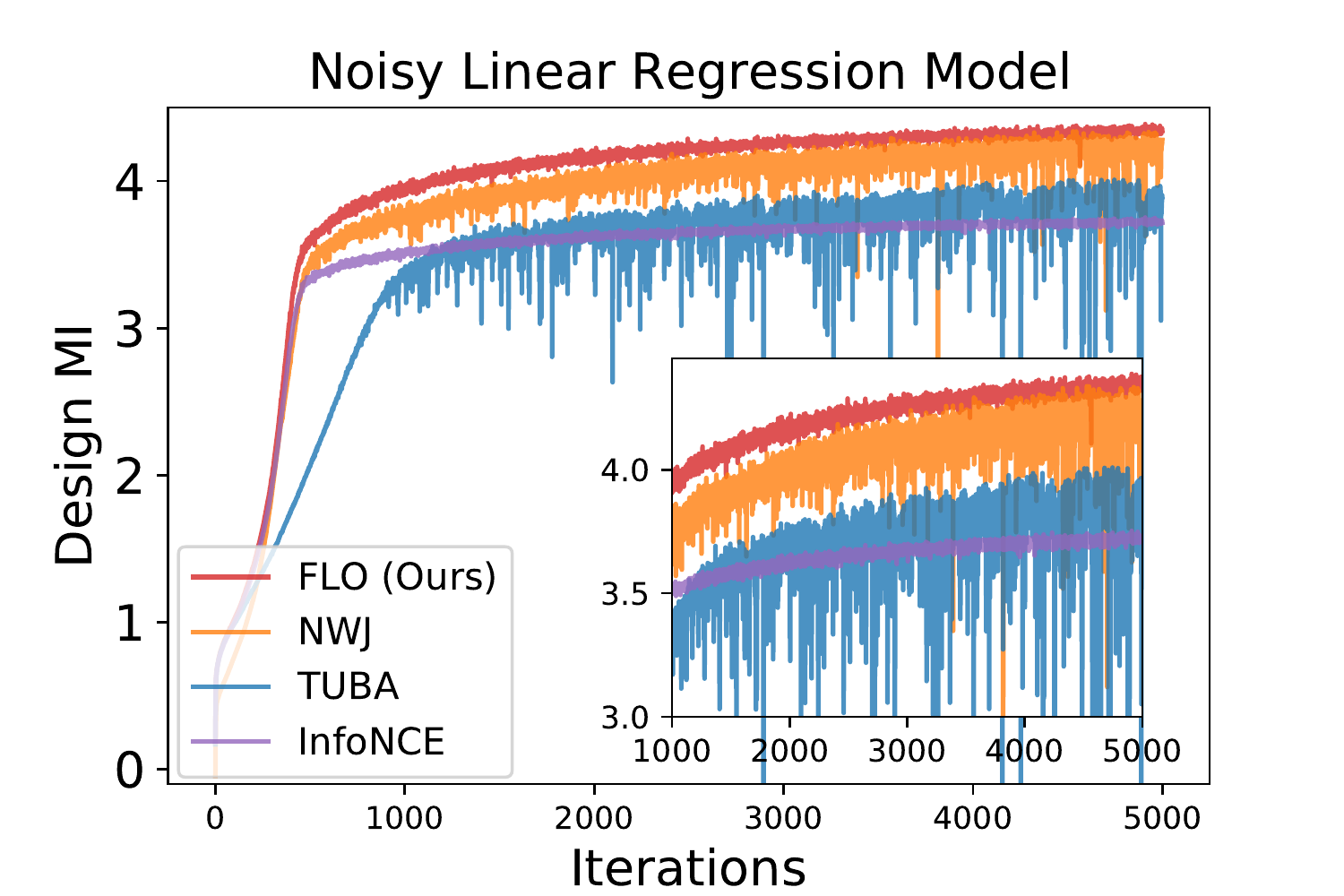}
				\includegraphics[width=.306\textwidth,trim={.5in 0 .5in 0},clip]{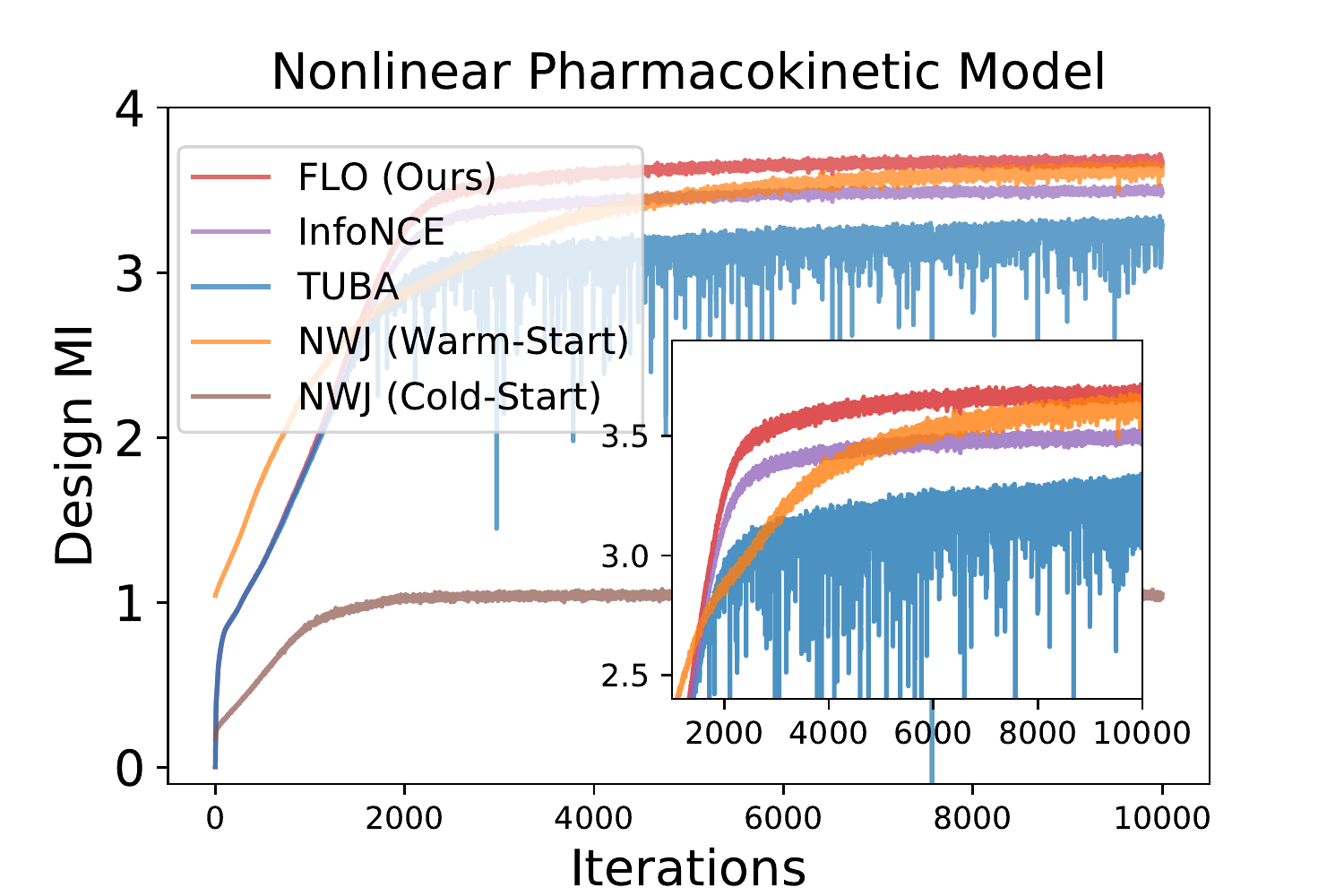}
				\includegraphics[width=.32\textwidth,trim={.3in 0 .5in 0},clip]{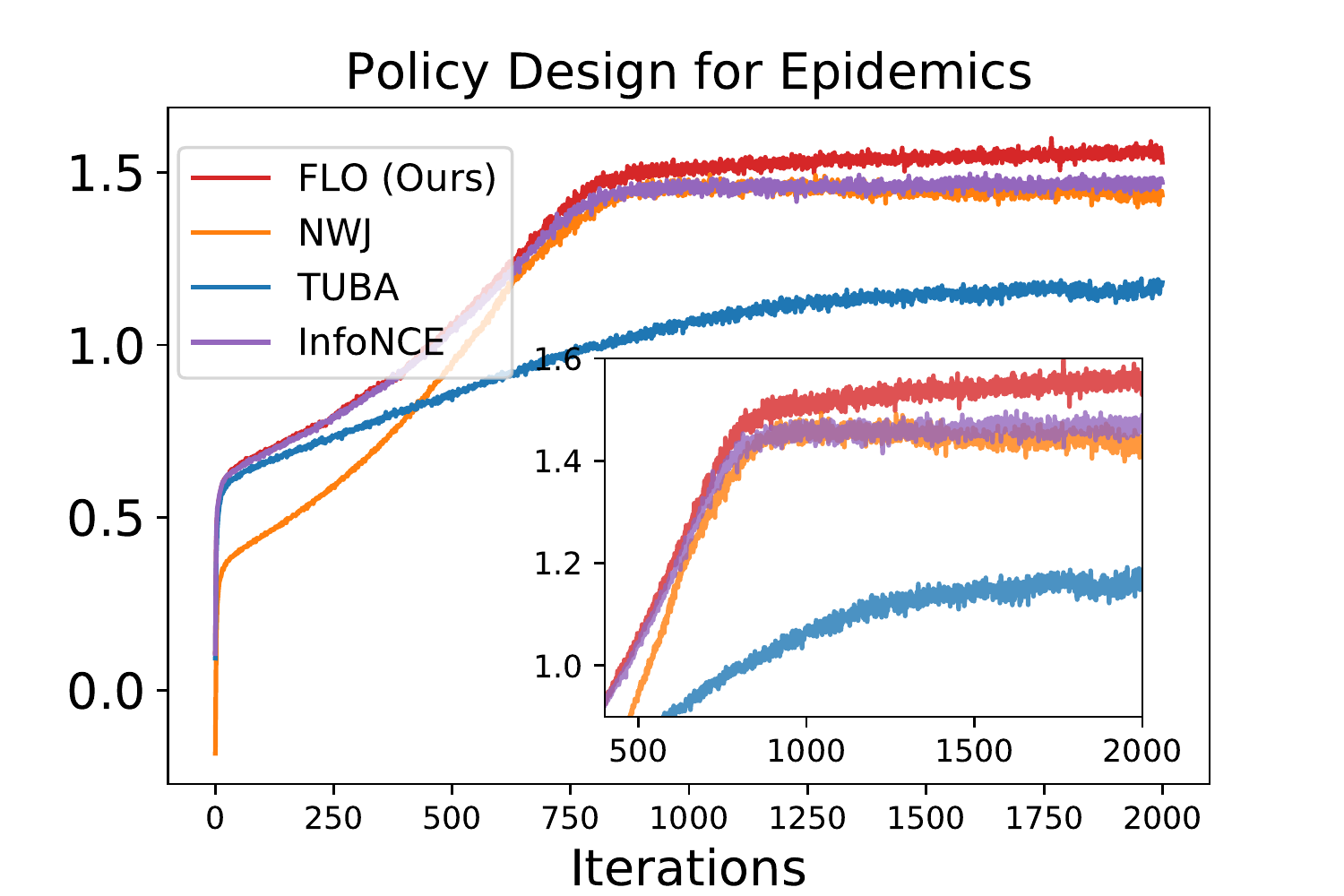}
			\end{center}
			\vspace{-1.em}
			\caption{Bayesian Optimal Experiment Design results. $\FLO$ consistently performs best, demonstrating superior strength in learning efficiency and robustness. $\NWJ$ takes the runner-up, but it has larger variance and is sensitive to network initializations. $\infonce$ is less competitive due to low sample inefficiency, but its smaller variance helps in the more challenging dynamic case. 
				\label{fig:design}}
			\vspace{-1.5em}
		\end{figure*}

		\vspace{-3pt}
		\section{Experiments}
		\vspace{-3pt}
		\label{sec:exp}
		
		We consider an extensive range of tasks to validate $\FLO$ and benchmark it against state-of-the-art solutions. To underscore the practical significance of MI in efficient machine learning, we demonstrate example applications from data collection (in statistical parlance, experimental design), self-supervised pre-training, to meta/transfer-learning. Limited by space, we present only the key results in the main text, and defer ablation studies and details of our experimental setups to the Appendix. Our code is available from \url{https://github.com/qingguo666/FLO}. All experiments are implemented with \texttt{PyTorch}.

		


		\begin{figure}[H]
			\vspace{-1.em}
			\begin{minipage}{.65\textwidth}
				{\bf Comparison to baseline MI bounds.} We start by comparing $\FLO$ to the following popular competing variational estimators: $\NWJ$, $\TUBA$, and $\infonce$. We use the bilinear critic implementation for all models which maximally encourages both sample efficiency and code simplicity, and this strategy does perform best based on our observations. We consider the synthetic benchmark from \citep{poole2019variational}, where $(X\in\BR^d,Y\in\BR^d)$ is jointly standard Gaussian with diagonal cross-correlation parameterized by $\rho\in[0,1)$. We report $d=10$ and $\rho \in [0, 0.99]$ here (other studies only report $\rho$ up to $0.9$, which is less challenging.), providing a reasonable coverage of the range of MI one may encounter in empirical settings. 
			\end{minipage}
			\hspace{3pt}
			\begin{minipage}{.32\textwidth}
				\begin{center}
					\includegraphics[width=1\textwidth,trim={0 0 0.1in 0},clip]{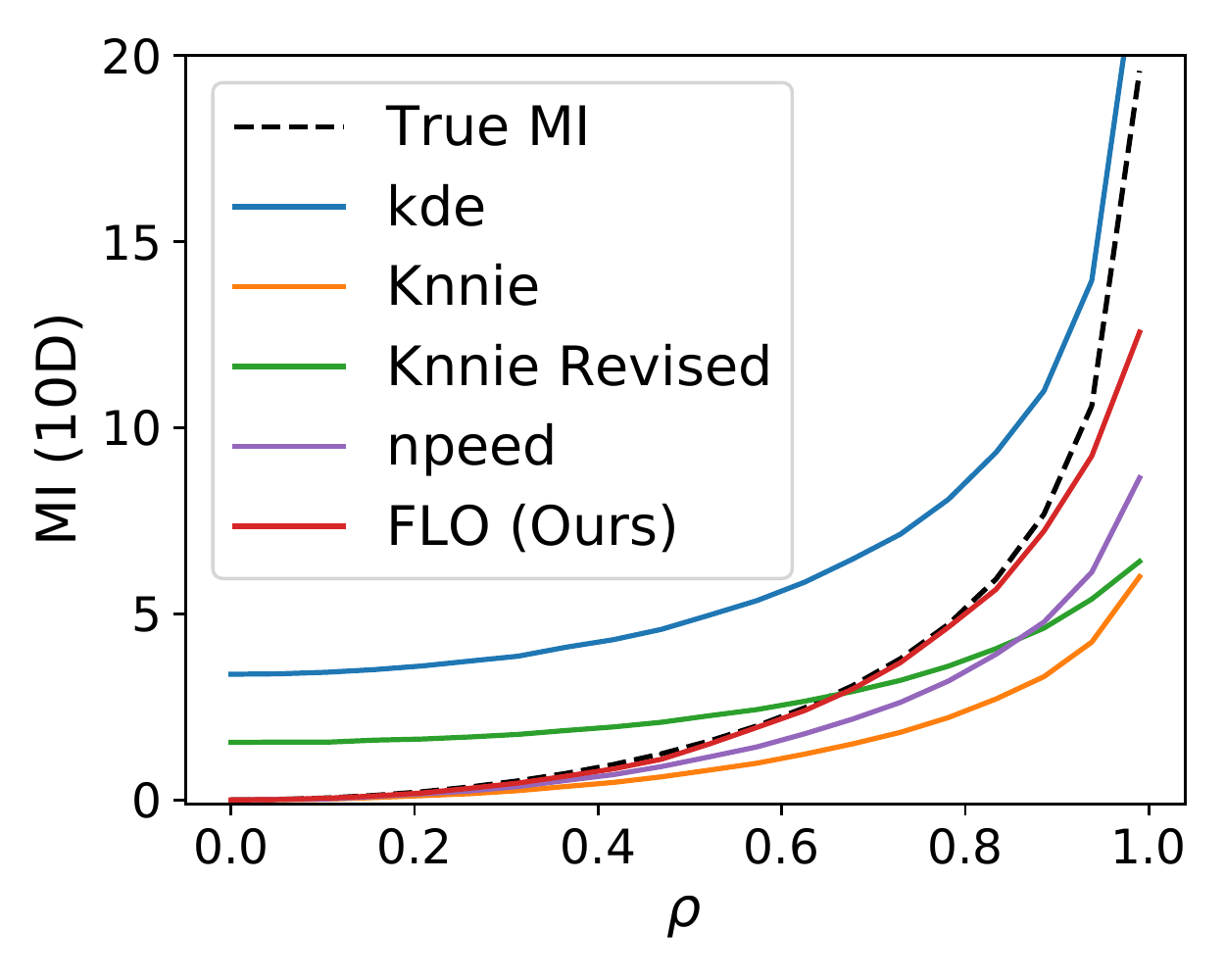}
				\end{center}
				\vspace{-1.8em}
				\caption{$\FLO$ compares favorably to classical MI estimators. \label{fig:classic}}
			\end{minipage}
			\vspace{-2.5em}
		\end{figure}
		To focus on the bias-variance trade-off, we plot the decimal quantiles in addition to the estimated MI in Figure \ref{fig:cmp_var_10d}, where $\FLO$ significantly outperformed its variational counterparts in the more challenging high-MI regime. In Figure \ref{fig:classic}, we show $\FLO$ also beats classical MI estimators \citep{kraskov2004estimating, ver2013information, gao2018demystifying}. In the Appendix \ref{sec:alt-appendix}, we further 
		discuss recent works on parametric estimators \citep{cheng2020club, brekelmans2021improving} and alternative information metrics \citep{xu2020theory}.


		{\bf Bayesian optimal experiment design (BOED).} We next direct our attention to BOED, a topic of significant interest shared by the statistical and machine learning communities \citep{chaloner1995bayesian, wu2011experiments, hernandez2014predictive, foster2020unified}. The performance of machine learning models crucially relies on the quality of data supplied for training, and BOED is a principled framework that optimizes the data collection procedure (in statistical parlance, conducting {\it experiments}) \citep{foster2019variational}. Mathematically, let $x$ be the data to be collected, $\theta$ be the parameters to be inferred, and $d$ be the experiment parameters the investigator can manipulate ({\it a.k.a}, the {\it design parameters}), BOED tries to find the optimal data collection procedure that is expected to generate data that is most informative about the underlying model parameters, {\it i.e.}, solves for $\argmax_{d} I(x;\theta;d)$. In this study, we focus on the more generic scenario where explicit likelihoods are not available, but we can still sample from the data generating procedure \citep{kleinegesse2020bayesian, kleinegesse2021gradient}.
		
		We consider three carefully-selected models from recent literature for their progressive practical significance and the challenges involved \citep{foster2021deep, ivanova2021implicit, kleinegesse2021sequential}: static designs of ($i$) a simple linear regression model and ($ii$) a complex nonlinear pharmacokinetic model for drug development; and the dynamic policy design for ($iii$) epidemic disease surveillance and intervention ({\it e.g.}, for Covid-19 modeling). Designs with higher MI are more favorable, because it implies the data carries more information. In Figure \ref{fig:design} we compare design optimization curves using different MI optimization strategies, where $\FLO$ consistently leads. Popular $\NWJ$ and $\infonce$ reports different tradeoffs that are less susceptible to $\FLO$. We also examine the $\FLO$ predicted posteriors and confirm they are consistent with the ground-truth parameters (Figure \ref{fig:boed} right). For the dynamic policy optimization, we also manually inspect the design strategies reported by different models (Figure \ref{fig:boed} left,middle). Consistent with human judgement, $\FLO$ policy better assigns budgeted surveillance resources at different stages of pandemic progression. 

		\begin{figure}[t!]
			\begin{center}
				\includegraphics[width=.35\textwidth,trim={0 0 .5in 0},clip]{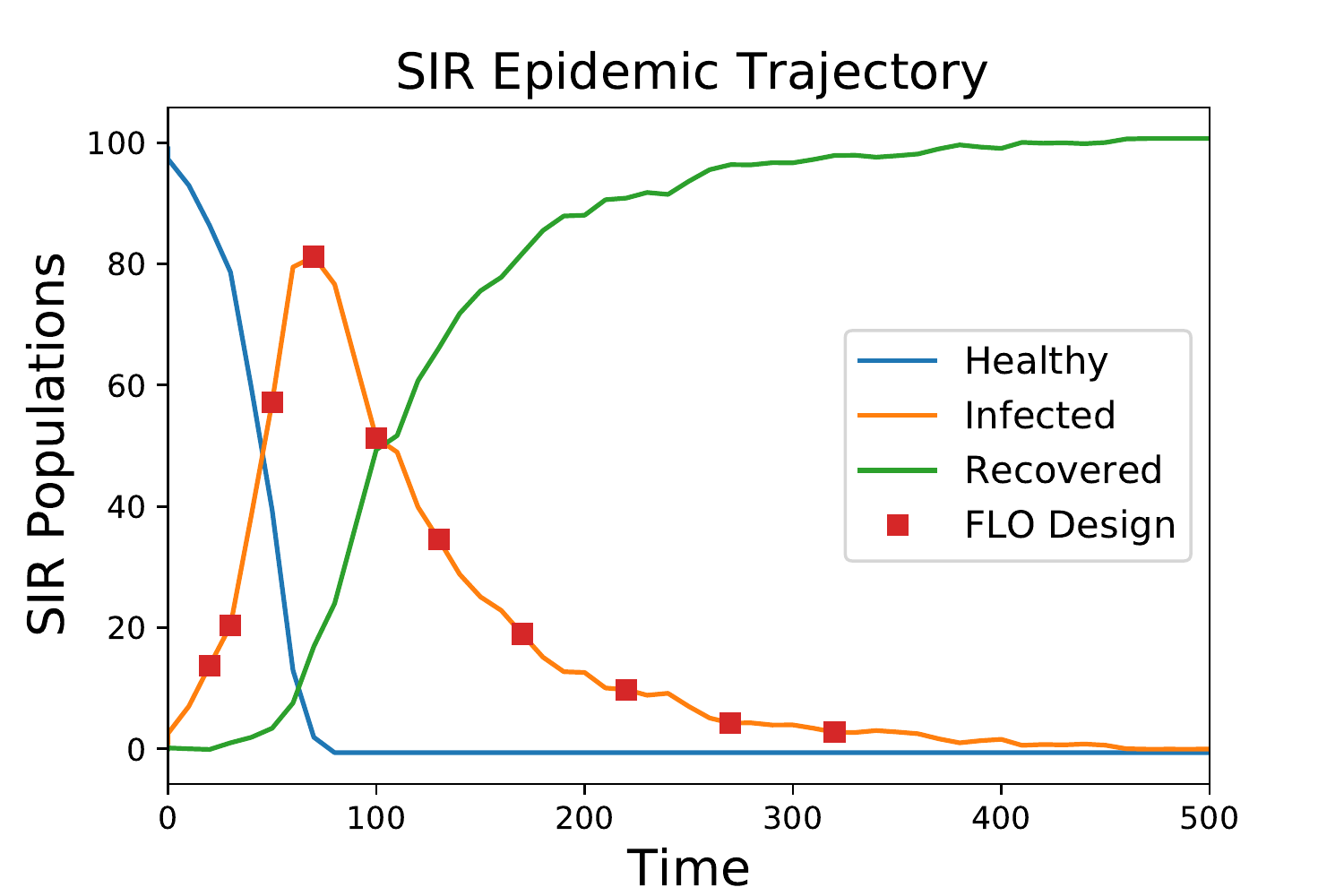}
				\includegraphics[width=.32\textwidth,trim={0.08in 0 0in 0},clip]{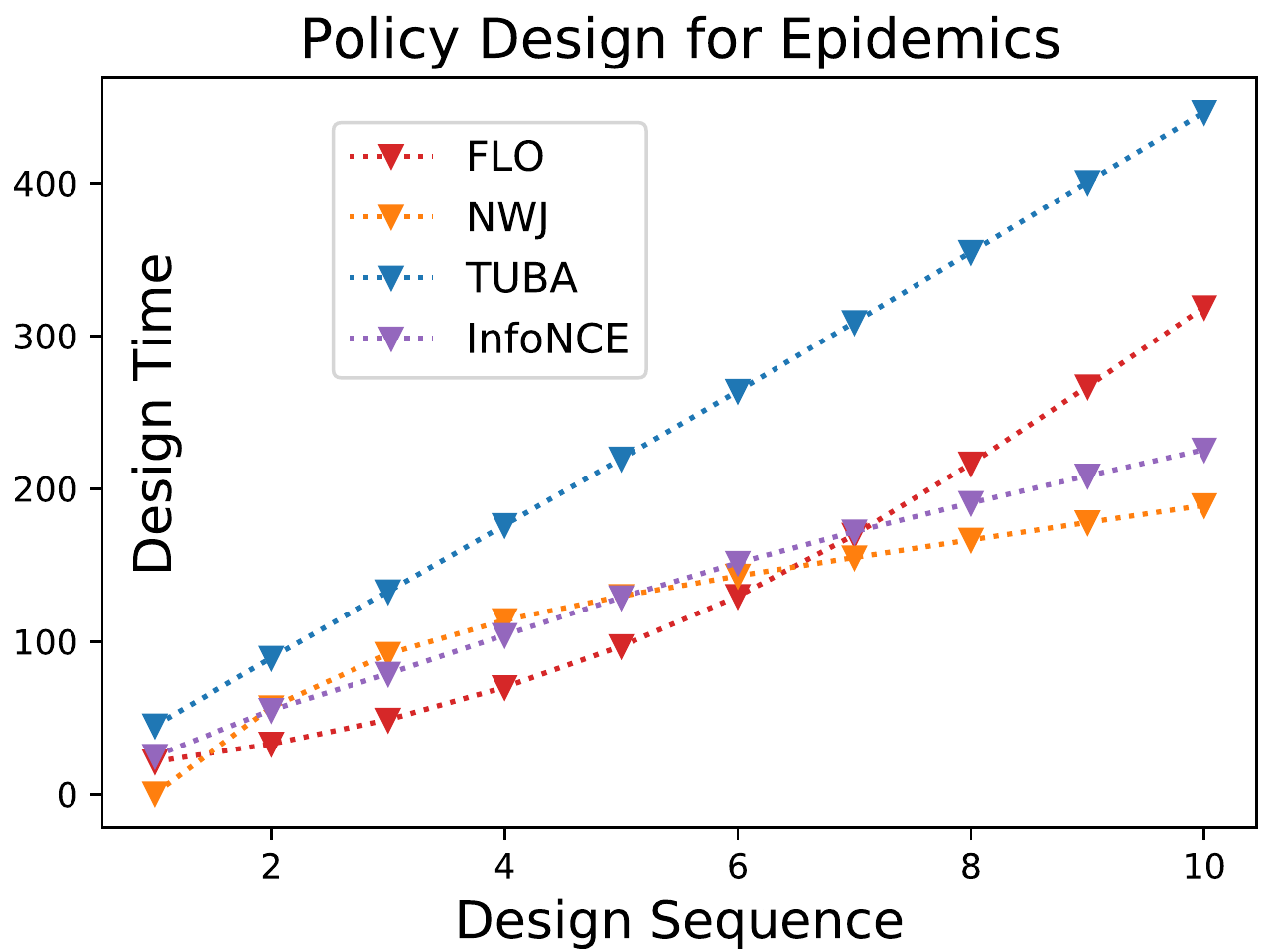}
				\includegraphics[height=.255\textwidth,trim={0 0 0.6in 0},clip]{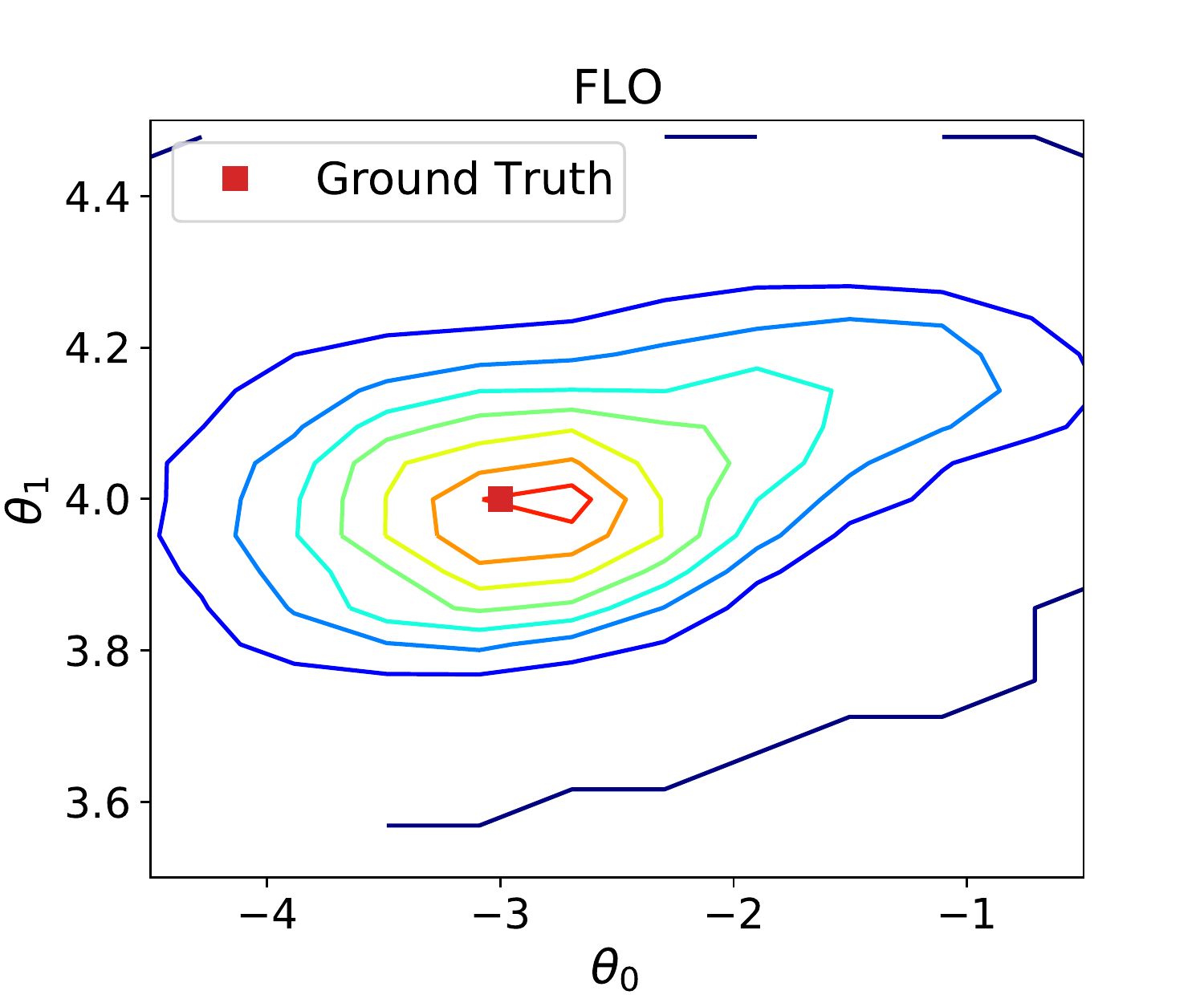}
			\end{center}
			\vspace{-1.5em}
			\caption{Diagnosis of learned sequential designs. The disease surveillance windows designed by $\FLO$ makes more sense: measures more frequently as infection spikes, and more sparsely when the pandemic slowly fades. The estimated parameter posterior (right) is consistent with the ground truth. \label{fig:boed}}
			\vspace{-1.em}
		\end{figure}

		{\bf A novel meta-learning framework.} A second application of our work is to meta-learning, an area attracting substantial recent interest. 
		In meta-learning, we are concerned with scenarios that at training time, there are abundant different labelled tasks, while upon deployment, only a handful of labeled instances are available to adapt the learner to a new task. Briefly, for an arbitrary loss $\ell_t(\hat{y}, y)$, where $t$ is the task identifier and $\hat{y}= f(x)$ is the prediction made by the model, we denote the risk by $R_t(f) = \EE_{p_t(x,y)}[\ell_t(f(x), y)]$. Denote $R(f)\triangleq \EE_{t\sim p(t)}[R_t(f)]$ as the expected risk for all tasks and $\hat{R}(f)$ for the mean of empirical risks computed from all training tasks. Inspired by recent information-theoretic generalization theories \citep{xu2017information}, we derived a novel, principled objective 
		\vspace{-3pt}
		\beq
		\CL_\texttt{Meta-FLO}(f) = \hat{R}(f) + \lambda \sqrt{I_{\FLO}(\hat{\CD}_t;\hat{E}_t)}, 
		\eeq
		
		\begin{figure}[H]
			\vspace{-1.em}
			\begin{minipage}{.38\textwidth}
				where $\lambda$ is known given the data size and loss function, $(\hat{\CD}_t, \hat{E}_t)$ are respectively data and task embeddings for training data, which for the first time lifts contrastive learning to the task and data distribution level. Our reasoning is that $\CL_\texttt{Meta-FLO}(f)$ theoretically bounds $R(f)$ from above, and it is relatively sharp for being data-dependent. We give more information on this in the Appendix and defer a full exposition to a dedicated paper due to independent interest and space limits here. Note other MI bounds are not suitable for this task due to resource and vari-
			\end{minipage}
			\hspace{3pt}
			\scalebox{.98}{
				\begin{minipage}{.6\textwidth}
					\vspace{-.5em}
					\renewcommand{\figurename}{Table}
					\setcounter{figure}{1}
					\caption{Multi-view representation learning on \texttt{Cifar} \label{tab:cifar}}
					\vspace{2pt}
					\scalebox{.88}{
						\begin{tabular}{ccccc}
							\toprule
							Model   & $\infonce$       & \texttt{SpecNCE} \citep{haochen2021provable} \footnote{Note \texttt{SpecNCE} does not explicitly target mutual information}      & $\FLO$ & $\FDV$  \\
							\midrule
							MI     & $5.73\pm.07$ & $4.76\pm.08$ & $5.83\pm .08$ & {$\bs{5.93\pm.08}$} \\
							\bottomrule
						\end{tabular}
					}
					\vspace{-.5em}
					\renewcommand{\figurename}{Figure}
					\setcounter{figure}{6}
					\begin{center}
						\includegraphics[width=0.51\textwidth,trim={0 0 .6in 0},clip]{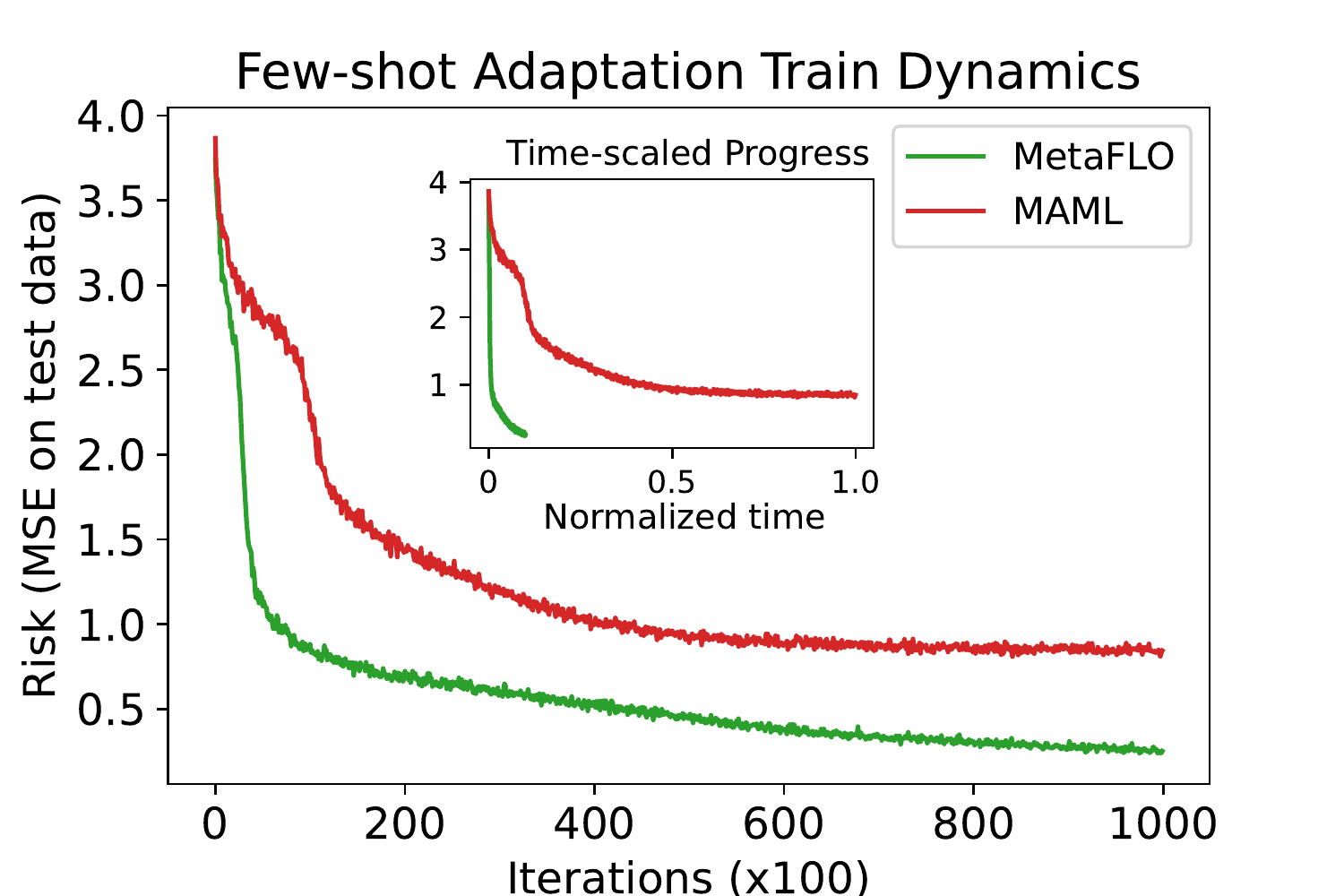}
						\includegraphics[width=0.48\textwidth,trim={.48in 0 .5in 0},clip]{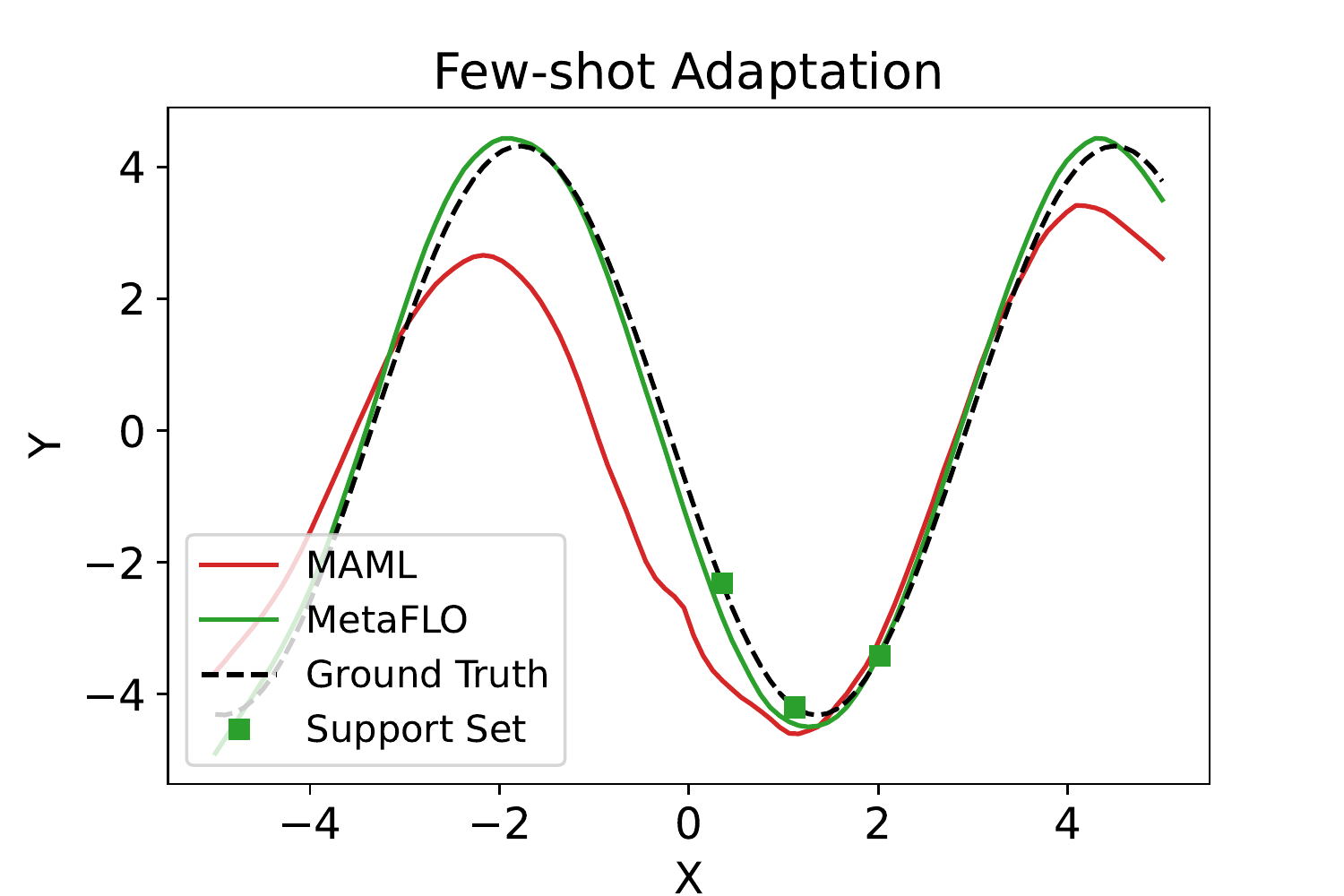}
					\end{center}
					\vspace{-1.2em}
					\caption{Few-shot adaptation with \texttt{Meta-FLO}. \label{fig:meta}}
				\end{minipage}
			}
			\vspace{-1.7em}
		\end{figure}
		ance concerns. In Figure \ref{fig:meta} we show $\texttt{Meta-FLO}$ wins big over the state-of-the-art {\it model agnostic meta-learning} (MAML) model on the regression benchmark from \citep{finn2017model}.

		
		
		
		
		
		\begin{figure}[t!]
			\centering
			\begin{minipage}{.325\textwidth}
				\centering
				\vspace{-.2em}
				\includegraphics[width=1.\textwidth]{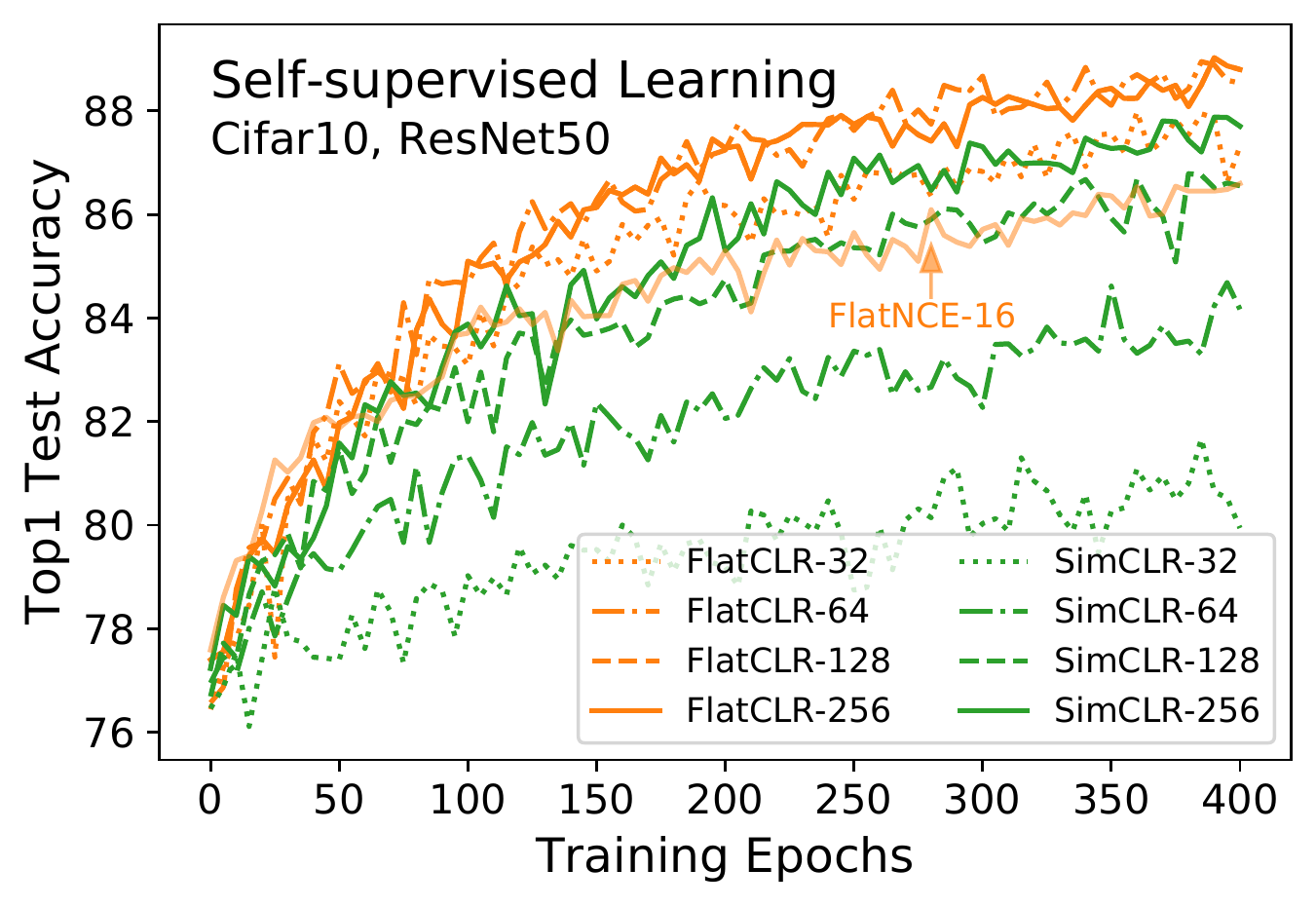}
				\vspace{-1.2em}
				\captionof{figure}{Sample efficiency comparison for $\SimCLR$ and $\FCLR$ on \texttt{Cifar10}. \label{fig:batchsize}}
			\end{minipage}%
			\hspace{1pt}
			\begin{minipage}{.245\textwidth}
				\centering
				\vspace{-.2em}
				\includegraphics[width=.95\textwidth]{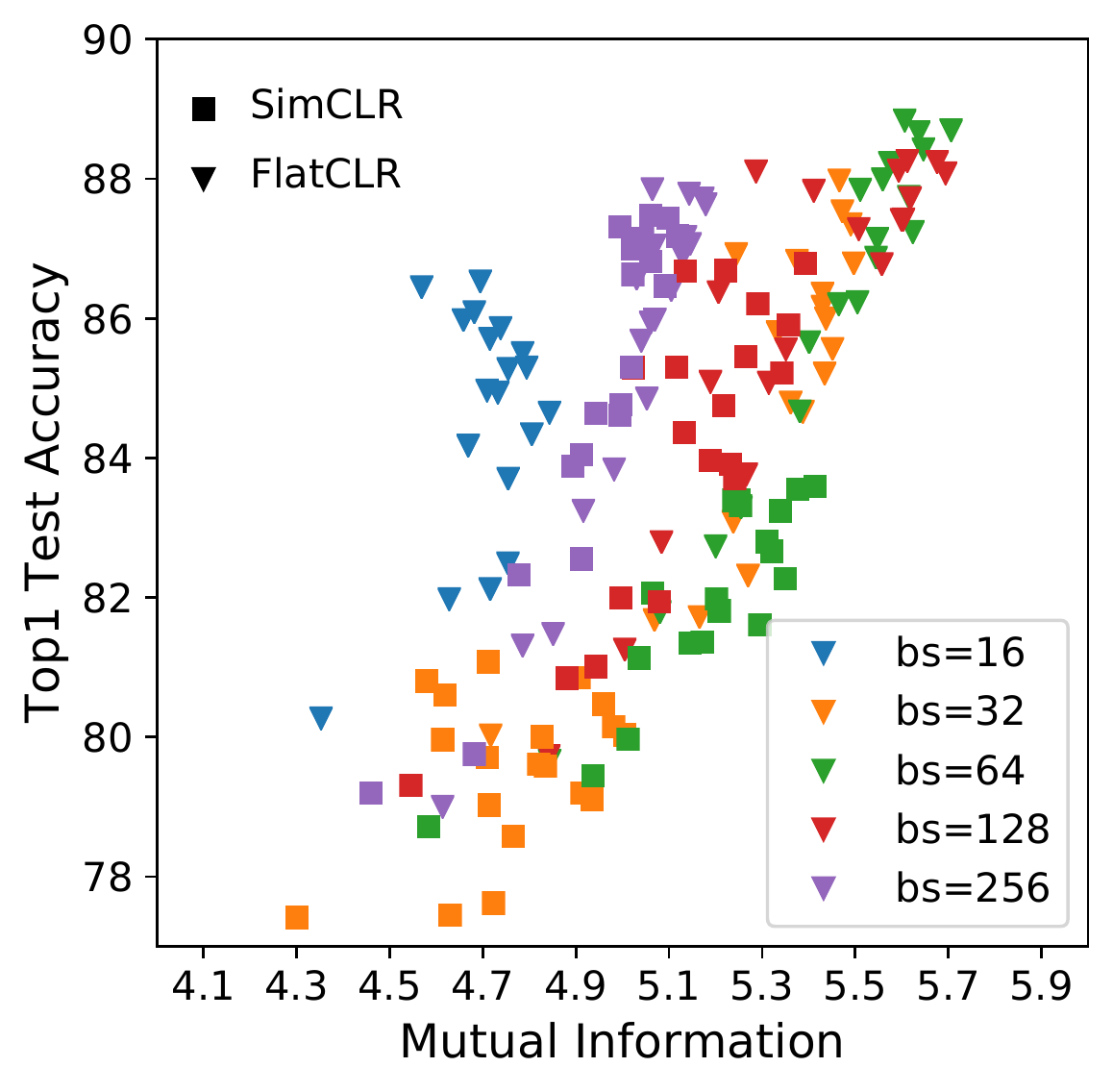}
				\vspace{-.2em}
				\captionof{figure}{Representation MI strongly correlates with performance. \label{fig:mi_acc}}
			\end{minipage}%
			\hspace{1pt}
			\begin{minipage}{.375\textwidth}
				\centering
				\includegraphics[width=1.\textwidth]{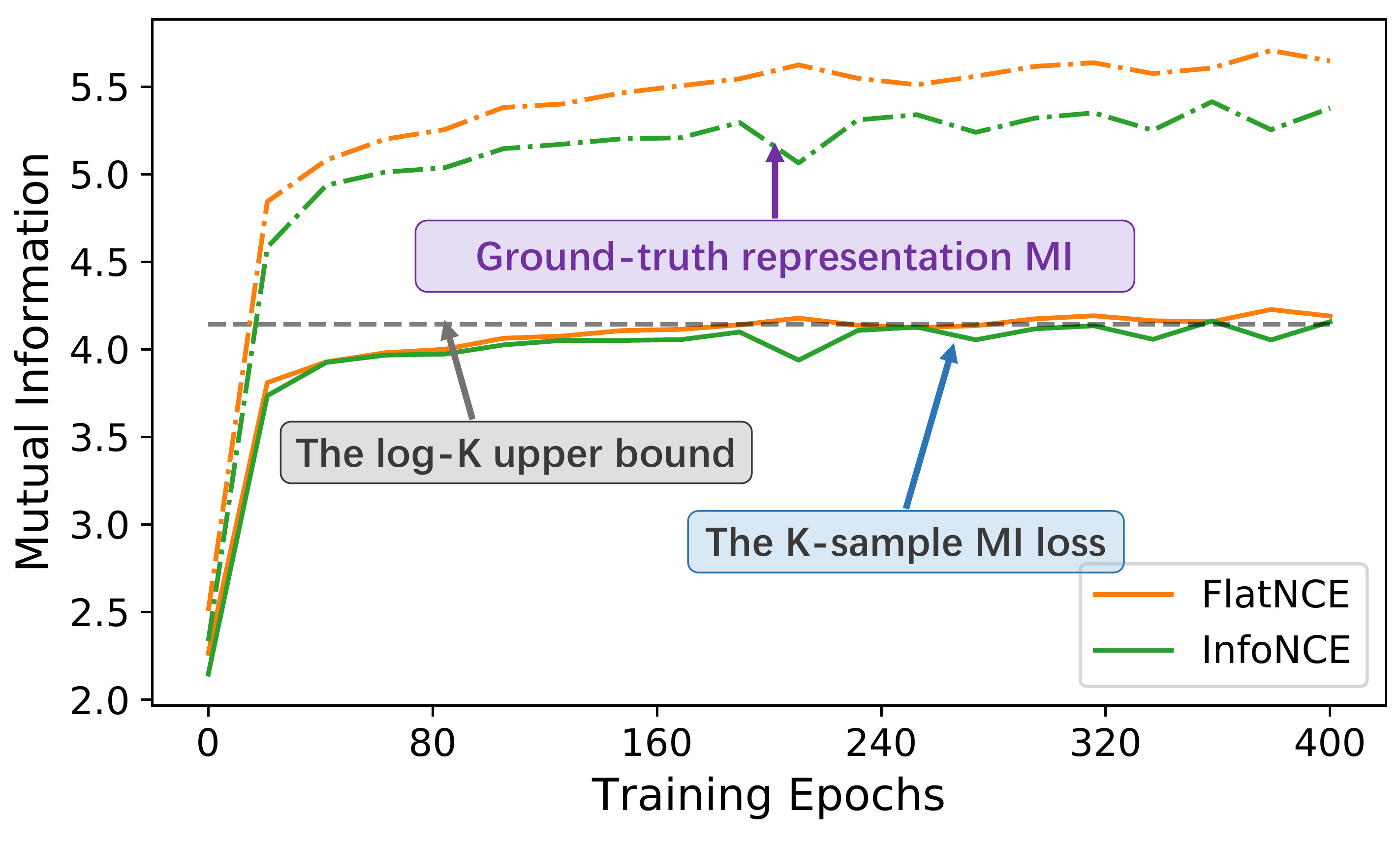}
				\vspace{-1.2em}
				\captionof{figure}{ 
					{$\FLAT$} better optimizes the true MI with the same mini-batch size. (\texttt{Cifar10} SSL training)  \label{fig:mi_opt} 
				}
			\end{minipage}
			\vspace{-1.em}
		\end{figure}

		{\bf Self-supervised learning (SSL)}. Finally, we wrap our experiments with one of the prime applications of contrastive MI estimation in machine learning: SSL for model pre-training. Here we focus on how $\FLO$-inspired objectives can improve the current practice of SSL, and given this topic's independent interest, we defer in-depth discussions in our dedicated work \citep{chen2021simpler} where SSL-specific problems such as training diagnosis and low-precision numerical overflow are explored in detail. In this experiment, we follow the SSL setup described in the $\SimCLR$ paper \citep{chen2020simple}: in the pre-training phase, we optimize the mutual information between difference augmentations of the same image ({\it i.e.}, scaling, rotation, color jitting, {\it etc.}); and use linear probing accuracy as our perfromance criteria. 
		We compare the effectiveness of the $\infonce$-based $\SimCLR$ framework \citep{chen2020simple} to our $\FLO$-based alternatives. To ensure fair comparison, we have used the $\FDV$ variant defined in Eq. (\ref{eq:fdv}) as our training objective, so that we are not introducing extra parameters to model $u(x,y)$. We call our new model $\FCLR$ because, perhaps counter-intuitively, the second term in Eq. (\ref{eq:fdv}) contributing all the learning signal is constant one in value ({\it i.e.}, being flat). In  Figure \ref{fig:batchsize} and \ref{fig:mi_acc}, we show our new model $\FCLR$ shows superior sample efficiency compared to the SOTA $\SimCLR$ (a $8 \times$ boost for the same performance, $\FCLR$-$32$ versus $\SimCLR$-$256$). This result is significant because $\SimCLR$'s crucial reliance on large-batch training is a well-known limitation \citep{haochen2021provable, yuan2022provable, lee2022r}. Figure \ref{fig:mi_opt} shows typical training curves with the respective models. Note that while the empirical estimates of MI are tied between the two methods, $\FDV$ optimized representation enjoys a better ground-truth MI \footnote{Ground-truth MI is approximated by $\infonce$ using a very large negative sample pool ($100 \times$ mini-batch).}, which can be explained by its robustness to the numerical overflow issue (see \citep{chen2021simpler} for details). Further comparisons on the ground-truth MI estimation with different estimators can be found in Table \ref{tab:cifar}. 
		

		
		\vspace{-10pt}
		\section{Conclusion}
		\vspace{-8pt}
		
		We have described a new framework for the contrastive estimation of mutual information from energy modeling perspectives. Our work not only encapsulates popular variational MI bounds but also inspires novel objectives such as $\FLO$ and $\FDV$, which comes with strong theoretical guarantees. In future work, we will leverage our theoretical insights to improve practical applications involving MI estimation, such as representation learning, fairness, and in particular, data efficient learning.

		\vspace{-8pt}
		\section*{Acknowledgements}
		\vspace{-6pt}
		The authors would like to thank the anonymous reviewers for their insightful comments.
		Q Guo gratefully appreciate the support of Amazon Fellowship. X Deng would like to thank the Advanced Research Computing program at Virginia Tech and Virginia’s Commonwealth Cyber Initiative (CCI) AI testbed for providing computational resources, also appreciate the CCI and CCI-Coastal grants to Virginia Tech. Part of this work is done before C Tao joined Amazon, and he was funded by National Science Foundation Grant No. 1934964. This work used the Extreme Science and Engineering Discovery Environment (XSEDE), which is supported by National Science Foundation grant number ACI-1548562 \citep{towns2014xsede} and used the Extreme Science and Engineering Discovery Environment (XSEDE) PSC Bridges-2 and SDSC Expanse at the service-provider through allocation TG-ELE200002 and TG-CIS210044. 
		
		\bibliography{finice}

\begin{thebibliography}{10}

\bibitem{alemi2018fixing}
Alexander Alemi, Ben Poole, Ian Fischer, Joshua Dillon, Rif~A Saurous, and
  Kevin Murphy.
\newblock Fixing a broken {ELBO}.
\newblock In {\em ICML}, pages 159--168, 2018.

\bibitem{alemi2016deep}
Alexander~A Alemi, Ian Fischer, Joshua~V Dillon, and Kevin Murphy.
\newblock Deep variational information bottleneck.
\newblock In {\em ICLR}, 2016.

\bibitem{allen2017primer}
Linda~JS Allen.
\newblock A primer on stochastic epidemic models: Formulation, numerical
  simulation, and analysis.
\newblock {\em Infectious Disease Modelling}, 2(2):128--142, 2017.

\bibitem{allen2008mathematical}
Linda~JS Allen, Fred Brauer, Pauline Van~den Driessche, and Jianhong Wu.
\newblock {\em Mathematical epidemiology}, volume 1945.
\newblock Springer, 2008.

\bibitem{anonymous2022metaflo}
Anonymous.
\newblock Meta-flo: Principled simple fast few-shot learning with stochastic
  prompt encoding networks.
\newblock 2022.

\bibitem{arjovsky2017towards}
Martin Arjovsky and L{\'e}on Bottou.
\newblock Towards principled methods for training generative adversarial
  networks.
\newblock In {\em ICLR}, 2017.

\bibitem{asuncion2007uci}
Arthur Asuncion and David Newman.
\newblock Uci machine learning repository, 2007.

\bibitem{bach2002kernel}
Francis~R Bach and Michael~I Jordan.
\newblock Kernel independent component analysis.
\newblock {\em Journal of Machine Learning Research}, 3(Jul):1--48, 2002.

\bibitem{agakov2004algorithm}
David Barber and Felix Agakov.
\newblock The {IM} algorithm: a variational approach to information
  maximization.
\newblock {\em NIPS}, 16:201, 2004.

\bibitem{barber2003information}
David Barber and Felix~V Agakov.
\newblock Information maximization in noisy channels: A variational approach.
\newblock {\em NIPS}, 16, 2003.

\bibitem{battiti1994using}
Roberto Battiti.
\newblock Using mutual information for selecting features in supervised neural
  net learning.
\newblock {\em IEEE transactions on Neural Networks}, 5(4):537--550, 1994.

\bibitem{belghazi2018mutual}
Mohamed~Ishmael Belghazi, Aristide Baratin, Sai Rajeshwar, Sherjil Ozair,
  Yoshua Bengio, Aaron Courville, and Devon Hjelm.
\newblock Mutual information neural estimation.
\newblock In {\em ICML}, 2018.

\bibitem{aif360-oct-2018}
Rachel K.~E. Bellamy, Kuntal Dey, Michael Hind, Samuel~C. Hoffman, Stephanie
  Houde, Kalapriya Kannan, Pranay Lohia, Jacquelyn Martino, Sameep Mehta,
  Aleksandra Mojsilovic, Seema Nagar, Karthikeyan~Natesan Ramamurthy, John
  Richards, Diptikalyan Saha, Prasanna Sattigeri, Moninder Singh, Kush~R.
  Varshney, and Yunfeng Zhang.
\newblock {AI Fairness} 360: An extensible toolkit for detecting,
  understanding, and mitigating unwanted algorithmic bias, October 2018.

\bibitem{bengio2013representation}
Yoshua Bengio, Aaron Courville, and Pascal Vincent.
\newblock Representation learning: A review and new perspectives.
\newblock {\em IEEE transactions on Pattern Analysis and Machine Intelligence},
  35(8):1798--1828, 2013.

\bibitem{berger2013statistical}
James~O Berger.
\newblock {\em Statistical decision theory and Bayesian analysis}.
\newblock Springer Science \& Business Media, 2013.

\bibitem{brekelmans2021improving}
Rob Brekelmans, Sicong Huang, Marzyeh Ghassemi, Greg Ver~Steeg, Roger~Baker
  Grosse, and Alireza Makhzani.
\newblock Improving mutual information estimation with annealed and
  energy-based bounds.
\newblock In {\em ICLR}, 2021.

\bibitem{chaloner1995bayesian}
Kathryn Chaloner and Isabella Verdinelli.
\newblock Bayesian experimental design: A review.
\newblock {\em Statistical Science}, pages 273--304, 1995.

\bibitem{chen2021simpler}
Junya Chen, Zhe Gan, Xuan Li, Qing Guo, Liqun Chen, Shuyang Gao, Tagyoung
  Chung, Yi~Xu, Belinda Zeng, Wenlian Lu, et~al.
\newblock Simpler, faster, stronger: Breaking the log-{K} curse on contrastive
  learners with flatnce.
\newblock {\em arXiv preprint arXiv:2107.01152}, 2021.

\bibitem{chen2020simple}
Ting Chen, Simon Kornblith, Mohammad Norouzi, and Geoffrey Hinton.
\newblock A simple framework for contrastive learning of visual
  representations.
\newblock In {\em ICML}, 2020.

\bibitem{cheng2020club}
Pengyu Cheng, Weituo Hao, Shuyang Dai, Jiachang Liu, Zhe Gan, and Lawrence
  Carin.
\newblock {CLUB}: A contrastive log-ratio upper bound of mutual information.
\newblock In {\em ICML}, 2020.

\bibitem{donsker1983asymptotic}
Monroe~D Donsker and SR~Srinivasa Varadhan.
\newblock Asymptotic evaluation of certain markov process expectations for
  large time. iv.
\newblock {\em Communications on Pure and Applied Mathematics}, 36(2):183--212,
  1983.

\bibitem{dwork2012fairness}
Cynthia Dwork, Moritz Hardt, Toniann Pitassi, Omer Reingold, and Richard Zemel.
\newblock Fairness through awareness.
\newblock In {\em Proceedings of the 3rd innovations in theoretical computer
  science conference}, 2012.

\bibitem{finn2017model}
Chelsea Finn, Pieter Abbeel, and Sergey Levine.
\newblock Model-agnostic meta-learning for fast adaptation of deep networks.
\newblock In {\em ICML}, 2017.

\bibitem{foster2021deep}
Adam Foster, Desi~R Ivanova, Ilyas Malik, and Tom Rainforth.
\newblock Deep adaptive design: Amortizing sequential bayesian experimental
  design.
\newblock In {\em ICML}, 2021.

\bibitem{foster2019variational}
Adam Foster, Martin Jankowiak, Eli Bingham, Paul Horsfall, Yee~Whye Teh, Tom
  Rainforth, and Noah Goodman.
\newblock Variational bayesian optimal experimental design.
\newblock In {\em NeurIPS}, 2019.

\bibitem{foster2020unified}
Adam Foster, Martin Jankowiak, Matthew O’Meara, Yee~Whye Teh, and Tom
  Rainforth.
\newblock A unified stochastic gradient approach to designing bayesian-optimal
  experiments.
\newblock In {\em AISTATS}, 2020.

\bibitem{gao2015efficient}
Shuyang Gao, Greg Ver~Steeg, and Aram Galstyan.
\newblock Efficient estimation of mutual information for strongly dependent
  variables.
\newblock In {\em AISTATS}, 2015.

\bibitem{gao2018demystifying}
Weihao Gao, Sewoong Oh, and Pramod Viswanath.
\newblock Demystifying fixed $ k $-nearest neighbor information estimators.
\newblock {\em IEEE transactions on Information Theory}, 64(8):5629--5661,
  2018.

\bibitem{geyer1994convergence}
Charles~J Geyer.
\newblock On the convergence of monte carlo maximum likelihood calculations.
\newblock {\em Journal of the Royal Statistical Society: Series B
  (Methodological)}, 56(1):261--274, 1994.

\bibitem{gretton2005kernel}
Arthur Gretton, Ralf Herbrich, Alexander Smola, Olivier Bousquet, Bernhard
  Sch{\"o}lkopf, et~al.
\newblock Kernel methods for measuring independence.
\newblock {\em Journal of Machine Learning Research}, 2005.

\bibitem{gretton2003kernel}
Arthur Gretton, Ralf Herbrich, and Alexander~J Smola.
\newblock The kernel mutual information.
\newblock In {\em ICASSP}, 2003.

\bibitem{grill2020bootstrap}
Jean-Bastien Grill, Florian Strub, Florent Altch{\'e}, Corentin Tallec,
  Pierre~H Richemond, Elena Buchatskaya, Carl Doersch, Bernardo~Avila Pires,
  Zhaohan~Daniel Guo, Mohammad~Gheshlaghi Azar, et~al.
\newblock Bootstrap your own latent: A new approach to self-supervised
  learning.
\newblock In {\em NeurIPS}, 2020.

\bibitem{grover2016node2vec}
Aditya Grover and Jure Leskovec.
\newblock node2vec: Scalable feature learning for networks.
\newblock In {\em SIGKDD}, 2016.

\bibitem{gupta2021controllable}
Umang Gupta, Aaron Ferber, Bistra Dilkina, and Greg~Ver Steeg.
\newblock Controllable guarantees for fair outcomes via contrastive information
  estimation.
\newblock {\em arXiv preprint arXiv:2101.04108}, 2021.

\bibitem{gutmann2010noise}
Michael Gutmann and Aapo Hyv{\"a}rinen.
\newblock Noise-contrastive estimation: A new estimation principle for
  unnormalized statistical models.
\newblock In {\em AISTATS}, 2010.

\bibitem{haochen2021provable}
Jeff~Z HaoChen, Colin Wei, Adrien Gaidon, and Tengyu Ma.
\newblock Provable guarantees for self-supervised deep learning with spectral
  contrastive loss.
\newblock {\em NeurIPS}, 2021.

\bibitem{hardt2016equality}
Moritz Hardt, Eric Price, and Nathan Srebro.
\newblock Equality of opportunity in supervised learning.
\newblock In {\em NIPS}, 2016.

\bibitem{he2020momentum}
Kaiming He, Haoqi Fan, Yuxin Wu, Saining Xie, and Ross Girshick.
\newblock Momentum contrast for unsupervised visual representation learning.
\newblock In {\em CVPR}, 2020.

\bibitem{hernandez2014predictive}
Jos{\'e}~Miguel Hern{\'a}ndez-Lobato, Matthew~W Hoffman, and Zoubin Ghahramani.
\newblock Predictive entropy search for efficient global optimization of
  black-box functions.
\newblock In {\em NIPS}, 2014.

\bibitem{hinton2002training}
Geoffrey~E Hinton.
\newblock Training products of experts by minimizing contrastive divergence.
\newblock {\em Neural Computation}, 14(8):1771--1800, 2002.

\bibitem{hiriart2012fundamentals}
Jean-Baptiste Hiriart-Urruty and Claude Lemar{\'e}chal.
\newblock {\em Fundamentals of convex analysis}.
\newblock Springer Science \& Business Media, 2012.

\bibitem{hjelm2019learning}
R~Devon Hjelm, Alex Fedorov, Samuel Lavoie-Marchildon, Karan Grewal, Phil
  Bachman, Adam Trischler, and Yoshua Bengio.
\newblock Learning deep representations by mutual information estimation and
  maximization.
\newblock In {\em ICLR}, 2019.

\bibitem{hyvarinen2005estimation}
Aapo Hyv{\"a}rinen.
\newblock Estimation of non-normalized statistical models by score matching.
\newblock {\em Journal of Machine Learning Research}, 6(Apr):695--709, 2005.

\bibitem{ivanova2021implicit}
Desislava Ivanova, Adam Foster, Steven Kleinegesse, Michael~U Gutmann, and
  Thomas Rainforth.
\newblock Implicit deep adaptive design: Policy-based experimental design
  without likelihoods.
\newblock {\em NeurIPS}, 2021.

\bibitem{kingma2014auto}
Diederik~P Kingma and Max Welling.
\newblock Auto-encoding variational {B}ayes.
\newblock In {\em ICLR}, 2014.

\bibitem{kleinegesse2021sequential}
Steven Kleinegesse, Christopher Drovandi, and Michael~U Gutmann.
\newblock Sequential bayesian experimental design for implicit models via
  mutual information.
\newblock {\em Bayesian Analysis}, 1(1):1--30, 2021.

\bibitem{kleinegesse2020bayesian}
Steven Kleinegesse and Michael~U Gutmann.
\newblock Bayesian experimental design for implicit models by mutual
  information neural estimation.
\newblock In {\em ICML}, 2020.

\bibitem{kleinegesse2021gradient}
Steven Kleinegesse and Michael~U Gutmann.
\newblock Gradient-based bayesian experimental design for implicit models using
  mutual information lower bounds.
\newblock {\em arXiv preprint arXiv:2105.04379}, 2021.

\bibitem{kraskov2004estimating}
Alexander Kraskov, Harald St{\"o}gbauer, and Peter Grassberger.
\newblock Estimating mutual information.
\newblock {\em Physical review E}, 69(6):066138, 2004.

\bibitem{lee2022r}
Kyungmin Lee and Jinwoo Shin.
\newblock {R\'enyiCL}: Contrastive representation learning with skew r\'enyi
  divergence.
\newblock In {\em NeurIPS}, 2022.

\bibitem{linsker1988self}
Ralph Linsker.
\newblock Self-organization in a perceptual network.
\newblock {\em Computer}, 21(3):105--117, 1988.

\bibitem{ma2018noise}
Zhuang Ma and Michael Collins.
\newblock Noise contrastive estimation and negative sampling for conditional
  models: Consistency and statistical efficiency.
\newblock {\em arXiv preprint arXiv:1809.01812}, 2018.

\bibitem{mackay2003information}
David~JC MacKay.
\newblock {\em Information theory, inference and learning algorithms}.
\newblock Cambridge university press, 2003.

\bibitem{maes1997multimodality}
Frederik Maes, Andre Collignon, Dirk Vandermeulen, Guy Marchal, and Paul
  Suetens.
\newblock Multimodality image registration by maximization of mutual
  information.
\newblock {\em IEEE transactions on Medical Imaging}, 16(2):187--198, 1997.

\bibitem{mcallester2018formal}
David McAllester and Karl Stratos.
\newblock Formal limitations on the measurement of mutual information.
\newblock {\em arXiv preprint arXiv:1811.04251}, 2018.

\bibitem{mnih2013learning}
Andriy Mnih and Koray Kavukcuoglu.
\newblock Learning word embeddings efficiently with noise-contrastive
  estimation.
\newblock In {\em NIPS}, 2013.

\bibitem{neal2001annealed}
Radford~M Neal.
\newblock Annealed importance sampling.
\newblock {\em Statistics and computing}, 11(2):125--139, 2001.

\bibitem{nguyen2010estimating}
XuanLong Nguyen, Martin~J Wainwright, and Michael~I Jordan.
\newblock Estimating divergence functionals and the likelihood ratio by convex
  risk minimization.
\newblock {\em IEEE transactions on Information Theory}, 56(11):5847--5861,
  2010.

\bibitem{nowozin2016f}
Sebastian Nowozin, Botond Cseke, and Ryota Tomioka.
\newblock f-{GAN}: Training generative neural samplers using variational
  divergence minimization.
\newblock In {\em NIPS}, 2016.

\bibitem{oord2018representation}
Aaron van~den Oord, Yazhe Li, and Oriol Vinyals.
\newblock Representation learning with contrastive predictive coding.
\newblock {\em arXiv preprint arXiv:1807.03748}, 2018.

\bibitem{palmer2015predictive}
Stephanie~E Palmer, Olivier Marre, Michael~J Berry, and William Bialek.
\newblock Predictive information in a sensory population.
\newblock {\em Proceedings of the National Academy of Sciences},
  112(22):6908--6913, 2015.

\bibitem{paninski2003estimation}
Liam Paninski.
\newblock Estimation of entropy and mutual information.
\newblock {\em Neural computation}, 15(6):1191--1253, 2003.

\bibitem{perez2008estimation}
Fernando P{\'e}rez-Cruz.
\newblock Estimation of information theoretic measures for continuous random
  variables.
\newblock In {\em NIPS}, 2008.

\bibitem{pluim2003mutual}
Josien~PW Pluim, JB~Antoine Maintz, and Max~A Viergever.
\newblock Mutual-information-based registration of medical images: a survey.
\newblock {\em IEEE transactions on Medical Imaging}, 22(8):986--1004, 2003.

\bibitem{poole2019variational}
Ben Poole, Sherjil Ozair, Aaron Van Den~Oord, Alex Alemi, and George Tucker.
\newblock On variational bounds of mutual information.
\newblock In {\em ICML}, 2019.

\bibitem{radford2021learning}
Alec Radford, Jong~Wook Kim, Chris Hallacy, Aditya Ramesh, Gabriel Goh,
  Sandhini Agarwal, Girish Sastry, Amanda Askell, Pamela Mishkin, Jack Clark,
  et~al.
\newblock Learning transferable visual models from natural language
  supervision.
\newblock {\em arXiv preprint arXiv:2103.00020}, 2021.

\bibitem{rainforth2018nesting}
Tom Rainforth, Rob Cornish, Hongseok Yang, Andrew Warrington, and Frank Wood.
\newblock On nesting monte carlo estimators.
\newblock In {\em International Conference on Machine Learning}, 2018.

\bibitem{reichl2016modern}
Linda~E Reichl.
\newblock {\em A modern course in statistical physics}.
\newblock John Wiley \& Sons, 2016.

\bibitem{reshef2011detecting}
David~N Reshef, Yakir~A Reshef, Hilary~K Finucane, Sharon~R Grossman, Gilean
  McVean, Peter~J Turnbaugh, Eric~S Lander, Michael Mitzenmacher, and Pardis~C
  Sabeti.
\newblock Detecting novel associations in large data sets.
\newblock {\em science}, 334(6062):1518--1524, 2011.

\bibitem{shannon1948mathematical}
Claude~E Shannon.
\newblock A mathematical theory of communication.
\newblock {\em The Bell system technical journal}, 27(3):379--423, 1948.

\bibitem{shwartz2017opening}
Ravid Shwartz-Ziv and Naftali Tishby.
\newblock Opening the black box of deep neural networks via information.
\newblock {\em arXiv preprint arXiv:1703.00810}, 2017.

\bibitem{song2020understanding}
Jiaming Song and Stefano Ermon.
\newblock Understanding the limitations of variational mutual information
  estimators.
\newblock In {\em ICLR}, 2020.

\bibitem{suzuki2008approximating}
Taiji Suzuki, Masashi Sugiyama, Jun Sese, and Takafumi Kanamori.
\newblock Approximating mutual information by maximum likelihood density ratio
  estimation.
\newblock In {\em New challenges for feature selection in data mining and
  knowledge discovery}, 2008.

\bibitem{tao2019fenchel}
Chenyang Tao, Liqun Chen, Shuyang Dai, Junya Chen, Ke~Bai, Dong Wang, Jianfeng
  Feng, Wenlian Lu, Georgiy Bobashev, and Lawrence Carin.
\newblock On {Fenchel} mini-max learning.
\newblock In {\em NeurIPS}, 2019.

\bibitem{tian2019contrastive}
Yonglong Tian, Dilip Krishnan, and Phillip Isola.
\newblock Contrastive multiview coding.
\newblock {\em arXiv preprint arXiv:1906.05849}, 2019.

\bibitem{tishby2015deep}
Naftali Tishby and Noga Zaslavsky.
\newblock Deep learning and the information bottleneck principle.
\newblock In {\em 2015 IEEE Information Theory Workshop (ITW)}, pages 1--5.
  IEEE, 2015.

\bibitem{torkkola2003feature}
Kari Torkkola.
\newblock Feature extraction by non-parametric mutual information maximization.
\newblock {\em Journal of machine learning research}, 2003.

\bibitem{towns2014xsede}
John Towns, Timothy Cockerill, Maytal Dahan, Ian Foster, Kelly Gaither, Andrew
  Grimshaw, Victor Hazlewood, Scott Lathrop, Dave Lifka, Gregory~D Peterson,
  et~al.
\newblock Xsede: accelerating scientific discovery.
\newblock {\em Computing in science \& engineering}, 16(5):62--74, 2014.

\bibitem{tschannen2020mutual}
Michael Tschannen, Josip Djolonga, Paul~K Rubenstein, Sylvain Gelly, and Mario
  Lucic.
\newblock On mutual information maximization for representation learning.
\newblock {\em ICLR}, 2020.

\bibitem{ver2013information}
Greg Ver~Steeg and Aram Galstyan.
\newblock Information-theoretic measures of influence based on content
  dynamics.
\newblock In {\em Proceedings of the sixth ACM international conference on Web
  search and data mining}, pages 3--12, 2013.

\bibitem{wen2020mutual}
Liangjian Wen, Yiji Zhou, Lirong He, Mingyuan Zhou, and Zenglin Xu.
\newblock Mutual information gradient estimation for representation learning.
\newblock In {\em ICLR}, 2020.

\bibitem{wu2011experiments}
CF~Jeff Wu and Michael~S Hamada.
\newblock {\em Experiments: planning, analysis, and optimization}, volume 552.
\newblock John Wiley \& Sons, 2011.

\bibitem{wu2018unsupervised}
Zhirong Wu, Yuanjun Xiong, Stella~X Yu, and Dahua Lin.
\newblock Unsupervised feature learning via non-parametric instance
  discrimination.
\newblock In {\em CVPR}, 2018.

\bibitem{xu2017information}
Aolin Xu and Maxim Raginsky.
\newblock Information-theoretic analysis of generalization capability of
  learning algorithms.
\newblock In {\em NIPS}, 2017.

\bibitem{xu2020theory}
Yilun Xu, Shengjia Zhao, Jiaming Song, Russell Stewart, and Stefano Ermon.
\newblock A theory of usable information under computational constraints.
\newblock In {\em ICLR}, 2020.

\bibitem{yuan2022provable}
Zhuoning Yuan, Yuexin Wu, Zi-Hao Qiu, Xianzhi Du, Lijun Zhang, Denny Zhou, and
  Tianbao Yang.
\newblock Provable stochastic optimization for global contrastive learning:
  Small batch does not harm performance.
\newblock In {\em ICML}, 2022.

\bibitem{zhang2018mitigating}
Brian~Hu Zhang, Blake Lemoine, and Margaret Mitchell.
\newblock Mitigating unwanted biases with adversarial learning.
\newblock In {\em Proceedings of the 2018 AAAI/ACM Conference on AI, Ethics,
  and Society}, pages 335--340, 2018.

\bibitem{zheng2018robust}
Sue Zheng, Jason Pacheco, and John Fisher.
\newblock A robust approach to sequential information theoretic planning.
\newblock In {\em ICML}, 2018.

\end{thebibliography}
		\bibliographystyle{plain}

		\section*{Checklist}


		\begin{enumerate}

			\item For all authors...
			\begin{enumerate}
				\item Do the main claims made in the abstract and introduction accurately reflect the paper's contributions and scope?
				\answerYes{}
				\item Did you describe the limitations of your work?
				\answerYes{}
				\item Did you discuss any potential negative societal impacts of your work?
				\answerNA{}
				\item Have you read the ethics review guidelines and ensured that your paper conforms to them?
				\answerNA{}
			\end{enumerate}

			\item If you are including theoretical results...
			\begin{enumerate}
				\item Did you state the full set of assumptions of all theoretical results?
				\answerYes{}
				\item Did you include complete proofs of all theoretical results?
				\answerYes{}
			\end{enumerate}

			\item If you ran experiments...
			\begin{enumerate}
				\item Did you include the code, data, and instructions needed to reproduce the main experimental results (either in the supplemental material or as a URL)?
				\answerYes{They are in the Supplementary Material.}
				\item Did you specify all the training details (e.g., data splits, hyperparameters, how they were chosen)?
				\answerYes{They are summarized in the Supplementary Material.}
				\item Did you report error bars (e.g., with respect to the random seed after running experiments multiple times)?
				\answerYes{Smaller variance is the highlight of this paper.}
				\item Did you include the total amount of compute and the type of resources used (e.g., type of GPUs, internal cluster, or cloud provider)?
				\answerYes{}
			\end{enumerate}

			\item If you are using existing assets (e.g., code, data, models) or curating/releasing new assets...
			\begin{enumerate}
				\item If your work uses existing assets, did you cite the creators?
				\answerYes{}
				\item Did you mention the license of the assets?
				\answerNA{}
				\item Did you include any new assets either in the supplemental material or as a URL?
				\answerNA{}
				\item Did you discuss whether and how consent was obtained from people whose data you're using/curating?
				\answerNA{}
				\item Did you discuss whether the data you are using/curating contains personally identifiable information or offensive content?
				\answerNA{}
			\end{enumerate}

			\item If you used crowdsourcing or conducted research with human subjects...
			\begin{enumerate}
				\item Did you include the full text of instructions given to participants and screenshots, if applicable?
				\answerNA{}
				\item Did you describe any potential participant risks, with links to Institutional Review Board (IRB) approvals, if applicable?
				\answerNA{}
				\item Did you include the estimated hourly wage paid to participants and the total amount spent on participant compensation?
				\answerNA{}
			\end{enumerate}

		\end{enumerate}

		\newpage
		
		\appendix
		
		{\LARGE \bf Appendix}
		%
		%

		
		\renewcommand{\thetable}{S\arabic{table}}
		\renewcommand{\thefigure}{S\arabic{figure}}
		\renewcommand{\theequation}{S\arabic{equation}}
		
		
		
		\section{Proof of Proposition 2.1 (InfoNCE Properties and derivation for some popular variational MI bounds)}
		\label{sec:infonce_appendix}
		
		\begin{proof}
			
			Now let us prove $\infonce$ is a lower bound to MI and under proper conditions this estimate is tight. Our proof is based on establishing that $\infonce$ is a multi-sample extension of the $\NWJ$ bound. For completeness, we first repeat the proof for $\BA$ and $\UBA$ below, and then show $\UBA$ leads to $\NWJ$ and its multi-sample variant $\infonce$. 
			
			We can bound MI from below using an variational distribution $q(y|x)$ as follows: 
			\beqs
			I(X, Y) & = & \EE_{p(x,y)}\left[\log \frac{p(x,y)}{p(x)p(y)}\right]\\
			& = & \EE_{p(x,y)}\left[\log \frac{p(y|x)p(x)q(y|x)}{p(x)p(y)q(y|x)}\right] \text{  {\it \quad \# q(y|x) is the variational distribution}}\\
			& = & \EE_{p(x,y)}\left[\log \frac{q(y|x)}{p(y)}\right] + \EE_{p(x)}[\KL(p(y|x)||q(y|x))]\\
			& \geq & \EE_{p(x,y)}\left[\log q(y|x) - \log p(y) \right] \triangleq I_{\BA}(X,Y;q) \label{eq:ba}
			\eeqs
			In sample-based estimation of MI, we do not know the ground-truth marginal density $p(y)$, which makes the above $\BA$ bound impractical. However, we can carefully choose an energy-based variational density that ``cancels out'' $p(y)$:
			\beq
			q_f(y|x) = \frac{p(y)}{Z_f(x)}e^{f(x,y)}, \quad Z_f(x) \triangleq \EE_{p(y)}[e^{f(x,y)}]. 
			\eeq
			This auxiliary function $f(x,y)$ is known as the tilting function in importance weighting literature. Hereafter, we will refer to it the {\it critic function} in accordance with the nomenclature used in contrastive learning literature.  The partition function $Z_f(x)$ normalizes this $q(y|x)$. Plugging this $q_f(y|x)$ into $I_\BA$ yields:
			\beqs
			I_{\BA}(X,Y;q_f) & = & \EE_{p(x,y)}[f(x,y)+\log (p(y))-\log (Z(x))-\log p(y)]\\
			& = & \EE_{p(x,y)}[f(x,y)]-\EE_{p(x)}[\log (Z_f(x))] \triangleq I_{\UBA}(X,Y;f) \label{eq:uba}
			\eeqs
			
			For $x,a>0$, we have inequality $\log (x)\leq\frac{x}{a}+\log (a)-1$. By setting $x\leftarrow Z(y)$ and $a\leftarrow e$, we have 
			\beq
			\log (Z(y))\leq e^{-1}{Z(y)}.
			\eeq 
			Plugging this result into (\ref{eq:uba}) we recover the celebrated $\NWJ$ bound, which lower bounds $I_{\UBA}$: 
			\beq
			I_{\UBA}(X,Y) \geq \EE_{p(x,y)}[f(x,y)]-e^{-1}\EE_{p(x)}[Z_f(x)] \triangleq I_{\NWJ}(X,Y;f). \label{eq:nwj}
			\eeq
			When $f(x,y)$ takes the value of 
			\beq
			f^*(x,y) = 1 + \log \frac{p(x|y)}{p(x)}, 
			\eeq
			this bound is sharp.

			
			We next extend these bounds to the multi-sample setting. In this setup, we are given one paired sample $(x_1, y_1)$ from $p(x,y)$ ({\it i.e.}, the positive sample) and $K-1$ samples independently drawn from $p(y)$ ({\it i.e.}, the negative samples). Note that when we average over $x$ wrt $p(x)$ to compute the MI, this equivalent to comparing positive pairs from $p(x,y)$ and negative pairs artificially constructed by $p(x)p(y)$. By the independence between $X_1$ and $Y_{k>1}$, we have 
			
			\beq
			I(X;Y_{1:K}) = \EE_{p(x_1,y_1)\prod_{k>1}p(y_k)}\left[\frac{p(x_1,y_1)\prod_{k>1}p(y_k)}{p(x_1) \prod_k p(y_k)}\right] = \EE_{p(x_1,y_1)}\left[\frac{p(x_1,y_1)}{p(x_1)p(y_1)} \right] = I(X;Y)
			\eeq
			
			So for arbitrary multi-sample critic $f(x; y_{1:K})$, we know 
			\beq
			I(X;Y) = I(X_1;Y_{1:K}) \geq I_{\NWJ}(X_1,Y_{1:K};f) = \EE_{p(x_1,y_1)\prod_{k>1}p(y_k)}[f(x_1,y_{1:K})]-e^{-1}\EE_{p(x)}[Z_f(x)]
			\eeq
			
			Now let us set 
			\beq
			\tilde{f}(x_1;y_{1:K}) = 1 + \log \frac{e^{g(x_1, y_1)}}{m(x_1;y_{1:K})}, \quad m(x_1;y_{1:K}) = \frac{1}{K} \sum_k e^{g(x_1,y_k)}. 
			\label{eq:fg}
			\eeq
			
			\begin{equation*}
				\begin{split}
					I_{\NWJ}(X_1,Y_{1:K};\tilde{f}) = & \EE_{p(x_1,y_1)p^{K-1}(y_k)}\left[ 1+ \log \frac{e^{g(x_1, y_1)}}{m(x_1;y_{1:K})} \right] - \EE_{p(x')p^K(y')} \left[e^{-1+1 +\log \frac{e^{g(x_1', y_1')}}{m(x_1'; y_{1:K}' )} }\right] \\
					& = \EE_{p(x_1,y_1)p^{K-1}(y_k)}\left[ 1+ \log \frac{e^{g(x_1, y_1)}}{m(x_1;y_{1:K})} \right] - \EE_{p(x')p^K(y')} \left[ \frac{e^{g(x_1', y_1')}}{m(x_1'; y_{1:K}' )} \right] 
				\end{split}
			\end{equation*}
			
			Due to the symmetry of $\{y_k\}_{k=1}^K$, we have 
			\beq
			\EE_{p(x')p^K(y')} \left[ \frac{e^{g(x_1', y_1')}}{m(x_1'; y_{1:K}' )} \right]  = \EE_{p(x')p^K(y')} \left[ \frac{e^{g(x_1', y_k')}}{m(x_1'; y_{1:K}' )} \right].
			\eeq
			So this gives
			\beq
			\EE_{p(x')p^K(y')} \left[ \frac{e^{g(x_1', y_1')}}{m(x_1'; y_{1:K}' )} \right] = \EE_{p(x')p^K(y')} \left[ \frac{\frac{1}{K}e^{g(x_1', y_k')}}{m(x_1'; y_{1:K}' )} \right]  = 1,
			\eeq
			and one can easily see this recovers the $K$-sample $\infonce$ defined in (3)
			\beq
			I_{\NWJ}(X_1,Y_{1:K};\tilde{f}) = \EE_{p(x_1,y_1)p^{K-1}(y_k)}\left[\log \frac{e^{g(x_1, y_1)}}{m(x_1;y_{1:K})} \right] = I_{\infonce}^K(X;Y|g)
			\eeq
			
			Now we need to show this bound is sharp when $K\rightarrow \infty$. We only need to show that for some choice of $g(x,y)$, the inequality holds asymptotically. Recall the $\NWJ$'s optimal critic takes value of $f^*(x,y) = 1+ \frac{p(x|y)}{p(x)}$, so with reference to (\ref{eq:fg}) let us plug in $g^*(x,y) = \frac{p(y|x)}{p(y)}$ into $\infonce$
			
			\beqs
			\CL_K^* & = & \EE_{p^K} \left[ \log \left( \frac{f^*(x_k,y_k)}{f^*(x_k,y_k) + \sum_{k'\neq k} f^*(x_k,y_{k'})} \right) \right] + \log K \\
			& = & -\EE\left[ \log \left( 1+\frac{p(y)}{p(y|x)} \sum_{k'} \frac{p(y_{k'}|x_k)}{p(y_{k'})} \right)\right] + \log K \\
			& \approx & -\EE\left[ \log \left( 1+\frac{p(y)}{p(y|x)} (K-1) \EE_{y_{k'}} \frac{p(y_{k'}|x_k)}{p(y_{k'})} \right)\right] + \log K\\
			& = &  -\EE\left[ \log \left( 1+\frac{p(y_k)}{p(y_k|x_k)} (K-1) \right)\right] + \log K\\
			& \approx & -\EE\left[\log\frac{p(y)}{p(y|x)}\right] - \log (K-1) + \log K\\
			(K\rightarrow \infty) & \rightarrow & I(X;Y)
			\eeqs
			This concludes our proof.
		\end{proof}
		
		\section{Proof of Proposition 2.2 ($\FLO$ lower bounds MI)}
		
		\begin{proof}
			The proof is given in line 133-140 in the main text. Basically we have applied the Fenchel duality trick to the $\log$ term in the $\UBA$ bound. Note that unlike $\UBA$, our $\FLO$ bound can be unbiased estimated with finite samples (as $\UBA$ requires an infinite sum inside its $\log$ term, which makes finite-sample empirical estimate biased per Jensen's inequality). 
		\end{proof}
		
		\section{Proof of Proposition 2.3, Corollary 2.4 ($\FLO$ tightness, meaning of $u(x,y)$)}
		
		\begin{proof}
			The proof is given in the main text, more specifically the paragraph preceding Proposition 2.3. 
		\end{proof}

		\section{Gradient Analysis of FLO (More Detailed)}
		\vspace{-8pt} 
		\label{sec:grad}
		
		To further understand the workings of $\FLO$, let us inspect the gradient of model parameters. Recall the intractable $\UBA$ MI estimator can be re-expressed in the following form:
		\beq
		I_{\UBA'}(g_\theta) = \EE_{p(x,y)}[-\log \EE_{p(y')}[\exp(g_{\theta}(x,y')-g_{\theta}(x,y))]]
		\eeq
		In this part, we want to establish the intuition that $\nabla_{\theta}\{ I_{\FLO}(u_\phi, g_\theta) \} \approx  \nabla_{\theta} \{I_{\UBA'}(g_\theta)\}$, where 
		\beq
		I_{\FLO}(u_{\phi},g_{\theta}) \triangleq - \left\{ u_{\phi}(x,y) + \EE_{p(y')}[ \exp(-u_{\phi}(x,y) + g_{\theta}(x,y') - g_{\theta}(x,y))] \right\}
		\eeq
		is our $\FLO$ estimator.

		By defining
		\beq
		\CE_{\theta}(x,y) \triangleq \frac{1}{ \EE_{p(y')}[\exp(g_{\theta}(x,y')-g_{\theta}(x,y))]}, 
		\eeq
		we have
		\beq
		\nabla_{\theta} \left\{ \frac{1}{\CE_{\theta}(x,y)} \right\} = - \frac{\nabla \CE_{\theta}(x,y)}{(\CE_{\theta}(x,y))^2} = -\frac{\nabla_{\theta} \log \CE_{\theta}(x,y)}{\CE_{\theta}(x,y)},  
		\eeq
		and 
		\beqs
		\nabla_{\theta} \left\{ \frac{1}{\CE_{\theta}(x,y)} \right\} & = & \nabla_{\theta}\EE_{p(y')}[\left\{ \exp(g_{\theta}(x,y') - g_{\theta}(x,y)) \right\}] \\
		& = & \EE_{p(y')}[\nabla_{\theta}\left\{ \exp(g_{\theta}(x,y') - g_{\theta}(x,y)) \right\}] .
		\eeqs
		
		We know fixing $g_{\theta}(x,y)$, the corresponding optimal $u_{\theta}^*(x,y)$ maximizing $\FLO$ is given by 
		\beq
		u_{\theta}^*(x, y) = \log \EE_{p(y')}[ \exp(g_{\theta}(x,y') - g_{\theta}(x,y))] = -\log \CE_{\theta}(x,y).
		\eeq
		This relation implies the view that $\exp^{-u_{\phi}(x,y)}$ is optimized to approximate $\CE_{\theta}(x,y)$. And to emphasize this point, we now write $\hat{\CE}_{\theta}(x,y) \triangleq e^{-u_{\phi}(x,y)}$. Assuming this approximation is sufficiently accurate ({\it i.e.}, $\CE_{\theta} \approx \hat{\CE}_{\theta}$), we have 
		\beqs
		\nabla_{\theta}\{ I_{\FLO}(u_{\phi},g_{\theta}) \} & = & - \EE_{p(x,y)}\left[ e^{-u_{\phi}(x,y)}\EE_{p(y')}[ \nabla_{\theta} \exp(g_{\theta}(x,y') - g_{\theta}(x,y))]\right] \\
		& = &  \EE_{p(x,y)}\left[ \frac{e^{-u_{\phi}(x,y)}}{\CE_{\theta}(x,y)} \nabla_{\theta} \log \CE_{\theta}(x,y)\right]\\
		& = &  \EE_{p(x,y)}\left[ \frac{\hat{\CE}_{\theta}(x,y)}{\CE_{\theta}(x,y)} \nabla_{\theta} \log \CE_{\theta}(x,y)\right] \\
		& \approx &  \EE_{p(x,y)}\left[  \nabla_{\theta} \log \CE_{\theta}(x,y) \right] \\
		& = &  \nabla_{\theta} \left\{ \EE_{p(x,y)}[\log \CE_{\theta}(x,y)] \right\} = \nabla_{\theta} \{I_{\UBA'}(g_\theta)\}.
		\eeqs
		
		While the above relation shows we can use $\FLO$ to amortize the learning of $\UBA$, one major caveat with the above formulation is that $\hat{u}(x,y)$ has to be very accurate for it to be valid. As such, one needs to solve a cumbersome nested optimization problem: update $g_{\theta}$, then optimize $u_{\phi}$ until it converges before the next update of $g_{\theta}$. Fortunately, we can show that is unnecessary: the convergence can be established under much weaker conditions, which justifies the use of simple simultaneous stochastic gradient descent for both $(\theta, \phi)$ in the optimization of $\FLO$. 
		
		\section{Proof of Proposition 2.5 ($\FLO$ Convergence under SGD)}
		
		Our proof is based on the convergence analyses of generalized stochastic gradient descent from \citep{tao2019fenchel}. We cite the main assumptions and results below for completeness. 
		
		\begin{defn}[Generalized SGD, Problem 2.1 in \citep{tao2019fenchel}] 
			Let $h(\theta;\omega), \omega \sim p(\omega)$ be an unbiased stochastic gradient estimator for objective $f(\theta)$, $\{ \eta_t > 0 \}$ is the fixed learning rate schedule, $\{\xi_t>0\}$ is the random perturbations to the learning rate. We want to solve for $\nabla f(\theta) = 0$ with the  iterative scheme
			$
			\theta_{t+1} = \theta_t + \tilde{\eta}_t \, h(\theta_t; \omega_t), 
			$
			where $\{\omega_t\}$ are iid draws and $\tilde{\eta}_t = \eta_t \xi_t$ is the randomized learning rate. 
		\end{defn}
		
		\begin{assumption}(Standard regularity conditions for Robbins-Monro stochastic approximation, Assumption D.1 \citep{tao2019fenchel}).
			\label{thm:assum}
			\vspace{-1em}
			\begin{enumerate}
				\setlength\itemsep{2pt}
				\item[$A1.$] $h(\theta)\triangleq\EE_{\omega}[h(\theta;\omega)]$ is Lipschitz continuous;
				\item[$A2.$] The ODE $\dot{\theta} = h(\theta)$ has a unique equilibrium point $\theta^*$, which is globally asymptotically stable;
				\item[$A3.$] The sequence $\{ \theta_t \}$ is bounded with probability $1$;
				\item[$A4.$] The noise sequence $\{ \omega_t \}$ is a martingale difference sequence;
				\item[$A5.$] For some finite constants $A$ and $B$ and some norm  $\| \cdot \|$ on $\BR^d$, $\EE[\| \omega_t \|^2] \leq A + B \| \theta_t \|^2$ a.s. $\forall t \geq 1$. 
			\end{enumerate}
		\end{assumption}

		\begin{prop}[Generalized stochastic approximation, Proposition 2.2 in \citep{tao2019fenchel}] 
			\label{thm:gsa}
			Under the standard regularity conditions listed in Assumption \ref{thm:assum}, we further assume $\sum_t \EE[\tilde{\eta}_t] = \infty$ and $\sum_t \EE[\tilde{\eta}_t^2] < \infty$. 
			Then $\theta_n \rightarrow \theta^*$ with probability $1$ from any initial point $\theta_0$. 
		\end{prop}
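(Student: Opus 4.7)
The plan is to reduce the statement to a classical Kushner--Clark / Borkar-style stochastic approximation theorem by first passing the moment conditions on the random step sizes $\tilde{\eta}_t$ to almost-sure conditions on the realized step sizes, and then invoking the ODE method: show that the continuous-time interpolation of $\{\theta_t\}$ is an asymptotic pseudo-trajectory of $\dot{\theta}=h(\theta)$, and conclude via the unique-globally-asymptotically-stable equilibrium hypothesis (A2) that $\theta_t\to\theta^*$ almost surely.

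First I would upgrade the expectation conditions on $\{\tilde{\eta}_t\}$ to pathwise conditions. Since $\tilde{\eta}_t\ge 0$, Tonelli gives $\EE[\sum_t\tilde{\eta}_t^2]=\sum_t\EE[\tilde{\eta}_t^2]<\infty$, so $\sum_t\tilde{\eta}_t^2<\infty$ a.s. For $\sum_t\tilde{\eta}_t=\infty$ a.s., let $S_n=\sum_{t\le n}\tilde{\eta}_t$ and write $S_n-\EE[S_n]$ as a sum of centered random variables whose variances are bounded by $\sum_t\EE[\tilde{\eta}_t^2]<\infty$; by Kolmogorov's two-series theorem $S_n-\EE[S_n]$ converges a.s., and since $\EE[S_n]\to\infty$ by hypothesis, $S_n\to\infty$ a.s.\ as well. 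This restores the classical Robbins--Monro step-size profile path by path.

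Next I would perform the standard martingale decomposition. Writing $M_t\triangleq h(\theta_t;\omega_t)-h(\theta_t)$, the update becomes
$$\theta_{t+1}-\theta_t \;=\; \tilde{\eta}_t\,h(\theta_t)\;+\;\tilde{\eta}_t M_t,$$
where unbiasedness of the stochastic gradient and A4 make $\{M_t\}$ a martingale difference sequence with respect to the natural filtration. Define $Y_T\triangleq\sum_{t\le T}\tilde{\eta}_t M_t$. Conditional on $\theta_t$ and $\tilde{\eta}_t$, A5 yields $\EE[\|\tilde{\eta}_t M_t\|^2]\le\tilde{\eta}_t^2(A+B\|\theta_t\|^2)$; combined with A3 (boundedness of $\{\theta_t\}$ a.s.) and the summability $\sum_t\tilde{\eta}_t^2<\infty$ just established, the quadratic variation of $Y_T$ is a.s.\ finite, so the martingale convergence theorem gives $Y_T\to Y_\infty$ a.s. Hence the cumulative noise contribution is a.s.\ bounded.

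Finally I would run the ODE tracking argument. Introduce the rescaled time $\tau_t\triangleq\sum_{s\le t}\tilde{\eta}_s$ (a.s.\ diverging by step one) and let $\bar{\theta}(\tau)$ be the piecewise-linear interpolation of $\{\theta_t\}$ in this clock. Using Lipschitz continuity of $h$ (A1), boundedness (A3), and the a.s.\ convergence of the noise increments $Y_T$, a Gronwall-type estimate yields the Kushner--Clark tracking property: for every $T>0$, $\sup_{0\le s\le T}\|\bar{\theta}(\tau+s)-\Phi_s(\bar{\theta}(\tau))\|\to 0$ a.s., where $\Phi$ denotes the flow of $\dot{\theta}=h(\theta)$. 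Consequently every limit point of $\{\theta_t\}$ lies in the $\omega$-limit set of the ODE, which by A2 is the singleton $\{\theta^*\}$; together with boundedness this forces $\theta_t\to\theta^*$ a.s.

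The main obstacle I anticipate is handling the randomness of the step sizes rigorously, since the classical tracking lemmas are stated for deterministic schedules. In particular one must ensure the filtration is rich enough that $\tilde{\eta}_t$ is predictable (so that $\tilde{\eta}_t M_t$ remains a martingale difference), and that the pathwise upgrade of the series conditions, plus the coupling between $\tilde{\eta}_t$ and $\omega_t$, does not spoil the quadratic-variation bound. Once these measurability and independence issues are reconciled, the remainder is a routine specialization of Borkar's ODE framework to a globally asymptotically stable equilibrium.
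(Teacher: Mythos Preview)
The paper does not supply its own proof of this proposition: it is quoted verbatim as Proposition~2.2 from \cite{tao2019fenchel} and then invoked as a black-box lemma in the proof of Proposition~\ref{thm:conv}. So there is no in-paper argument to compare against; your reconstruction via the Kushner--Clark/Borkar ODE method is precisely the standard route by which such generalized Robbins--Monro results are established, and the outline (martingale decomposition, a.s.\ summability of the noise, asymptotic pseudo-trajectory, then A2) is sound.

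The one technical point worth tightening is the step you already flag. Passing from $\sum_t\EE[\tilde{\eta}_t]=\infty$ to $\sum_t\tilde{\eta}_t=\infty$ a.s.\ via a two-series/martingale argument requires some independence or martingale-difference structure on the centered increments $\tilde{\eta}_t-\EE[\tilde{\eta}_t]$, and Assumptions A1--A5 as listed do not provide it; without such structure the pathwise divergence can fail (e.g.\ $\tilde{\eta}_t=X/t$ with $X$ a single Bernoulli variable). The cited source presumably imposes enough on the perturbations $\xi_t$ (i.i.d., or predictable with uniformly bounded conditional second moments) to close this gap, so in a self-contained write-up you should state that hypothesis explicitly before invoking Kolmogorov's theorem. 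The related measurability concern---that $\tilde{\eta}_t$ be predictable so that $\tilde{\eta}_tM_t$ remains a martingale difference---is likewise handled by the same additional assumption. Once these are in place, the rest of your argument goes through routinely.
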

		
		\begin{assumption}(Weaker regularity conditions for generalized Robbins-Monro stochastic approximation, Assumption G.1 in \citep{tao2019fenchel}).
			\label{thm:assum_gen}
			\vspace{-1em}
			\begin{enumerate}
				\setlength\itemsep{2pt}
				\item[$B1.$] The objective function $f(\theta)$ is second-order differentiable.
				\item[$B2.$] The objective function $f(\theta)$ has a Lipschitz-continuous gradient, i.e., there exists a constant $L$ satisfying 
				\begin{equation*}
					-LI\preceq \nabla^2f(\theta)\preceq LI,
				\end{equation*}
				\item[$B3.$] The noise has a bounded variance, i.e., there exists a constant $\sigma>0$ satisfying $\mathbb{E}\left[\left\|h(\theta_t;\omega_t) - \nabla f(\theta_t)\right\|^2\right]\leq \sigma^2$.
			\end{enumerate}
		\end{assumption}

		\begin{prop}[Weaker convergence results, Proposition G.2 in \citep{tao2019fenchel}]
			\label{thm:weak_conv}
			Under the technical conditions listed in Assumption \ref{thm:assum_gen}, the SGD solution $\{ \theta_t \}_{t>0}$ updated with generalized Robbins-Monro sequence ($\tilde{\eta}_t$: $\sum_t \EE[\tilde{\eta}_t] = \infty$ and $\sum_t \EE[\tilde{\eta}_t^2] < \infty$) converges to a stationary point of $f(\theta)$ with probability $1$ (equivalently, $\mathbb{E}\left[\|\nabla f(\theta_t)\|^2\right]\rightarrow 0$ as $t\rightarrow \infty$).
		\end{prop}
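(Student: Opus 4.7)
The plan is to follow the classical template for stochastic gradient convergence analysis under smoothness and bounded-variance noise, adapting it to accommodate the randomized learning rate $\tilde{\eta}_t = \eta_t \xi_t$. The starting point is a one-step ``ascent lemma'' derived from the Lipschitz-gradient condition B2. Specifically, applying the standard quadratic upper bound for $L$-smooth functions to the iterate $\theta_{t+1} = \theta_t + \tilde{\eta}_t h(\theta_t;\omega_t)$ yields
\beq
f(\theta_{t+1}) \geq f(\theta_t) + \tilde{\eta}_t \langle \nabla f(\theta_t), h(\theta_t;\omega_t)\rangle - \tfrac{L}{2}\tilde{\eta}_t^2 \|h(\theta_t;\omega_t)\|^2. \nonumber
\eeq
Taking conditional expectation with respect to the natural filtration $\mathcal{F}_t$ generated by $\{\omega_s\}_{s<t}$, and using unbiasedness $\EE[h(\theta_t;\omega_t)\mid \mathcal{F}_t]=\nabla f(\theta_t)$ together with the variance bound from B3 to control $\EE[\|h(\theta_t;\omega_t)\|^2\mid \mathcal{F}_t] \leq \|\nabla f(\theta_t)\|^2 + \sigma^2$, gives a recursion of the form
\beq
\EE[f(\theta_{t+1})\mid \mathcal{F}_t] \geq f(\theta_t) + \tilde{\eta}_t\bigl(1-\tfrac{L\tilde{\eta}_t}{2}\bigr)\|\nabla f(\theta_t)\|^2 - \tfrac{L\sigma^2}{2}\tilde{\eta}_t^2. \nonumber
\eeq

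Next I would telescope this inequality. After rearranging and taking full expectations, one obtains
\beq
\sum_{t=0}^{T-1} \EE\!\left[\tilde{\eta}_t\bigl(1-\tfrac{L\tilde{\eta}_t}{2}\bigr)\|\nabla f(\theta_t)\|^2\right] \leq \EE[f(\theta_T)] - f(\theta_0) + \tfrac{L\sigma^2}{2}\sum_{t=0}^{T-1}\EE[\tilde{\eta}_t^2]. \nonumber
\eeq
Since $f$ is bounded along any bounded trajectory (by B1--B2) and the Robbins-Monro condition $\sum_t \EE[\tilde{\eta}_t^2]<\infty$ makes the noise term finite, the right-hand side is bounded uniformly in $T$. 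Moreover $\EE[\tilde{\eta}_t^2]\to 0$ forces $\tilde{\eta}_t \to 0$ in an $L^2$ sense, so for all sufficiently large $t$ the factor $(1-L\tilde{\eta}_t/2)$ is bounded below by $1/2$. Thus the positive series $\sum_t \EE[\tilde{\eta}_t\,\|\nabla f(\theta_t)\|^2]$ is finite. Combined with the divergence condition $\sum_t \EE[\tilde{\eta}_t]=\infty$, this forces $\liminf_{t\to\infty}\EE[\|\nabla f(\theta_t)\|^2]=0$.

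Finally, to upgrade $\liminf$ to $\lim$ and to the almost-sure statement $\mathbb{E}[\|\nabla f(\theta_t)\|^2]\to 0$, I would invoke the Robbins-Siegmund nonnegative-supermartingale convergence theorem applied to the sequence $\{f(\theta^*)-f(\theta_t)\}$ (shifted to be nonnegative via boundedness from B1--B2), whose drift is controlled by the recursion above. This yields almost-sure convergence of $f(\theta_t)$ and, combined with the summability argument, pins down $\|\nabla f(\theta_t)\|\to 0$. The main technical obstacle is the handling of the randomized step size $\tilde{\eta}_t$: one must verify that $\xi_t$ can be absorbed into $\mathcal{F}_t$ cleanly so that $\tilde{\eta}_t$ is predictable, and one must carefully separate $\tilde{\eta}_t$ from the noise $\omega_t$ when taking conditional expectations. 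A secondary subtlety is ensuring that the iterates $\theta_t$ remain in a region where B2 applies (which is immediate if one assumes global Lipschitz gradient, but requires a localization argument otherwise).
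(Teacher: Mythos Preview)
The paper does not actually prove this proposition: it is quoted verbatim as Proposition~G.2 from \cite{tao2019fenchel} and then invoked as a black-box lemma in the (two-line) proof of Proposition~2.5. So there is no ``paper's own proof'' to compare against here; the paper simply cites the result.

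Your proposal, by contrast, reconstructs a proof from scratch along the classical smoothness-plus-bounded-variance template (one-step ascent lemma from $L$-smoothness, conditional expectation using unbiasedness and B3, telescoping to get $\sum_t \EE[\tilde\eta_t\|\nabla f(\theta_t)\|^2]<\infty$, then Robbins--Siegmund to upgrade $\liminf$ to $\lim$). This is exactly the standard argument one would expect to find in the cited reference or any SGD convergence text, and it is essentially sound. The two caveats you flag are genuine: (i) the measurability of $\tilde\eta_t$ with respect to $\CF_t$ must be sorted out for the conditional-expectation step to go through cleanly, and (ii) the telescoping bound requires $f$ to be bounded above along the trajectory, which is not explicitly part of Assumption~\ref{thm:assum_gen} and would need to be added or argued separately. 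A minor imprecision: you write that $\EE[\tilde\eta_t^2]\to 0$ gives $\tilde\eta_t\to 0$ ``in an $L^2$ sense,'' but what you actually need for the factor $(1-L\tilde\eta_t/2)\geq 1/2$ eventually is almost-sure convergence, which follows from $\sum_t\EE[\tilde\eta_t^2]<\infty$ via Borel--Cantelli rather than from $L^2$ convergence alone.
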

		
		\begin{proof}
			Since $\hat{\CE}_{\theta_t}/\CE_{\theta_t}$ is bounded between $[a,b]$ ($0<a<b<\infty$), results follow by a direct application of Proposition \ref{thm:gsa} and Proposition \ref{thm:weak_conv}. 
		\end{proof}

		\section{Gaussian Toy Model Experiments}
		
		First, we start validating the properties and utility of the proposed $\FLO$ estimator by comparing it to competing solutions with the Gaussian toy models. Specifically, for the $2d$-D Gaussian model with correlation $\rho$, we have $X \in \BR^d$ and $Y \in \BR^d$ with covariance structure 
		\beq
		\cov[[X]_i, [X]_j] = \delta_{ij}, \cov[[Y]_i, [Y]_j] = \delta_{ij}, \cov[[X]_i, [Y]_j] = \delta_{ij}\cdot\rho
		\eeq
		This allows us to have the ground-truth MI $I(X;Y) = -\frac{d}{2} \log (1-\rho^2)$ for reference and easily tune the difficulty of the task via varying $d$ and $\rho$.
		

		\subsection{Choice of baselines}
		
		We choose $\TUBA$, $\NWJ$, $\infonce$ and $\alpha$-$\infonce$ as our baselines. 
		Note $\alpha$-$\infonce$ results are not reported in the main paper because we do not see a clear advantage via tuning $\alpha$
		$\NWJ$ and $\infonce$ are the two most popular estimators in practice that are employed without additional hacks. $\TUBA$ is included for its close relevance to $\FLO$ ({\it i.e.}, optimizing $u(x)$ instead of $u(x,y)$, and being non-contrastive). We do not include $\DV$ here because we find $\DV$ needs excessively a large negative sample size $K$ to work. Variants like $\MINE$ are excluded for involving additional tuning parameters or hacks which complicates our analyses. The proposed $\FDV$ estimator is also excluded from our analyses for bound comparison since it includes $\hat{I}_{\DV}$ in the estimator. Note that although not suitable for MI estimation, we find $\FDV$ works quite well in representation learning settings where the optimization of MI is targeted. This is because in $\FDV$, the primal term $\hat{I}_{\DV}$ term does not participate gradient computation, so it does not yield degenerated performance as that of $\DV$. In the results reported below, we fixed $\alpha=0.8$ for better visualization. 
		
		\subsection{Experimental setups}
		
		We use the following baseline setup for all models unless otherwise specified. For the critic functions $g(x,y)$, $u(x,y)$ and $u(x)$, we use multi-layer perceptron (MLP) network construction with hidden-layers $512 \times 512$ and \texttt{ReLU} activation. For optimizer, we use Adam and set learning rate to $10^{-4}$ unless otherwise sepcified. A default batch-size of $128$ is used for training. To report the estimated MI, we use $10k$ samples and take the average. To visualize variance, we plot the decimal quantiles at $\{10\%, 20\%, \cdots, 80\%, 90\%\}$ and color code with different shades. We sample fresh data point in each iteration to avoid overfitting the data. All models are trained for $\sim 5,000$ iterations (each epoch samples $10k$ new data points, that is $78$ iterations per epoch for a total of $50$ epochs). 
		

		\subsection{PMI approximation with $u(x,y)$} 
		
		For Figure \ref{fig:contour}, we use the $2$-D Gaussian with $\rho = 0.5$ to compare the estimated $u(x,y), g(x,y)$ with the ground-truth PMI, and the contour plot is obtained with a grid resolution of $2.5 \times 10^{-2}$. This confirms our analyses that the optimized $u(x,y)$ approximates the true PMI $-\log \frac{p(x,y)}{p(x)p(y)}$.  
		
		\begin{figure}
			\centering
			\includegraphics[width=\textwidth]{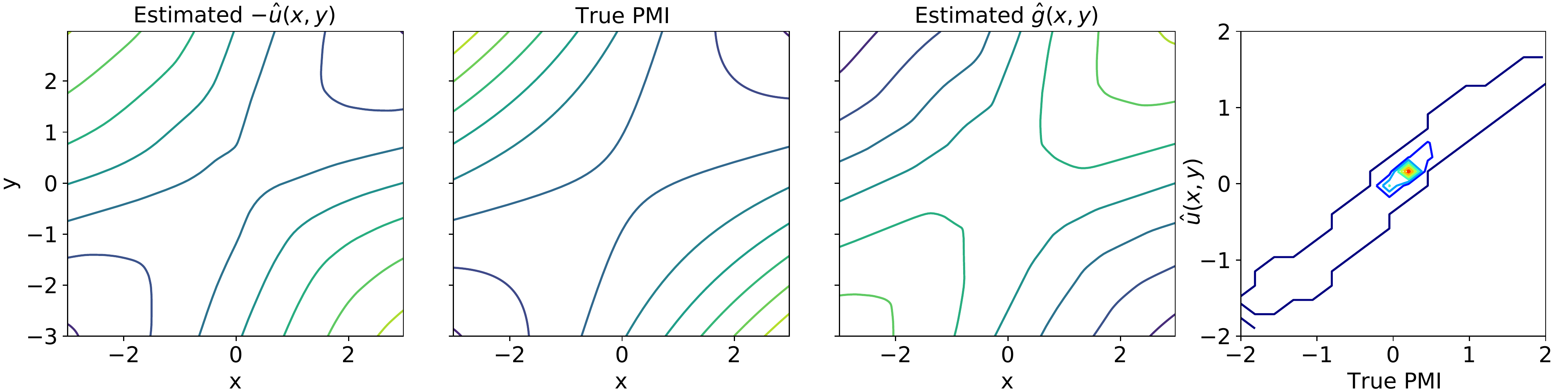}
			\caption{Comparison of estimated $u(x,y), g(x,y)$ and the ground-truth PMI $-\log \frac{p(x,y)}{p(x)p(y)}$ using the 2D Gaussian experiment. This confirms our analyses that the optimized $u(x,y)$ approximates the true PMI.}
			\label{fig:contour}
		\end{figure}
		
		\subsection{Ablation study: efficiency of parameter sharing for $g(x,y)$ and $u(x,y)$.} For the shared parameterization experiment for $\FLO$ (Figure \ref{fig:onefunc}), we used the more challenging $20$-D Gaussian with $\rho=0.5$, and trained the network with learning rate $10^{-3}$ and $10^{-4}$ respectively. We repeat the experiments for $10$ times and plot the distribution of the MI estimation trajectories. Note that we intentionally used a setup such that the MLP network architecture we used is inadequate to get a sharp estimate (both for $\FLO$ and other MI estimators), which simulates the realistic scenario that the ground-truth MI is infeasible due to architecture constraints (refer to our ablation study on the influence network capacity in Sec \ref{sec:capacity}). We observe the $\FLO$ estimator with a shared network learns faster than its separate network counterpart under both learning rates, validating the superior efficiency of parameter sharing.

		\begin{figure}
			\centering
			\includegraphics[width=.7\textwidth]{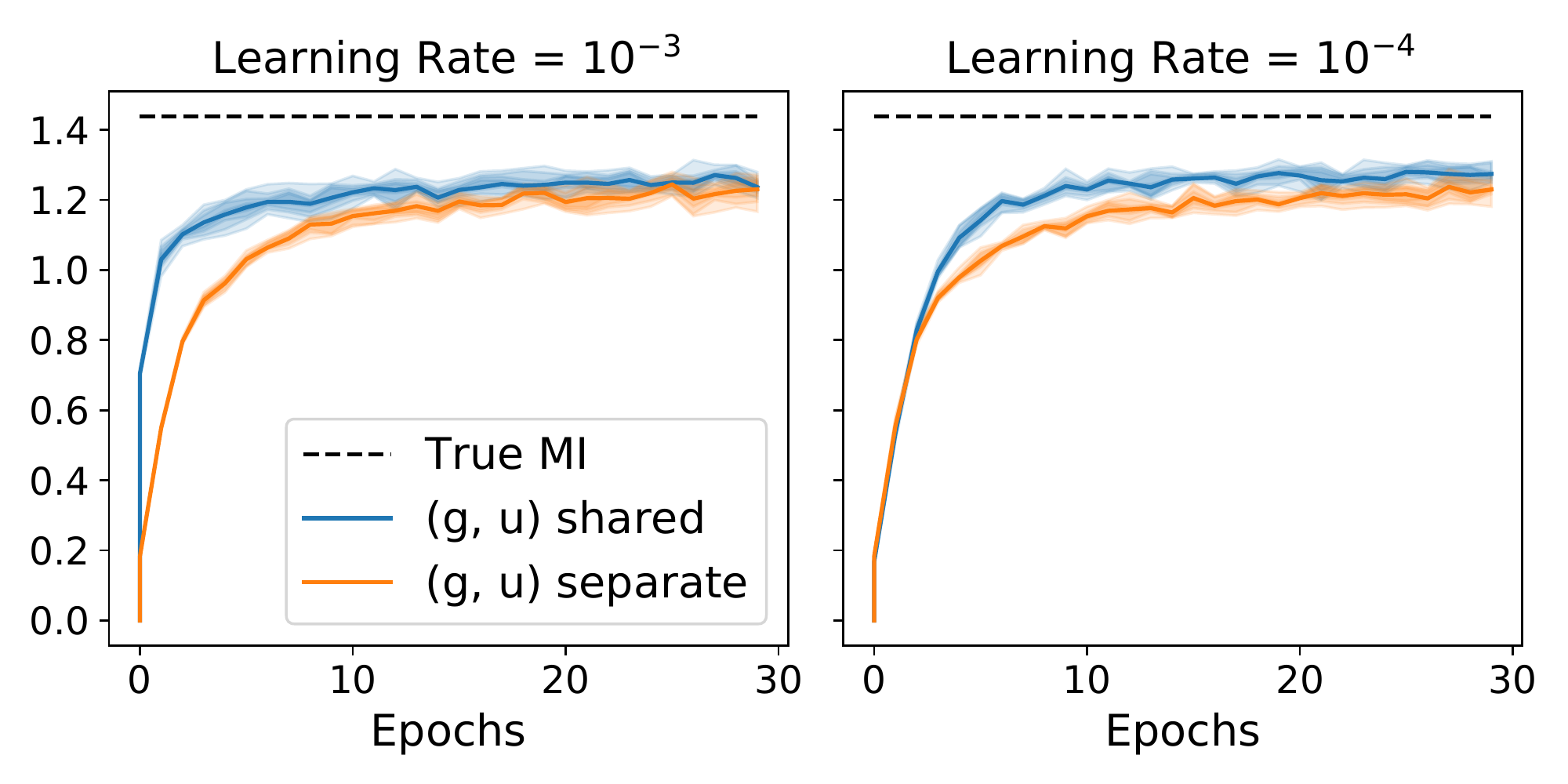}
			\vspace{-1.em}
			\caption{MI estimation with different critic parameter sharing strategies for $\FLO$: shared network and separate networks under learning rates $10^{-3}$ and $10^{-4}$ for 2-D Gaussian. Note shared parameterization not only reduced half the network size, it also learns faster.}
			\label{fig:onefunc}
		\end{figure}

		\begin{figure}[t!]
			\begin{center}
				\includegraphics[trim=0 0 5.in 0, clip, width=1.\textwidth]{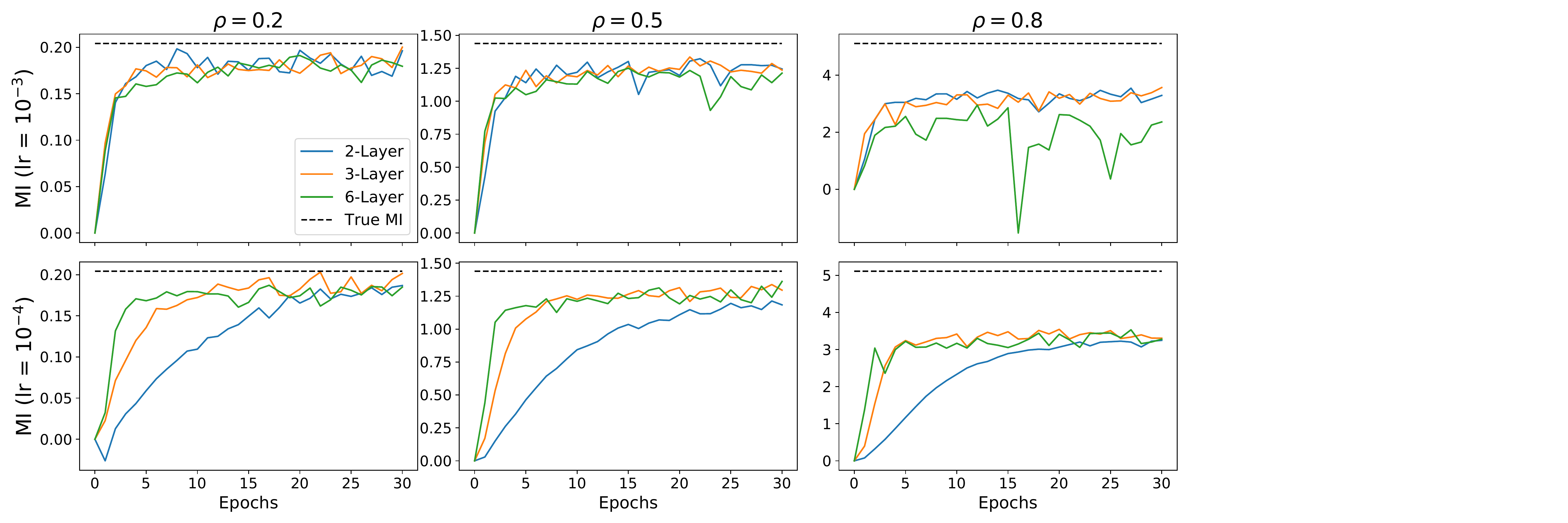}
			\end{center}
			\vspace{-1.5em}
			\caption{Abaltion study for network complexity with $\FLO$. More complex networks lead to faster convergence and better MI estimates. However, the stability is more sensitive to learning rate with a larger neural network. \label{fig:layer_cmp}}
			\vspace{-1.5em}
		\end{figure}

		\subsection{Ablation study: network capacity and MI estimation accuracy} 
		\label{sec:capacity}
		
		We further investigate how the neural network learning capacity affect MI estimation. In Figure \ref{fig:layer_cmp} we compare the training dynamics of the $\FLO$ estimator with $L$-layer neural networks, where $L \in \{2,3,6\}$ and each hidden-layer has $512$-units. A deeper network is generally considered to be more expressive. We see that using larger networks in general converge faster in terms of training iterations, and also obtain better MI estimates. However, more complex networks imply more computation per iteration, and it can be less stable when trained with larger learning rates. 
		
		\begin{figure*}[t!]
			\centering
			\begin{minipage}{.48\textwidth}
				\centering
				\includegraphics[width=1\textwidth]{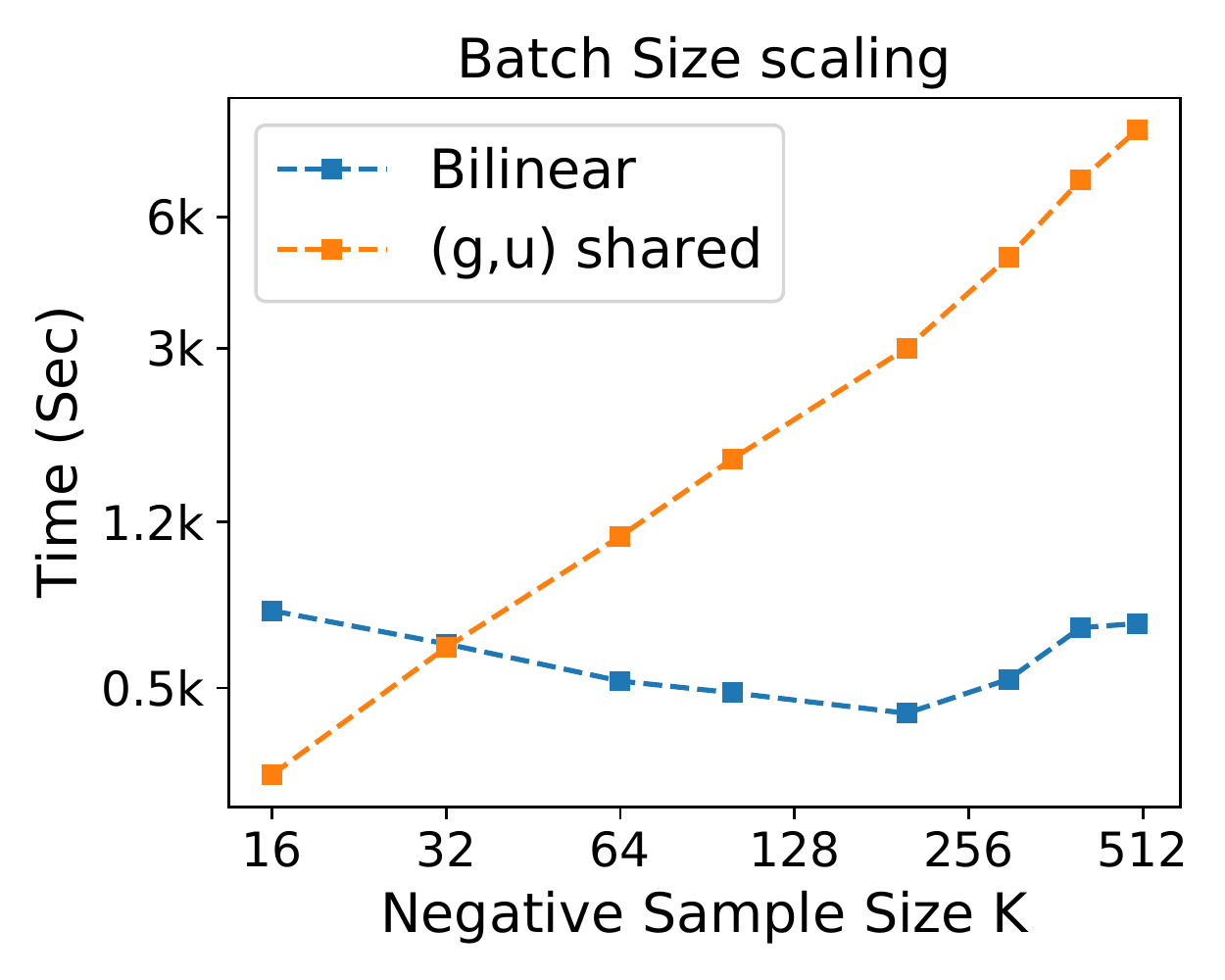}
				\vspace{-2.em}
				\captionof{figure}{Comparison of computation time of the shared MLP critic and the bi-linear critic. Overall the bilinear implementation is more efficient than the shared MLP. $\FLO$'s initial drop in computation time with growing negative sample size is due to better exploitation of parallel computation. \label{fig:bilinear}}
			\end{minipage}
			\hspace{2pt}
			\begin{minipage}{.48\textwidth}
				\vspace{.2em}
				\centering
				\includegraphics[width=1\textwidth]{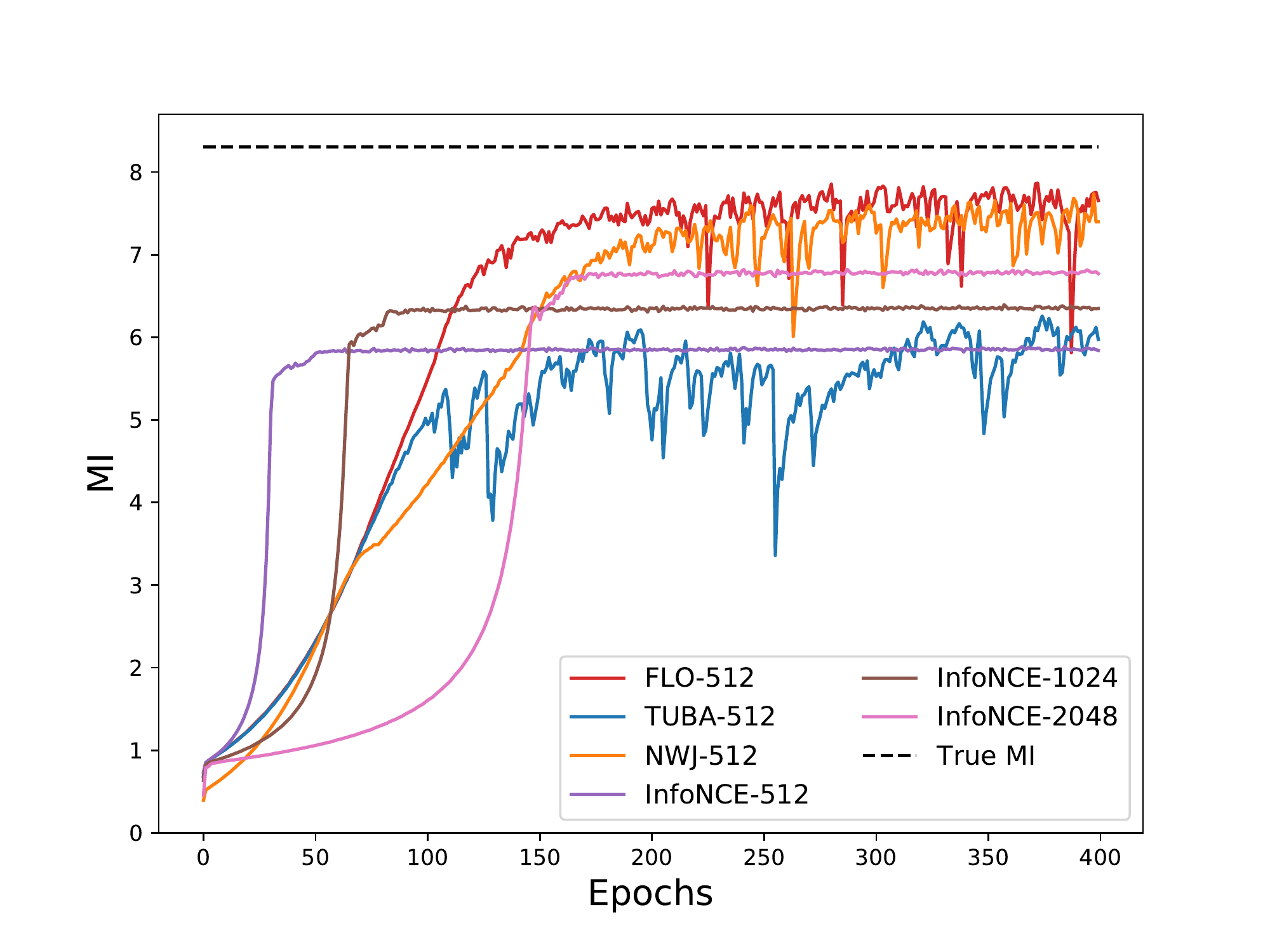}
				\vspace{-1.5em}
				\captionof{figure}{Comparison of learning dynamics with 20-D Gaussian at $\rho=0.9$. We used bi-linear critics for all bounds. Note $\infonce$ enjoys stable learning, and its convergence is fast in the small-sample regime but slow in the large-sample regime. In all cases $\infonce$ suffers form large biases. $\NWJ$ is more accurate but it learns slower. In contrast, our $\FLO$ learns fast and stably.  
					\label{fig:milearn}}
			\end{minipage}
			\vspace{-1.em}
		\end{figure*}
		
		\subsection{Ablation study: Bi-linear critics and scaling} 
		We setup the {\it bi-linear} critic experiment as follows. For the naive baseline $\FLO$, we use the shared-network architecture for $g(x,y)$ and $u(x,y)$, and use the in-batch shuffling to create the desired number of negative samples ($\FLO$-\texttt{shuff}). For $\FLO$-$\texttt{BiL}$, we adopt the following implementation: feature encoders $h(x), \tilde{h}(y)$ are respectively modeled with three layer MLP with $512$-unit hidden layers and \texttt{ReLU} activations, and we set the output dimension to $512$. Then we concatenate the feature representation to $z = [h(x), \tilde{h}(y)]$ and fed it to the $u(x,y)$ network, which is a two-layer $128$-unit MLP. Note that is merely a convenient modeling choice and can be further optimized for efficiency. Each epoch containing $10k$ samples, and $\FLO$-\texttt{shuff} is trained with fixed batch-size. For $\FLO$-\texttt{BiL}, it is trained with batch-size set to the negative sample-size desired, because all in-batch data are served as negatives. We use the same learning rate $10^{-4}$ for both cases, and this puts large-batch training at disadvantage, as fewer iterations are executed. To compensate for this, we use $T(K) = (\frac{K}{K_0})^{\frac{1}{2}} \cdot T_0$ to set the total number of iterations for $\FLO$-\texttt{BiL}, where $(T_0, K_0)$ are respectively the baseline training iteration and negative sample size used by $\FLO$-\texttt{shuff}, and the number of negative sample K are $\{10, 50, 100, 150, 200, 250, 300, 350, 400, 450, 500\}$. We are mostly interested in computation efficiency here so we do not compare the bound. In Figure \ref{fig:bilinear}, we see the cost for training $\FLO$-\texttt{shuff} grows linearly as expected. For $\FLO$-\texttt{BiL}, a U-shape cost curve is observed. This is because bilinear implementation has three networks total, while the shared MLP only have one network. This implies more computations when the batch size is small, however, as the batch size grows, the computation overhead is amortized by better parallelism employed with the bilinear strategy, thus increasing overall efficiency until the device capacity has been reached. This explains the initial drop in cost, followed by the anticipated square-root growth.

		\subsection{Comparison of learning dynamics for different variational MI bounds} 
		
		In Figure {\ref{fig:milearn}}, we show the learning dynamics of competing estimators for the 20-D Gaussian when $\rho=0.9$. We can find $\FLO$ achieves the best accuracy, it also learns fast and stably. $\infonce$ learns very stably, yet its learning efficiency varies significantly in small-batch and large-batch setups. 
		

		

		\begin{figure}[t!]
			\begin{center}
				\includegraphics[height=2in]{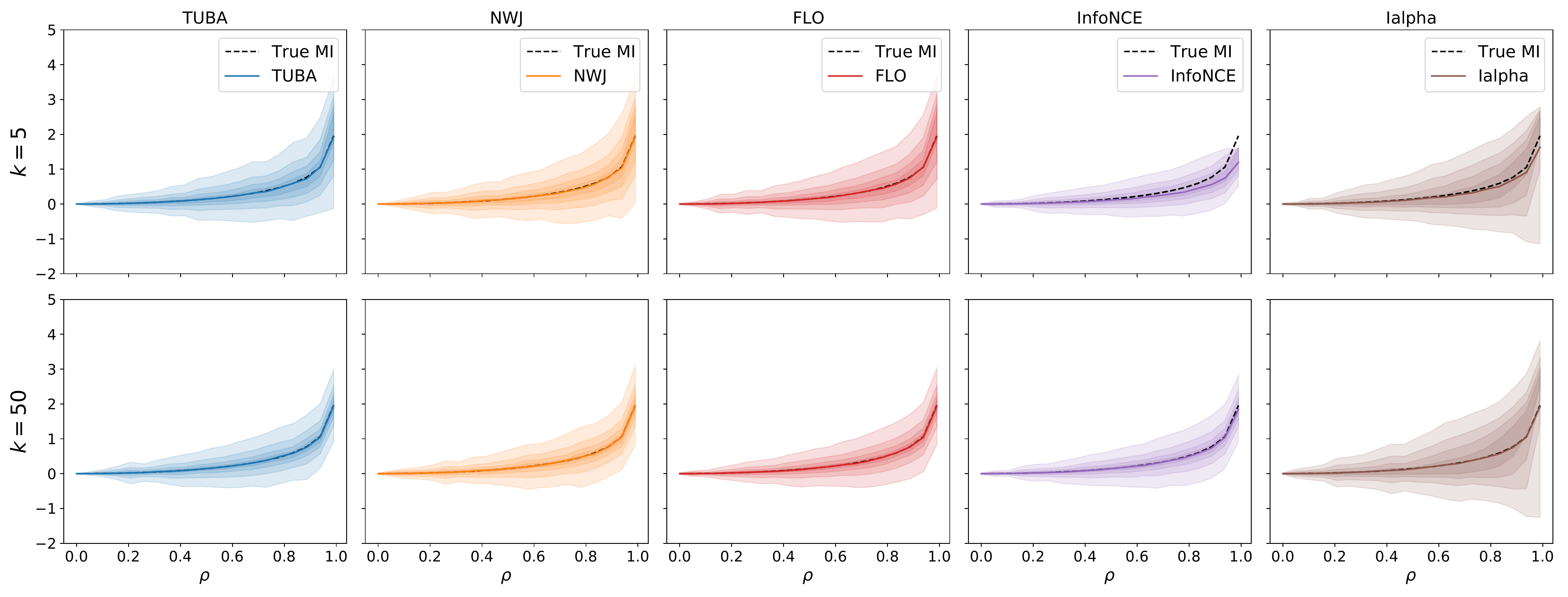}
			\end{center}
			\vspace{-1.5em}
			\caption{Bias variance plot for the popular MI bounds with the $2$-D Gaussians. In this simpler case, $\TUBA$, $\NWJ$ and $\FLO$ all give sharp estimate at $K=5$. $\alpha$-$\infonce$ gives worst variance profile. The reason is that because $\alpha$-$\infonce$ interpolates between the low-variance multi-sample $\infonce$ and high-variance single-sample $\NWJ$ (see Figure \ref{fig:cmp_var_single}), and in this case the variance from $\NWJ$ dominates.  \label{fig:cmp_var_1d_cubic}}
			\vspace{-1.em}
		\end{figure}

		\subsection{Comprehensive analyses of bias-variance trade-offs}
		
		To supplement our results in the main paper, here we provide additional bias-variance plots for different MI estimators under various settings. In Figure \ref{fig:cmp_var_1d_cubic} we show the bias-variance plot of MI estimates for $2$-D Gaussians. In this case, the network used are sufficiently comprehensive so sharp estimate is attainable. In all cases the estimation variance grows with MI value, which is consistent with the theoretical prediction that for tight estimators, the estimation variance grows exponential with MI \citep{mcallester2018formal}. In such cases, the argument for $\infonce$'s low-variance profile no longer holds: it is actually performing sub-optimally. For complex real applications, the negative sample size used might not provide an adequate estimate of ground-truth MI ({\it i.e.}, the $\log K$ cap), and that is when $\infonce$'s low-variance profile actually helps. We also notice that, when the MI estimate is not exactly tight, but very close to the true value, the variance dropped considerably. This might provide alternative explanation (and opportunity) for the development near-optimal MI estimation theories, which is not covered in existing literature.

		
		\begin{figure}[t!]
			\begin{center}
				\includegraphics[width=1.\textwidth]{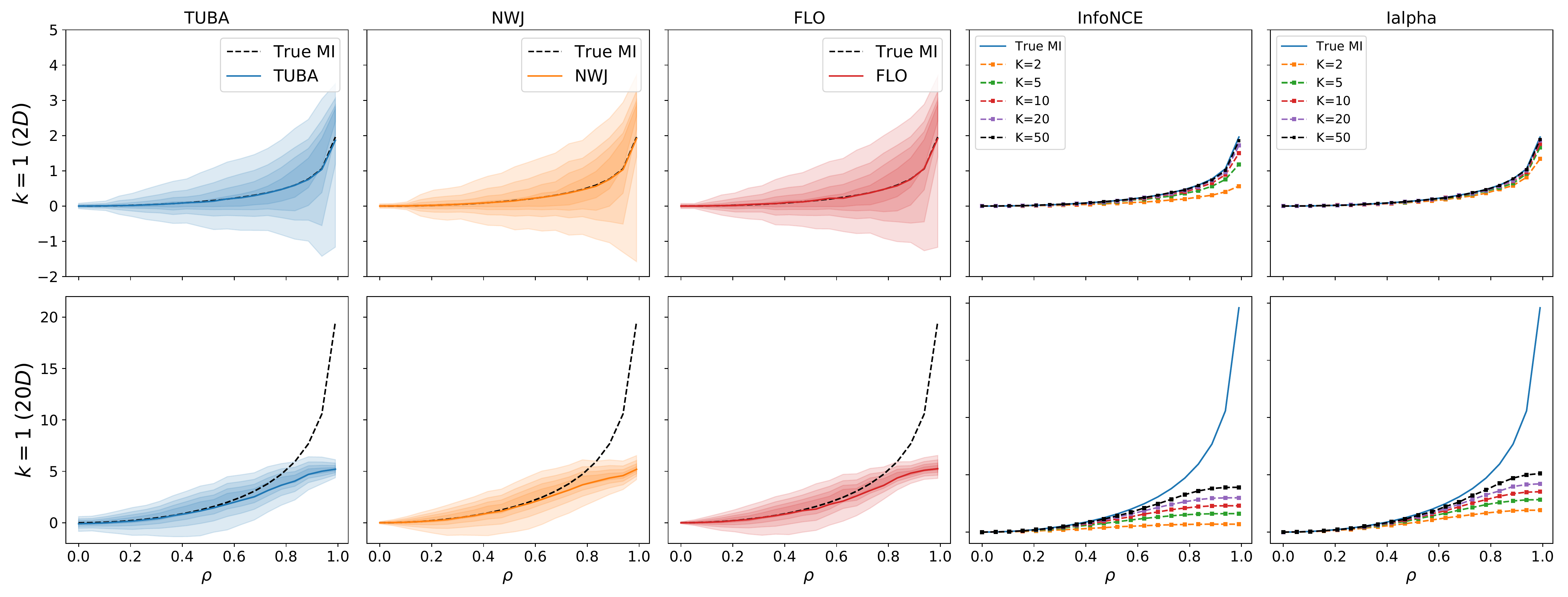}
			\end{center}
			\vspace{-1.5em}
			\caption{Bias variance plot for the popular MI bounds with the $2$-D (upper panel) and $20$-D (lower panel) Gaussians. Single-sample estimator of $\TUBA$, $\NWJ$ and $\FLO$ ({\it i.e.}, $K=1$) are compared to the multi-sample estimators of $\infonce$ and $\alpha$-$\infonce$. \label{fig:cmp_var_single}}
			\vspace{-1.em}
		\end{figure}
		

		We also tried the single-sample estimators for $\NWJ$, $\TUBA$ and $\FLO$ to their multi-sample $\infonce$-based counterparts (Figure \ref{fig:cmp_var_single}), which is the comparison made by some of the prior studies (Note we do not apply Bilinear tric here, thus FLO seems similar to other methods). In this setting, the variance single-sample estimators' variances are considerably larger, which explains their less favorable performance. Note that contradictory to theoretical predictions, a larger negative sample size does make $\NWJ$, $\TUBA$ and $\FLO$ tighter empirically, although the gains are much lesser compare to that of $\infonce$  (partly because these three estimators are already fairly tight relative to $\infonce$). This might be explained by a better optimization landscape due to reduced estimation variance. We conjecture that for multi-sample $\NWJ$, $\TUBA$ and $\FLO$, the performance in empirical applications such as self-supervised learning should be competitive to that of $\infonce$, which has never been reported in literature.


		\begin{figure}[t!]
			\centering
			\begin{minipage}{.35\textwidth}
				\centering
					\begin{center}{
							\includegraphics[width=1.\textwidth]{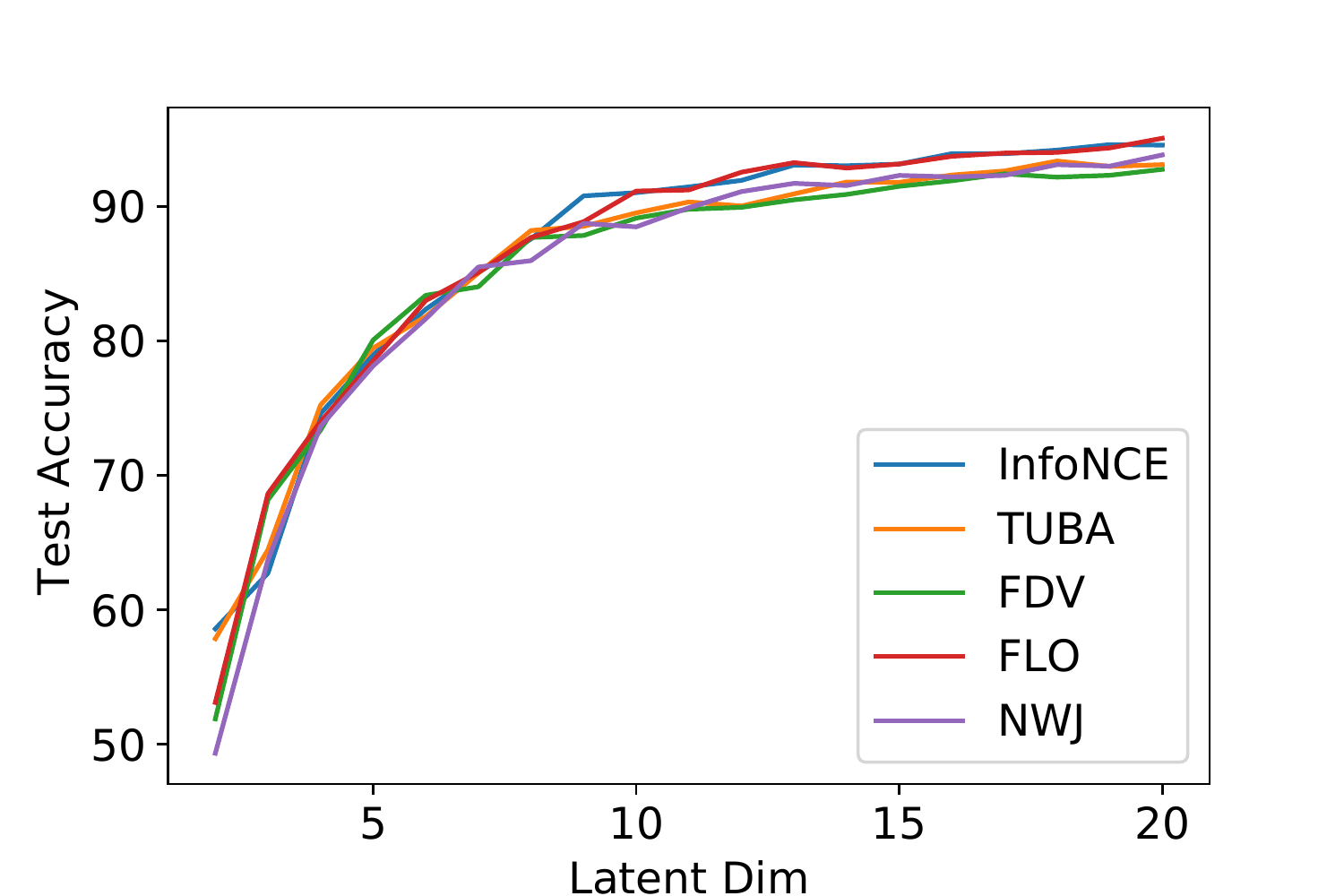}
						}
						\vspace{-1.5em}
						\captionof{figure}{\small Extended results for the cross-view representation learning. $\FDV$ works best for smaller dimensions ($\approx 5$), and for higher dimensions ($>10$) $\FLO$ and $\infonce$ give the best results. \label{fig:latentdim}}
					\end{center}
				\end{minipage}
				\scalebox{.95}{
					\begin{minipage}{.6\textwidth}
						\vspace{-10pt}
						\begin{center}{
								\includegraphics[width=1.\textwidth]{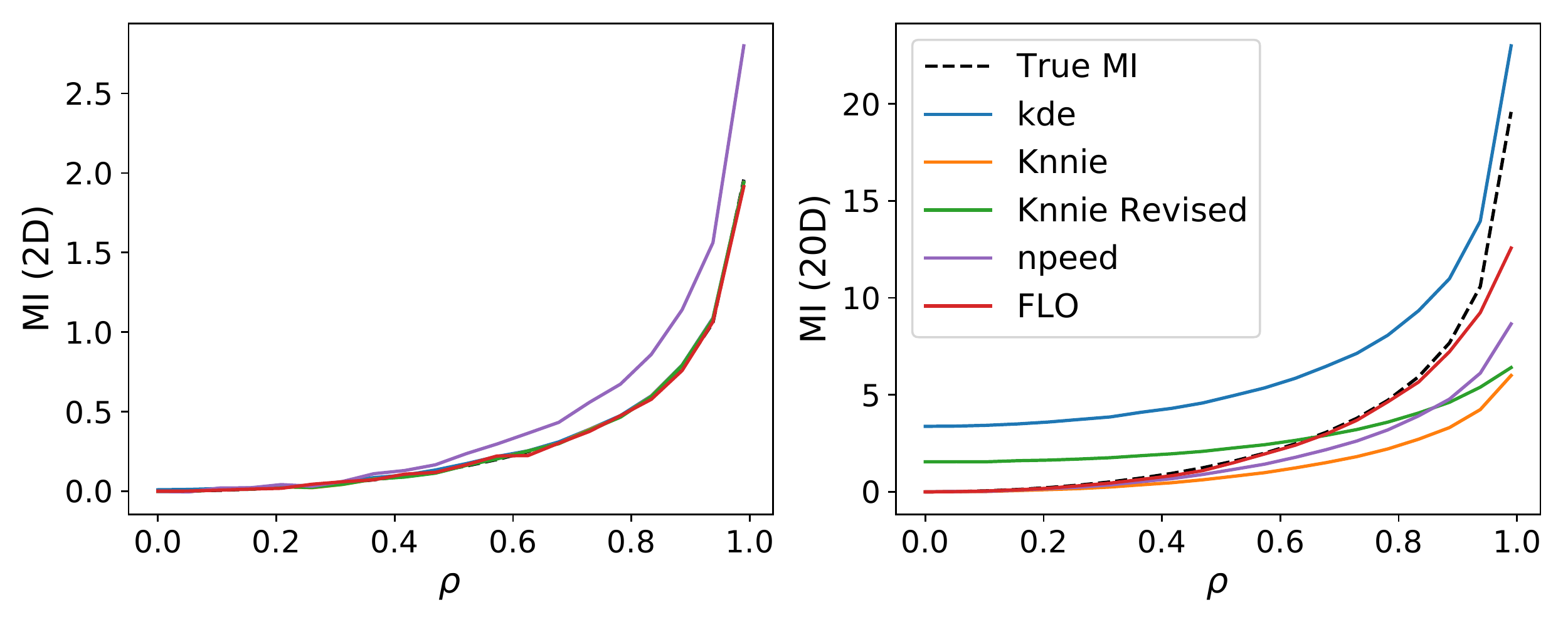}
							}
							\vspace{-1.5em}
							\captionof{figure}{\small Comparison to classical MI estimators. (left) Easy 2D Gaussian, all models perform similarly. (right) Challenging 20D Gaussian, where $\FLO$ shows better overall accuracy. Note that the kde accuracy in the high-dimensional setting is mis-judged, as it is well-known kernel-based density estimator scale poorly in high-dimensions. \label{fig:kde}}
						\end{center}
					\end{minipage}
				}
				\vspace{-1.em}
			\end{figure}

			

			\section{Cross-view Representation Learning (Extended Analyses)}
			
			In addition to the results reported in the paper, we investigate how different latent dimension affect the results of the cross-view representation learning. We vary the latent dimension number from $d=2$ to $d=20$, and plot label prediction accuracy for the corresponding latent representations in Figure \ref{fig:latentdim}. The same setup for the bi-linear experiment is used for the MI estimation (for all MI estimators), where the images are flattened to be fed to the MLPs. The representations are trained for $50$ epochs and the prediction model is trained for $50$ epochs. We also trained the model for another $50$ epochs and the conclusions are similar. We see that $\FDV$ works well for lower dimensions ({\it e.g.}, $d\approx 5$), and what works better for higher dimensions ($d>10$) are $\FLO$ and $\infonce$. 
			
			\section{Comparison with Classical MI Estimators}

			
			We also compare our $\FLO$ estimator to the classical MI estimators in Figure \ref{fig:kde}. The following implementations of baseline estimators for multi-dimensional data are considered: ($i$) {\it KDE}: we use kernel density estimators to approximate the joint and marginal likelihoods, then compute MI by definition; ($ii$) {\it NPEET} \footnote{\url{https://github.com/gregversteeg/NPEET}}, a variant of Kraskov's $K$-nearest neighbour (KNN) estimator \citep{kraskov2004estimating, ver2013information}; ($iii$) {\it KNNIE} \footnote{\url{https://github.com/wgao9/knnie}}, the original KNN-estimator and its revised variant \cite{gao2018demystifying}. These models are tested on $2$-D and $20$-D Gaussians with varying strength of correlation, with their hyper-parameters tuned for best performance. Note that the notation of ``best fit'' is a little bit subjective, as we will fix the hyper-parameter for all dependency strength, and what works better for weak dependency might necessarily not work well for strong dependency. We choose the parameter whose result is visually most compelling. In addition to the above, we have also considered other estimators such as maximal-likelihood density ratio \footnote{\url{https://github.com/leomuckley/maximum-likelihood-mutual-information}} \citep{suzuki2008approximating} and KNN with local non-uniformity correction \footnote{\url{https://github.com/BiuBiuBiLL/NPEET_LNC}}. However, these models either do not have a publicly available multi-dimensional implementation, or their codes do not produce reasonable results \footnote{These are third-party python implementations, so BUGs are highly likely.}.

			\section{Comparison to Parametric Variational Estimators and Bounds Targeting Alternative Information Metrics}
			\label{sec:alt-appendix}

			Parametric variational estimators are typically associated with upper bound of MI \citep{cheng2020club,poole2019variational}. Inspired by multi-sample variational bounds for likelihood estimation, \citep{brekelmans2021improving} derived a generic family of importance-weighted MI bounds that are provably tighter. These bounds usually require the additional knowledge of likelihood, and consequently they can not be directly used for data-driven MI estimations. On the other hand, these models do not suffer from the exponential scaling of variance suffered by non-parametric MI estimators. Note that MI is not the only measure to assess association between two random variables, some alternatives can potentially do better for specific applications. Examples include $\CV$ information \citep{xu2020theory}, R\'enyi information \citep{lee2022r}, and the spectral information \citep{haochen2021provable}. 
			
			\section{Regression with Sensitive Attributes (Fair Learning) Experiments}
			
			\subsection{Introduction to fair machine learning}
			Nowadays consequential decisions impacting people's lives have been increasingly made by machine learning models. Such examples include loan approval, school admission, and advertising campaign, amongst others. While automated decision making has greatly simplified our lives, concerns have been raised on (inadvertently) echoing, even amplifying societal biases. Specially, algorithms are vulnerable in inheriting discrimination from the training data and passed on such prejudices in their predictions. 
			
			To address the growing need for mitigating algorithmic biases, research has been devoted in this direction under the name fair machine learning. While discrimination can take many definitions that are not necessarily compatible, in this study we focus on the most widely recognized  criteria {\it Demographic Parity} (DP), as defined below
			
			\begin{defn}[Demographic Parity, \citep{dwork2012fairness}]
				The absolute difference between the selection rates of a decision rule $\hat{y}$ of two demographic groups defined by sensitive attribute $s$, {\it i.e.}, 
				\beq
				\DP(\hat{Y}, S) = \left| \PP(\hat{Y}=1|S=1) - \PP(\hat{Y}=1|S=0) \right|.
				\eeq
				With multiple demographic groups, it is the maximal disparities between any two groups:
				\beq
				\DP(\hat{Y}, S) = \max_{s\neq s'}\left| \PP(\hat{Y}=1|S=s) - \PP(\hat{Y}|S=s') \right|.
				\eeq
			\end{defn}
			
			\subsection{Experiment details and analyses}
			
			To scrub the sensitive information from data, we consider the {\it in-processing} setup 
			\beq
			\CL = \texttt{Loss}(\underbrace{\text{Predictor}(\text{Encoder}(x_i)), y_i}_{\text{Primary loss}}) + \lambda \underbrace{I(s_i, \text{Encoder}(x_i))}_{\text{Debiasing}}.
			\eeq
			By regularizing model training with the violation of specified fairness metric $\Delta(\hat{y}, s)$, fairness is enforced during model training. In practice, people recognize that appealing to fairness sometimes cost the utility of an algorithm ({\it e.g.}, prediction accuracy) \citep{hardt2016equality}. So most applications seek to find their own sweet points on the fairness-utility curve. In our example, it is the {\it $\DP$-error} curve. A fair-learning algorithm is consider good if it has lower error at the same level of DP control. 
			
			
			
			In this experiment, we compare our MI-based fair learning solutions to the state-of-the-art methods. {\it Adversarial debiasing} tries to maximize the prediction accuracy for while minimize the prediction accuracy for sensitivity group ID \citep{zhang2018mitigating}. We use the implementation from \texttt{AIF360}\footnote{\url{https://github.com/Trusted-AI/AIF360}} package \citep{aif360-oct-2018}.  FERMI is a density-based estimator for the {\it exponential R\'enyi mutual information} $\texttt{ERMI}\triangleq \EE_{p(x,y)}[\frac{p(x,y)}{p(x)p(y)}]$, and we use the official codebase. For evaluation, we consider the {\it adult} data set from UCI data repository \citep{asuncion2007uci}, which is the 1994 census data with $30$k samples in the train set and $15$k samples in the test set. The target task is to predict whether the income exceeds \$50k, where gender is used as protected attribute. Note that we use this binary sensitive attribute data just to demonstrate our solution is competitive to existing solutions, where mostly developed for binary sensitive groups. Our solution can extend to more general settings where the sensitive attribute is continuous and high-dimensional. 
			
			We implement our fair regression model as follows. To embrace data uncertainty, we consider latent variable model $p_\theta(y,x,z) = p_\theta(y|z)p_\theta(x|z)p(z)$, where $v=\{x,y\}$ are the observed predictor and labels.
			Under the variational inference framework \citep{kingma2014auto}, we write the $\ELBO(v;p_\theta(v,z),q_\phi(z|v))$ as
			\begin{align}
					\EE_{Z\sim q_\phi(z|v)}[\log p_\theta(y|Z)] + \text{\st{$\EE_{Z\sim q_\phi(z|v)}[\log p_\theta(x|Z)]$}} - \beta \text{KL}(q_\phi(z|v)\parallel p(z)) \label{eq:elbonew}
				\end{align}
				$p(z)$ is modeled with standard Gaussian, and the approximate posterior $q_\phi(z|v)$ is modeled by a neural network parameterizing the mean and variance of the latents (we use the standard mean-field approximation so cross-covariance is set to zero), and $\beta$ is a hyperparameter controlling the relative contribution of the KL term to the objective.
				Note that unlike in the standard ELBO we have dropped the term $\EE_{Z\sim q_\phi(z|v)}[\log p_\theta(x|Z)]$ because we are not interested in modeling the covariates. Note this coincides with the {\it variational information bottleneck} (VIB) formulation \citep{alemi2016deep}. Additionally, the posterior $q_\phi(z|v)$ will not be conditioned on $y$, but only on $x$, because in practice, the labels $y$ are not available at inference time. All networks  used here are standard three-layer MLP with $512$ hidden-units.
				
				
				For Figure \ref{fig:adult}, we note that the adversarial de-biasing actually crashed in the DP range $[0.1, 0.18]$, so the results have to be removed. Since interpolation is used to connect different data points, it makes the adversarial scheme look good in this DP range, which is not the case. FERMI also gave unstable estimation in the DP range $[0.1, 0.18]$. Among the MI-based solutions, $\NWJ$ was most unstable. Performance-wise, $\infonce$, $\TUBA$ and $\FDV$ are mostly tied, with the latter two slightly better in the ``more fair'' solutions ({\it i.e.}, at the low DP end).
				
				
				\section{Self-supervised Learning}
				
				
				Our codebase is modified from a public \texttt{PyTorch} implementation\footnote{\url{https://github.com/sthalles/SimCLR}}. Specifically, we train $256$-dimensional feature representations by maximizing the self-MI between two random views of data, and report the test set classification accuracy using a linear classifier trained to convergence. We report performance based on \texttt{ResNet-50}, and some of the learning dynamics analyses are based on \texttt{ResNet-18} for reasons of memory constraints. Hyper-parameters are adapted from the original $\SimCLR$ paper. For the large-batch scaling experiment, we first grid-search the best learning rate for the base batch-size, then grow the learning rate linearly with batch-size. 
				
					
				
				\begin{figure}[t!]
					\centering
					\begin{minipage}{.35\textwidth}
						\centering
							\begin{center}
								\includegraphics[width=1\textwidth]{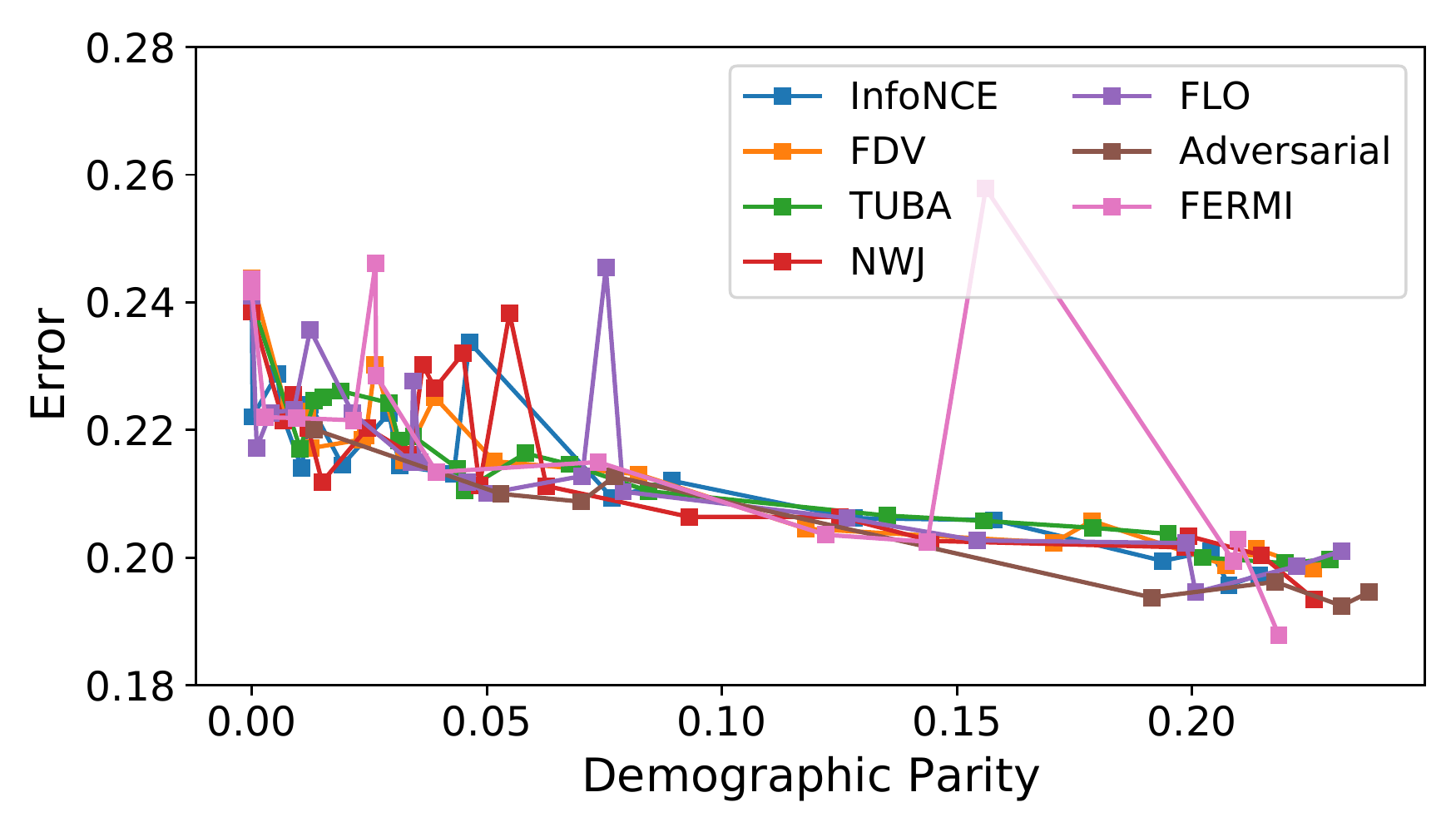}
							\end{center}
							\vspace{-1.5em}
							\captionof{figure}{Fair Learning Result.  \label{fig:adult}}
					\end{minipage}
					\scalebox{.95}{
						\begin{minipage}{.6\textwidth}
							\vspace{-1.5em}
							\centering
							\setlength{\tabcolsep}{2pt}
							\captionof{table}{\texttt{MNIST} cross-view results.
								\label{tab:ssl}}
							\vspace{.5em}
							\begin{tabular}{ccccccc}
								\toprule
								Model   & CCA       & NWJ       & TUBA  & InfoNCE  & FLO & FDV   \\
								\midrule
								Accuracy    & 67.78  &    76.71 & 79.49 & 79.27 & 79.47 &{\bf 80.14}      \\
								[3pt]
								$\hat{I}(x_l, x_r)$     & NA          & 5.73 & 4.78 & 4.65 & 4.84 & 4.67         \\
								\bottomrule
							\end{tabular}
						\end{minipage}
					}
					\vspace{-1.em}
				\end{figure}

				\section{Bayesian Experimental Design}
				
				\subsection{Noisy Linear Model}
				Our setup is the same as the Noisy Linear Model in \citep{kleinegesse2020bayesian}. We use 10  individual experimental designs. For encoder $\theta$ and encoder $y$, we use MLP with 2-layer, 128-dim hidden layer, and set the feature dim as 512. We train models in 5000 epochs, the batch size is 64, and the learning rate is $2*10^{-5}$. Four MI estimators (NWJ, TUBA, InfoNCE, and FLO) has been compared in this experiment and we got four optimized designs. Then, we use MCMC to estimate the posterior of the parameters.
				
				\subsection{Pharmacokinetic Model}
				The settings of this experiment refer to the Pharmacokinetic Model of \citep{kleinegesse2020bayesian}. We use 10  individual experimental designs. The MLP is with 2-layer, 128-dim hidden layer, and set the output feature dim as 512. We train 10000 epochs with learning rate is $10^{-5}$ via four methods (NWJ, TUBA, InfoNCE, FLO).
				
				\subsection{SIR Model}
				We here consider the spread of a disease within a population of N individuals, mod- elled by stochastic versions of the well-known SIR \citep{allen2008mathematical}. a susceptible state $S(t)$ and can then move to an infectious state $I(t)$ with an infection rate of $\beta$. These infectious individuals then move to a recovered state $R(t)$ with a recovery rate of $\gamma$, after which they can no longer be infected. The SIR model, governed by the state changes $S(t) \rightarrow I(t) \rightarrow R(t)$, thus has two model parameters $\thetav_1 = (\beta, \gamma)$. 
				
				The stochastic versions of these epidemiological processes are usually defined by a continuous-time Markov chain (CTMC), from which we can sample via the Gillespie algorithm \citep{allen2017primer}. However, this generally yields discrete population states that have undefined gradients. In order to test our gradient-based algorithm, we thus resort to an alternative simulation algorithm that uses stochastic differential equations (SDEs), where gradients can be approximated. 
				
				We first define population vectors $X_1(t) = (S(t), I(t))$ for the SIR model and $X_2(t) = (S(t), E(t), I(t))$ for the SEIR model. We can effectively ignore the population of recovered because the total population is fixed. The system of Itô SDEs for the above epidemiological processes is
				\beq
				\ud \Xv(t) = \fv(\Xv(t))\ud t + \Gv(\Xv(t)) \ud \Wv(t),
				\eeq
				where $\fv$ is the drift term, $\Gv$ is the diffusion term and $\Wv$ is the Wiener process. Euler-Maruyama algorithm is used to simulate the sample paths of the above SDEs. 
				
				\beq
				\fv_{\SIR} = \left( \begin{array}{c}
					- \beta\frac{S(t)I(t)}{N}\\
					\beta\frac{S(t)I(t)}{N} - \gamma I(t)
				\end{array}
				\right), \Gv_{\SIR} = \left( 
				\begin{array}{cc}
					-\sqrt{\beta\frac{S(t)I(t)}{N}} & 0\\
					\sqrt{\beta\frac{S(t)I(t)}{N}} & -\sqrt{\gamma I(t)}
				\end{array}
				\right)
				\eeq
				
				
				We use the infection rate ($I$) as 0.1 and the recovery ($R$) rate as 0.01. The independent priors are N(0.1,0.02) and N(0.01, 0.002). The initial infection number is 10. We update MI one time after updating sampler three steps.We use RNN network with 2 layer 64 dim hidden layer construction to decoder the sequential design.


				\section{Meta Learning}
				
				{\bf Intuitions.} Now let us describe the new $\metaflo$ model for meta-learning. Given a model space $\CM$ and a loss function $\ell: \CM \times \CZ \rightarrow \BR$, the true risk and the empirical risk of $f\in \CM$ are respectively defined as $R_{t}(f) \triangleq \EE_{Z\sim\mu_t}[\ell(f, Z)]$ and $\hat{R}_t(f;\BS_t) \triangleq \frac{1}{m} \sum_{i=1}^m \ell(f, Z_i)$.Let us denote $R_{\tau}$ is the generalization error for the task distribution $\tau$ where all tasks originate, and $\hat{R}_{\tau}$ is the empirical estimate.  Our heuristic is simple, that is to optimize a tractable upper bound of the generalization risk given by
				\beq
				R_{\tau} \leq \underbrace{\hat{R}_{\tau}}_{\text{Utility}} + \underbrace{| R_{\tau}  - \hat{R}_{\tau}  |}_{\text{Generalization}} \triangleq \CL_{\upper}. 
				\label{eq:upper}
				\eeq
				
				For meta-learning, we sample $n$-tasks for training and $n'$-tasks for testing, respectively denoted as $\BS_{1:n}$ and $\BS\te_{1:n'}$. We further decouple the learning algorithm into two parts: the {\it meta-learner} $\CA_{\meta}(\BS_{1:n})$ that consumes all train data to get the {\it meta-model} $f_{\meta}$, and then {\it task-adaptation learner} $\CA_{\adapt}(f_{\meta}, \BS_t)$ which adapts the meta-model to the individual task data $\BS_t$ to get task model $f_t$. For parameterized models such as deep nets, we denote $\Theta$ as our {\it meta parameters} and $E_t$ as {\it task-parameters}, that is to say $\Theta \triangleq \CA_{\meta}(\BS_{1:n})$, $E_t \triangleq \CA_{\adapt}(\Theta, \BS_t)$, where $\Theta, E_t$ can be understood as weights of deep nets. In subsequent discussions, we will also call $E_t$ the {\it task-embedding}. We can define the population {\it meta-risk} as $R_{\tau}(\Theta) \triangleq \EE_{t, \Theta=\CA_{\meta}(\BS_{1:n})}[\EE_{E_t=\CA_{\adapt}(\Theta,\BS_t)}[R_{t}(f_{E_t})]]$, and similarly for the empirical risk $\hat{R}_{\tau}$ evaluated on the query set $\BQ_t$. Our model is based on the following inequality \citep{anonymous2022metaflo}:
				
				\beq
				\lim_{n\rightarrow\infty}| \EE[R -\hat{R}] | \leq  \sqrt{\frac{2\sigma^2}{m} I(E_{t};\BS_{t}|\Theta)}
				\label{eq:meta_bound}
				\eeq

				which gives the main objective $\CL_\texttt{Meta-FLO}(f) = \hat{R}(f) + \lambda \sqrt{I_{\FLO}(\hat{\CD}_t;\hat{E}_t)}$. We summarize our model architecture in Figure \ref{fig:arch}. 
				
				
				\begin{figure}[t!]
					\begin{center}
						{
							\includegraphics[width=.7\textwidth]{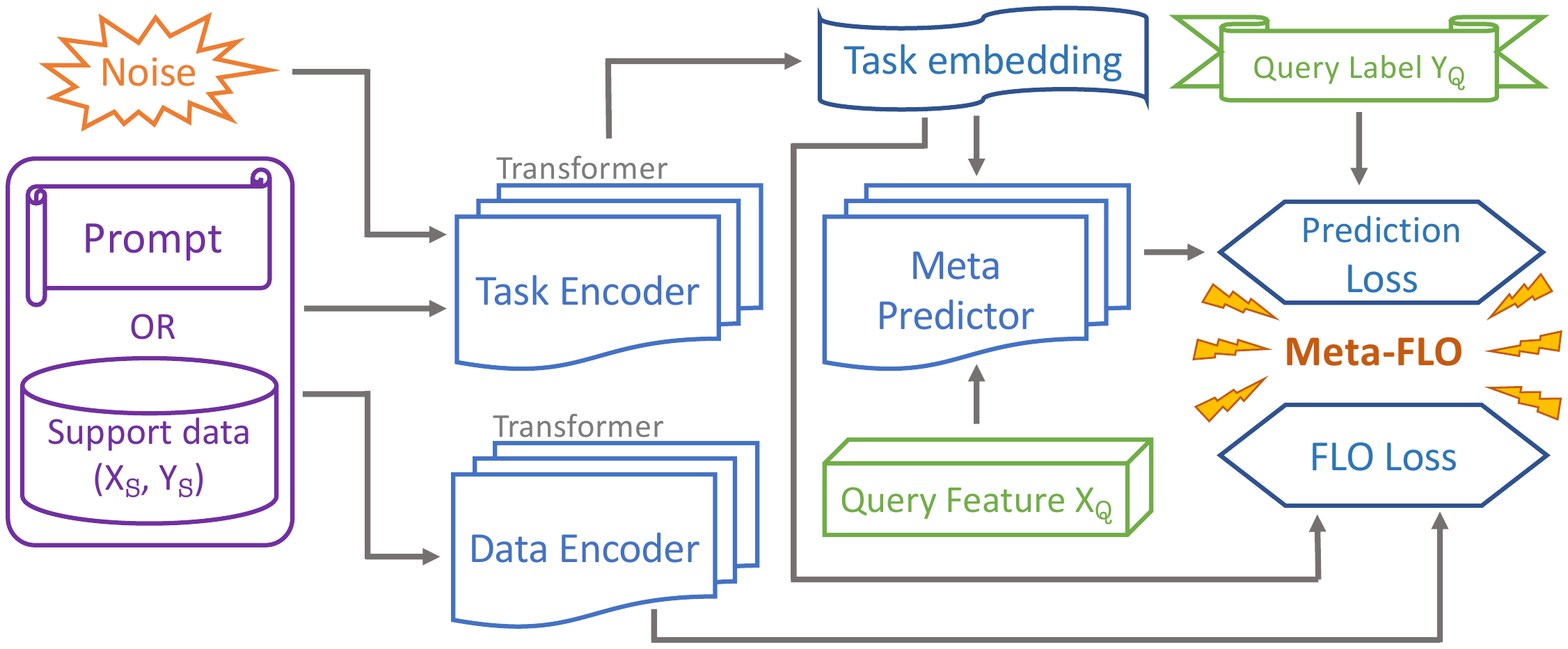}
						}
					\end{center}
					\vspace{-1.em}
					\caption{Model architecture of $\metaflo$. \label{fig:arch}}
					\vspace{-1.em}
				\end{figure}
				
				The sin-wave adaptation experiment involves regressing from the input ($x\sim \text{Uniform}([-5,5])$) to the output of a sine wave $\kappa \sin(x-\gamma)$, where amplitude $\kappa \sim \text{Uniform}([0.1,5])$ and phase ($\gamma \sim \text{Uniform}([0,\pi]$) of the sinusoid vary for each task. We use mean-squared error (MSE) as our loss and set the support-size = $3$ and query-size = $2$. We use simple three-layer MLPs for all the models: regressor, prompt encoder, and $\FLO$ critics, with hidden units all set to $[512, 512]$. During training, we use an episode-size of $64$. For MAML, we use the first-order implementation (FOMAML), and set inner learning rate to $\alpha=10^{-4}$. For $\metaflo$, we set regularization strength to $\lambda=10^{-2}$.

			\end{document}